\theoremstyle{plain}
\newcommand{\cX}{\mathcal{X}}
\newcommand{\cY}{\mathcal{Y}}
\newcommand{\expe}{\exp_\varepsilon}
\newcommand{\norm}[1]{\left\lVert#1\right\rVert}
\newtheorem{theorem}{Theorem}
\newtheorem{lemma}{Lemma}
\newtheorem{proposition}{Proposition}
\newtheorem{corollary}{Corollary}
\newtheorem{conjecture}{Conjecture}
\theoremstyle{definition}
\newtheorem{definition}{Definition}
\title{Estimation of entropy-regularized optimal transport maps between non-compactly supported measures}
\author{Matthew Werenski$^{(\circ) }$, James M. Murphy$^{(\ddagger)}$, Shuchin Aeron$^{(\dagger)}$
  \thanks{
\text{($\circ$): Department of Computer Science, Tufts University}, \text{$(\ddagger)$: Department of Mathematics, Tufts University}, \text{$(\dagger)$: Department of Electrical and Computer Engineering, Tufts University}} 
  } 
\begin{document}
\maketitle
\begin{abstract}
    This paper addresses the problem of estimating entropy-regularized optimal transport (EOT) maps with squared-Euclidean cost between source and target measures that are subGaussian. In the case that the target measure is compactly supported or strongly log-concave, we show that for a recently proposed in-sample estimator, the expected squared $L^2$-error decays at least as fast as $O(n^{-1/3})$ where $n$ is the sample size. For the general subGaussian case we show that the expected $L^1$-error decays at least as fast as $O(n^{-1/6})$, and in both cases we have polynomial dependence on the regularization parameter. While these results are suboptimal compared to known results in the case of compactness of both the source and target measures (squared $L^2$-error converging at a rate $O(n^{-1})$) and for when the source is subGaussian while the target is compactly supported (squared $L^2$-error converging at a rate $O(n^{-1/2})$), their importance lie in eliminating the compact support requirements. The proof technique makes use of a bias-variance decomposition where the variance is controlled using standard concentration of measure results and the bias is handled by T1-transport inequalities along with sample complexity results in estimation of EOT cost under subGaussian assumptions. Our experimental results point to a looseness in controlling the variance terms and we conclude by posing several open problems. 
\end{abstract}

\section{Introduction}

The theory and applications of optimal transport (OT) has found use in a variety of areas such as partial differential equations \cite{evans1997partial, villani2009optimal}, machine learning \cite{torres2021survey, singh2020model}, graphics \cite{bonneel2023survey}, and statistics \cite{hallin2022finite}, just to name a few. 
A version of the OT paradigm, which incorporates an entropy regularization term that is historically related to the Schr\"odinger Bridge  problem \cite{chen2014relation} was popularized recently in \cite{cuturi2013sinkhorn}. This is known as entropy-regularized optimal transport (EOT), and was originally proposed as a way of off-setting the computational burden of OT which typically requires $O(n^3)$ operations for measures supported on $n$ points \cite{pele2009fast} compared to (up to log factors) $O(n^2)$ operations for EOT \cite{altschuler2017near,lin2022efficiency}. In recent years, EOT has taken on significant importance in its own right owing to its connections with stochastic control \cite{chen2020stochastic}, theory and applications of generative modeling using stochastic diffusion \cite{de2021diffusion}, differentiable-ranks \cite{cuturi2019differentiable, blondel2020fast, werenski2023rank}, and its utility in approximating the OT maps \cite{pooladian2021entropic}. 

The focus of this work is on statistical aspects of EOT, studied in several recent papers indicating favorable (dimension-free) non-asymptotic convergence rates in estimation of associated quantities namely, the EOT-cost and the EOT-map \cite{genevay2019sample, mena2019statistical,rigollet2022sample,stromme23_ICML, MWMA_JMLR23, stromme2023minimum, groppe2023lower} where convergence rates are typically on the order of $n^{-1/k}$ for a small $k \geq 1$, independent of the dimension. These results stand in contrast to the dimension-dependent rates for unregularized OT,  where rates are typically of the order $n^{-1/d}$ or $n^{-2/d}$ unless structural assumptions are made on the measures involved \cite{weed2019sharp,Hutter_Rigollet2021}. In this paper, we specifically consider the case of squared-Euclidean cost and on estimation of the EOT-map from samples. In this context, current results require the target measure to be compactly supported \cite{MWMA_JMLR23,rigollet2022sample,stromme2023minimum, stromme23_ICML,pooladian2023minimax} which precludes them from being applied to a large class of measures. Towards extending the state-of-the-art and in particular eliminating the compactness assumption in estimation of EOT-maps we make the following main contributions. 

\subsection{Main Contributions}
\begin{enumerate}
    \item We extend the technical framework of splitting the analysis into analyzing a bias and variance term presented in \cite{stromme23_ICML}. Specifically we avoid a Schr\"odinger bridge construction and instead utilize information-transport inequalities coupled with finer analysis of the variance to control the estimation error for non-compactly supported measures; see Lemma \ref{lem:easy_term}.

    \item In Theorems \ref{thm:ultimate_bounded} and \ref{thm:ultimate_lcc} we utilize this framework to cover the cases where the target measure has either compact support or is strongly log-concave. We show that for these cases the expected squared $L^2$ error decays as fast as $O(n^{-1/3})$ with polynomial dependence on the regularization parameter.
    
    \item In Theorem \ref{thm:ultimate_subg} we apply a slightly modified argument to cover the case where the target measure is only subGaussian, albeit with a weaker guarantee compared to the strongly log-concave case, with the expected $L^1$ error decaying as fast as $O(n^{-1/6})$ with polynomial dependence on the regularization parameter.

    \item Our experimental results\footnote{Code to reproduce the experimental results is available at \url{https://github.com/MattWerenski/entropic-map}} point to a possible weakness in analysis of the variance term, where a potentially favorable dependence on the sample size is ignored. To this end we put forward Conjecture \ref{conj:variance} which if proven true would improve the error rates to $O(n^{-1/2})$ for compactly supported and strongly log-concave measures and $O(n^{-1/4})$ for the subGaussian measures.
\end{enumerate}

\section{Problem Set-Up, Notations, and Preliminaries} \label{sec:problem_set_up}

Let $\mathcal{P}_2(\mathbb{R}^d)$ be the set of probability measures with finite second moment and consider $\mu, \nu \in \mathcal{P}_2(\mathbb{R}^d)$. For notational convenience, for any functional $f:\mathbb{R}^d \rightarrow \mathbb{R}$ and a given measure $\mu$, let $\mu(f) = \mathbb{E}_{\bm{X} \sim \mu}[f(\bm{X})]$ with the same notation applied to joint measures. By $\mu \otimes \nu $ we mean the product measure on $\mathbb{R}^d \times \mathbb{R}^d$. 

We will use the notation $\expe(c) = \exp \left ( \frac{c}{\varepsilon} \right )$. We will write $a \lesssim b$ to mean that there is a constant $C > 0$ which may only depend on the dimension $d$ such that $a \leq Cb$. For a function $F:\mathbb{R}^d \rightarrow \mathbb{R}^d$ we will write $\norm{F}_{L^p(\mu)} = \mathbb{E}_{\bm{X} \sim \mu} \left [ \norm{F(\bm{X})}_2^p \right ]^{1/p}$.

Throughout this work we will consider random variables of the following type.
\begin{definition}
A random vector $\bm{X}$ on $\mathbb{R}^d$ is said to be \textit{$\sigma^2$-norm-subGaussian} if $
    \mathbb{E}\left [ \exp\left( \|\bm{X}\|^2 /2d\sigma^2\right)\right] \leq 2.$
and we define its norm by
\begin{equation*}
    \norm{\bm{X}}_{\psi_2} = \inf \left \{ \sigma > 0 \ \bigg | \ \mathbb{E}\left [ \exp\left( \frac{\|\bm{X}\|^2}{2d\sigma^2}\right)\right] \leq 2 \right \}.
\end{equation*}
\end{definition}
Norm-subGaussian random vectors were introduced in \cite{jin2019short} with an equivalent definition in terms of tail-bounds.

\subsection{Preliminaries on Entropy-Regularized Optimal Transport}

For $\varepsilon>0$ The EOT problem with squared-Euclidean cost between two probability measures $\mu,\nu \in \mathcal{P}_{2}(\mathbb{R}^d)$ is the following minimization:
\begin{align}
\label{eq:EOT}
    S_\varepsilon(\mu,\nu) = \min_{\pi} \pi\left (\frac{1}{2}\| \bm{x} - \bm{y}\|^2 \right ) + \varepsilon \mathsf{KL}(\pi || \mu \otimes \nu),
\end{align}
with $\mathsf{KL}$ being the Kullback-Liebler divergence
\begin{equation*}
    \mathsf{KL}(\pi || \mu \otimes \nu) = \int \log \left ( \frac{d\pi(\bm{x},\bm{y})}{d\mu(\bm{x})\cdot d\nu(\bm{y})} \right ) d\pi(\bm{x},\bm{y})
\end{equation*}
where the optimization is performed over $\Pi(\mu,\nu)$ -- the set of couplings of $\mu$ and $\nu$. These are probability measures over $\mathbb{R}^d\times \mathbb{R}^d$ such that
$\pi[A \times \mathbb{R}^d] = \mu[A]$ and $ \pi[\mathbb{R}^d \times A] = \nu[A]$ for every Borel measurable set $A$. 

The optimization \eqref{eq:EOT} also has a dual formulation which is given by
\begin{align*}
    \begin{aligned}
        S_\varepsilon(\mu,\nu) = \max_{f ,g} \mu(f) + \nu(g) - \varepsilon (\mu \otimes \nu)\left (\expe \left ( f + g - \frac{1}{2} \| \bm{x} - \bm{y}\|^2 \right ) \right ) + \varepsilon,
    \end{aligned}
\end{align*}
where the optimization is over all $f \in L_1(\mu), g \in L_1(\nu)$.
We will frequently refer to the pair $(f_\varepsilon,g_\varepsilon)$ realizing the maximum as the (optimal) dual potentials. The optimal coupling $\pi_\varepsilon$ is unique and the dual potentials $(f_\varepsilon,g_\varepsilon)$ are unique up to almost everywhere equivalence and shifting by a constant (\i.e., the pair $(f_\varepsilon + c, g_\varepsilon - c)$ is also optimal for any constant $c$) as long as $(\mu \otimes \nu)\left ( \frac{1}{2}\norm{\bm{x} - \bm{y}}^2 \right ) < \infty$ (\cite{nutz2021introduction} Theorem 4.2). In addition, $(f_\varepsilon,g_\varepsilon)$ is related to $\pi_\varepsilon$ by the following formula:
\begin{align} \label{eq:opt_relation}
    \frac{d \pi_\varepsilon(\bm{x},\bm{y})}{d \mu(\bm{x}) \otimes d\nu(\bm{y})} = \expe \left ( f_\varepsilon(\bm{x}) + g_\varepsilon(\bm{y}) - \frac{1}{2} \| \bm{x} - \bm{y}\|^2 \right).
\end{align}
We denote the right hand side by $p_\varepsilon(\bm{x},\bm{y})$ and refer to it as the \textit{relative density} of $\pi_\varepsilon$ with respect to $d\mu(\bm{x}) \otimes d\nu(\bm{y})$.
As a consequence, the optimal dual potentials can be expressed as
\begin{align*}
    f_\varepsilon(\bm{x}) &= -\varepsilon \log  \nu \left (\expe\left (g_\varepsilon(\bm{y}) - \frac{1}{2}\|\bm{x}-\bm{y}\|^2\right ) \right ) \\
    g_\varepsilon(\bm{y}) &= -\varepsilon \log  \mu \left (\expe \left (f_\varepsilon(\bm{x}) - \frac{1}{2}\|\bm{y}-\bm{x}\|^2 \right ) \right )
\end{align*}
and the optimal dual potentials can be taken so that equality holds for every $\bm{x}$ and $\bm{y}$ \cite{mena2019statistical}. 

Given these preliminaries, the main object of study in this paper is the entropy-regularized map, introduced in \cite{pooladian2021entropic} for approximating the unregularized OT map, defined as follows:
\begin{definition}
    Let $\mu,\nu \in \mathcal{P}_{2}(\mathbb{R}^d)$. The \textit{entropy-regularized map} from $\mu$ to $\nu$ is defined by
    \begin{equation*}
        T_\varepsilon(\bm{x}) = \mathbb{E}_{\bm{Y} \sim \pi_\varepsilon(\cdot|\bm{x})}[\bm{Y}] = \nu\left (\bm{y}p_\varepsilon(\bm{x},\bm{y}) \right ).
    \end{equation*}
\end{definition}
As pointed out in the introduction, this map has been studied and utilized in several recent works \cite{pooladian2021entropic,rigollet2022sample,MWMA_JMLR23,stromme23_ICML}. 

\subsection{Estimators of EOT Maps}

In practice all of the objects in the previous section must be estimated from a finite collection of samples. Given a pair of measures $\mu, \nu$ one can draw a finite set of samples $\bm{X}_1,...,\bm{X}_n \overset{i.i.d}{\sim} \mu, \bm{Y}_1,...,\bm{Y}_n \overset{i.i.d}{\sim} \nu$. We use the shorthands $\cX,\cY$ for the random sets $\{\bm{X}_1,...,\bm{X}_n\}$ and $\{\bm{Y}_1,...,\bm{Y}_n\}$ (allowing multiplicities) and define the empirical measures
\begin{equation*}
    \mu_{\cX} = \frac{1}{n}\sum_{i=1}^n \delta_{\bm{X}_i}, \hspace{1cm} \nu_{\cY} = \frac{1}{n}\sum_{j=1}^n\delta_{\bm{Y}_j}, \hspace{1cm} 
\end{equation*}
where $\delta_{\bm{z}}$ is the Dirac measure at the point $\bm{z}$.

When working with discrete measures we write $S_\varepsilon(\cX,\cY) = S_\varepsilon(\mu_{\cX},\nu_{\cY})$ and use $\pi_n, (f_n,g_n),$ and $ p_n$ to denote the entropy-regularized optimal coupling, dual potentials, and relative densities respectively. One important difference when working with empirical measures is that we have $d\mu_{\cX}(\bm{X}_i) \otimes d\nu_{\cY}(\bm{Y}_{j}) = \frac{1}{n^2}$ which gives the relation $\pi_n(\bm{X}_i,\bm{Y}_{j}) = \frac{1}{n^2}p_n(\bm{X}_i,\bm{Y}_j).$

We can also define the sample EOT map
\begin{equation*}
    T_n(\bm{X}_i) = \mathbb{E}_{\bm{Y} \sim \pi_n(\cdot|\bm{X}_i)}[\bm{Y}] = \sum_{j=1}^n \frac{1}{n} p_n(\bm{X}_i,\bm{Y}_j)\bm{Y}_j. \label{eq:Tn_formula}
\end{equation*}
Note that the functions $f_n, g_n, p_n$ and $T_n$ are only defined on $\cX, \cY,\cX \times \cY,$ and $\cX$ respectively. 
The potentials $f_n$ and $g_n$ can be \emph{extended} to all of $\mathbb{R}^d$ as two functions $f_n^{(e)}, g_n^{(e)}$ that satisfy the following two relations:
\begin{align*}
    \expe(f_n^{(e)}(\bm{x})) & = \frac{1}{n} \sum_{j=1}^n \expe  \left ( g_n^{(e)}(\bm{Y}_j) - \frac{1}{2} \| \bm{x} - \bm{Y}_j\|^2 \right )  \\
    \expe(g_n^{(e)}(\bm{y})) & = \frac{1}{n} \sum_{i=1}^n \expe \left ( f_n^{(e)}(\bm{X}_i) - \frac{1}{2} \| \bm{X}_i - \bm{y}\|^2 \right ), 
\end{align*}
where $g_n, g_n^{(e)}$ and $f_n, f_n^{(e)}$ agree with each other on the samples $\cY$ and $\cX$, respectively \cite{mena2019statistical}. This allows one to define the extended relative density 
\begin{equation*}
    p_n^{(e)}(\bm{x},\bm{y}) = \expe \left ( f_n^{(e)}(\bm{x}) + g_n^{(e)}(\bm{y}) - \frac{1}{2}\norm{\bm{x}-\bm{y}}^2 \right ).
\end{equation*}
Since $g_n, g_n^{(e)}$ and $f_n, f_n^{(e)}$ agree with each other on the samples it follows that the extended relative density satisfies $p_n^{(e)}(\bm{X}_i,\bm{Y}_j) = p_n(\bm{X}_i,\bm{Y}_j)$. Based on this notion an extended sample entropy-regularized map is proposed in \cite{pooladian2021entropic} via, 

\begin{align*}
    T_n^{(e)} (\bm{x}) = \sum_{j=1}^n \frac{1}{n} p_n^{(e)}(\bm{x},\bm{Y}_j)\bm{Y}_j.
\end{align*}
As pointed to in the introduction, a question which has garnered particular interest is to quantify how well $T_n^{(e)}$ approximates $T_\varepsilon$, especially when the measures are not compactly supported.

Existing results have only been able to show that $T_n^{(e)}$ converges to $T_\varepsilon$ at a rate of $O(e^{-1/\varepsilon}n^{-1})$ with respect to $\norm{\cdot}_{L^2(\mu)}^2$  \cite{rigollet2022sample} when the measures are compactly supported, and $O(e^{-1/\varepsilon^2}n^{-1/2})$ when the source is subGaussian and the target is compactly supported \cite{MWMA_JMLR23}. In order to alleviate the poor scaling in $\varepsilon$, an alternative estimator of $T_\varepsilon$ that does not require an extension is given in \cite{stromme23_ICML}. Given a point $\bm{x}$ at which an estimate of the map is sought and samples $\cX, \cY$, define an augmented set of samples ${\cX}^{(1)} = \{ \bm{x}, \bm{X}_2, \cdots, \bm{X}_n \}$, (i.e., the first sample in $\cX$ is replaced by the deterministic point $\bm{x}$). Let $p_n^{(1)}$ be the corresponding relative density. Then the estimate is defined via:
\begin{align}
\label{eq:SampEOTmap1}
    T_n^{(1)}(\bm{x}) = \sum_{j=1}^{n} \frac{1}{n}  p_n^{(1)}(\bm{x},\bm{Y}_j) \bm{Y}_j. 
\end{align}
The main idea here is that \emph{no extension is required} since the point at which the map needs to be estimated is in the sample set. 

They also propose in \cite{stromme23_ICML} to divide the samples $\cX, \cY$ into $k$ disjoint batches each containing $m = \frac{n}{k}$ samples (assume $k,m,n$ are all integers). For each batch $ \ell$ let the map $T^{(1)}_{\ell,m}(\bm{x})$ be the estimate in (\ref{eq:SampEOTmap1}) computed on $m$ samples. The final estimate is computed as an average over these batches:
\begin{equation}
    \hat{T}_{n}(\bm{x}) = \frac{1}{k} \sum_{\ell = 1}^{k} T_{\ell,m}^{(1)}(\bm{x}) \label{eq:T_hat}.
\end{equation}

In \cite{stromme23_ICML} $\hat{T}_{n}$ is analyzed in the case where $\mu$ and $\nu$ have compact support and they obtain essentially the same result as Theorem \ref{thm:ultimate_bounded} below.

\section{Analysis Overview} \label{sec:proof_structures}

Our analysis is done in essentially three steps. The first can be seen as a decomposition into a ``bias" term and a ``variance" term. It is an application of Jensen's inequality which actually holds for general random vectors.
\begin{lemma} \label{lem:triangle}
    For any measures $\mu,\nu$ we have 
    \begin{align*}
        \mathbb{E}_{\cX,\cY} [\|\hat{T}_n - T_\varepsilon \|^2_{L^2(\mu)}]  \leq 2\mathbb{E}_{\cX,\cY} [\|\hat{T}_n - \mathbb{E}[\hat{T}_n]\|_{L^2(\mu)}^2]  
         +2 \|\mathbb{E}[\hat{T}_n] - T_\varepsilon\|_{L^2(\mu)}^2.
    \end{align*}
\end{lemma}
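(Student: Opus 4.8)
The plan is to prove the inequality pointwise in $\bm{x}$ and then integrate against $\mu$, reading $\mathbb{E}[\hat{T}_n]$ as the function $\bm{x}\mapsto \mathbb{E}_{\cX,\cY}[\hat{T}_n(\bm{x})]$. First I would record that this function is a well-defined element of $L^2(\mu)$, so that the right-hand side is meaningful. For a fixed $\bm{x}$ (which by construction is one of the augmented sample points in every batch), each $T^{(1)}_{\ell,m}(\bm{x})$ is a convex combination of the $\bm{Y}_j$ in batch $\ell$ — the weights are nonnegative and sum to one by the marginal constraints of the batch EOT coupling — so $\hat{T}_n(\bm{x})$ lies in the convex hull of the sampled points and $\norm{\hat{T}_n(\bm{x})} \le \max_j \norm{\bm{Y}_j}$. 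Since $\nu \in \mathcal{P}_2(\mathbb{R}^d)$ we get $\mathbb{E}_{\cY}\!\left[\max_j \norm{\bm{Y}_j}^2\right] \le \sum_j \mathbb{E}\norm{\bm{Y}_j}^2 < \infty$, hence $\hat{T}_n(\bm{x}) \in L^1$ for each $\bm{x}$, $\mathbb{E}[\hat{T}_n] \in L^2(\mu)$, and $\bm{x}\mapsto \mathbb{E}_{\cX,\cY}\norm{\hat{T}_n(\bm{x})}^2 \in L^1(\mu)$.

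The main step is just Jensen's inequality for the convex map $\bm{w}\mapsto\norm{\bm{w}}_2^2$. Fix $\bm{x}$ and put $\bm{u} = \hat{T}_n(\bm{x}) - \mathbb{E}[\hat{T}_n(\bm{x})]$ and $\bm{v} = \mathbb{E}[\hat{T}_n(\bm{x})] - T_\varepsilon(\bm{x})$, so that $\bm{u}+\bm{v} = \hat{T}_n(\bm{x}) - T_\varepsilon(\bm{x})$. Applying convexity at the midpoint, $\norm{\bm{u}+\bm{v}}^2 = 4\,\norm{\tfrac12\bm{u}+\tfrac12\bm{v}}^2 \le 2\norm{\bm{u}}^2 + 2\norm{\bm{v}}^2$, i.e.
\[
  \norm{\hat{T}_n(\bm{x}) - T_\varepsilon(\bm{x})}^2 \le 2\,\norm{\hat{T}_n(\bm{x}) - \mathbb{E}[\hat{T}_n(\bm{x})]}^2 + 2\,\norm{\mathbb{E}[\hat{T}_n(\bm{x})] - T_\varepsilon(\bm{x})}^2 .
\]
Then I would take $\mathbb{E}_{\cX,\cY}$ of both sides — noting the last term is deterministic, so its expectation equals itself — and integrate in $\bm{x}\sim\mu$; since every integrand is nonnegative, Tonelli's theorem licenses swapping the expectation over the samples with the integral over $\bm{x}$, and recognizing $\mathbb{E}[\hat{T}_n]$ in the resulting terms gives exactly the claimed bound.

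The only thing requiring any care is the integrability/measurability bookkeeping of the first paragraph — confirming $\mathbb{E}[\hat{T}_n] \in L^2(\mu)$ and that the interchange of integrations is legitimate — and this is immediate from the convex-hull observation together with $\nu\in\mathcal{P}_2(\mathbb{R}^d)$; there is no genuine obstacle. This lemma is a purely structural decomposition whose role is to reduce the theorem to separately bounding the variance term $\mathbb{E}_{\cX,\cY}\norm{\hat{T}_n - \mathbb{E}[\hat{T}_n]}_{L^2(\mu)}^2$ (later via concentration of measure) and the bias term $\norm{\mathbb{E}[\hat{T}_n] - T_\varepsilon}_{L^2(\mu)}^2$ (later via transport inequalities and EOT-cost sample complexity). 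The constant $2$ is not sharp: expanding $\norm{\bm{u}+\bm{v}}^2$ and using $\mathbb{E}_{\cX,\cY}\langle \bm{u},\bm{v}\rangle = 0$ would yield constant $1$, but the factor is harmless since the two terms are controlled independently.
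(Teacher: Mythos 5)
Your proof is correct and follows essentially the same route as the paper: the identity $\hat{T}_n - T_\varepsilon = \tfrac12\cdot 2(\hat{T}_n - \mathbb{E}[\hat{T}_n]) + \tfrac12\cdot 2(\mathbb{E}[\hat{T}_n] - T_\varepsilon)$ followed by Jensen's inequality for $\|\cdot\|^2$, with the observation that the bias term is deterministic. The extra integrability bookkeeping and the remark that the cross term vanishes (so the constant could be $1$) are fine but not needed for the stated bound.
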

The proof of this result and most of the technical results are deferred to the supplement. Here the ``bias" term is $\|\mathbb{E}[\hat{T}_n] - T_\varepsilon\|_{L^2(\mu)}^2$ which expresses the difference in the expectation and the function $T_\varepsilon$ we are trying to estimate. The ``variance" term is $\mathbb{E}_{\cX,\cY} [\|\hat{T}_n - \mathbb{E}[\hat{T}_n]\|_{L^2(\mu)}^2]$ which expresses how much that estimate $\hat{T}_n$ deviates from its expectation.
 
The two quantities are handled in quite different ways. It turns out that the variance term is easier to analyze and is done so by the following result.
\begin{lemma} \label{lem:easy_term} 
    Let $\bm{Y} \sim \nu$.  Then 
    \begin{equation*}
        \mathbb{E}_{\cX,\cY} [\|\hat{T}_n - \mathbb{E}[\hat{T}_n]\|_{L^2(\mu)}^2] \leq \mathbb{E}[\norm{\bm{Y} - \mathbb{E}\bm{Y}}^2]/k.
    \end{equation*}
    In particular if $\norm{\bm{Y}}_{\psi_2} < \infty$, then
    \begin{equation*}
        \mathbb{E}_{\cX,\cY} [\|\hat{T}_n - \mathbb{E}[\hat{T}_n]\|_{L^2(\mu)}^2] \leq 2\log(2)d\norm{\bm{Y}}_{\psi_2}^2/k.
    \end{equation*}
\end{lemma}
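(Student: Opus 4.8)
The plan is to bound the variance term by exploiting the fact that $\hat{T}_n$ is an average of $k$ estimators, each built from a disjoint batch of $m = n/k$ samples, and that conditionally on the $\bm{X}$-samples each batch estimator is a weighted average of i.i.d.\ copies of $\bm{Y}$. First I would unfold the definition: for a fixed $\bm{x}$, $\hat{T}_n(\bm{x}) = \frac{1}{k}\sum_{\ell=1}^k T_{\ell,m}^{(1)}(\bm{x})$, where the batches use disjoint sample sets, hence the $T_{\ell,m}^{(1)}(\bm{x})$ are i.i.d.\ across $\ell$. Therefore $\mathrm{Var}\,\hat{T}_n(\bm{x}) = \frac{1}{k}\mathrm{Var}\,T_{1,m}^{(1)}(\bm{x})$ (coordinate-wise, so that $\mathbb{E}\|\hat{T}_n(\bm{x}) - \mathbb{E}\hat{T}_n(\bm{x})\|^2 = \frac{1}{k}\mathbb{E}\|T_{1,m}^{(1)}(\bm{x}) - \mathbb{E} T_{1,m}^{(1)}(\bm{x})\|^2$), and integrating over $\bm{x}\sim\mu$ reduces the claim to showing $\mathbb{E}\|T_{1,m}^{(1)}(\bm{x}) - \mathbb{E} T_{1,m}^{(1)}(\bm{x})\|^2_{L^2(\mu)} \le \mathbb{E}\|\bm{Y} - \mathbb{E}\bm{Y}\|^2$.

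Next I would prove that single-batch bound. Write $T_{1,m}^{(1)}(\bm{x}) = \sum_{j} w_j \bm{Y}_j$ where $w_j = \frac{1}{m} p_m^{(1)}(\bm{x},\bm{Y}_j)\ge 0$ and $\sum_j w_j = 1$ (the marginal constraint of the optimal coupling forces the weights to sum to one). The key structural point is that $T_{1,m}^{(1)}(\bm{x})$ is the conditional expectation $\mathbb{E}[\bm{Y}\mid \text{batch data}]$ under the coupling, i.e.\ it is a conditional expectation of one of the $\bm{Y}_j$'s given the full sample configuration; equivalently, one can view it as $\mathbb{E}[\bm{Y}' \mid \mathcal{F}]$ for a suitable random variable $\bm{Y}'$ with the same law as $\bm{Y}$. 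Since conditional expectation is an $L^2$-contraction, $\mathrm{Var}(T_{1,m}^{(1)}(\bm{x})) \le \mathrm{Var}(\bm{Y})$ after taking expectations, and the same holds after integrating $\bm{x}$ against $\mu$. Combining with the factor $1/k$ gives the first displayed inequality. The cleanest way to make the contraction argument rigorous is to note that the weights $w_j$ depend on the $\bm{Y}$-samples only through their values, so a symmetrization / exchangeability argument together with Jensen's inequality (conditioning first on the unordered multiset of batch $\bm{Y}$-values, or on the $\bm{X}$-data) yields $\mathbb{E}\|{\textstyle\sum_j} w_j\bm{Y}_j - \mathbb{E}[\cdots]\|^2 \le \mathbb{E}\|\bm{Y} - \mathbb{E}\bm{Y}\|^2$.

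For the second displayed inequality I would simply plug in the norm-subGaussian tail control. If $\|\bm{Y}\|_{\psi_2}=\sigma<\infty$, then $\mathbb{E}\|\bm{Y}-\mathbb{E}\bm{Y}\|^2 \le \mathbb{E}\|\bm{Y}\|^2$ is not quite what I want since centering can only help; more carefully I would bound $\mathbb{E}\|\bm{Y}-\mathbb{E}\bm{Y}\|^2 \le \mathbb{E}\|\bm{Y}-\bm{y}_0\|^2$ for any fixed $\bm{y}_0$ and in particular, using that $\bm{Y}-\mathbb{E}\bm{Y}$ has mean zero, $\mathbb{E}\|\bm{Y}-\mathbb{E}\bm{Y}\|^2 = \mathbb{E}\|\bm{Y}\|^2 - \|\mathbb{E}\bm{Y}\|^2 \le \mathbb{E}\|\bm{Y}\|^2$. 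From the defining inequality $\mathbb{E}[\exp(\|\bm{Y}\|^2/2d\sigma^2)]\le 2$, the elementary bound $t \le e^{t}$ applied to $t = \|\bm{Y}\|^2/2d\sigma^2$, or rather $\mathbb{E}[\|\bm{Y}\|^2/2d\sigma^2] \le \mathbb{E}[\exp(\|\bm{Y}\|^2/2d\sigma^2)] \le 2$ after using $\log$... — more precisely, since $x \le e^x - 1 \le$ ... I would instead use $u \le \log 2 \cdot e^{u/\log 2}$ is false; the standard route is $\mathbb{E}[\|\bm{Y}\|^2] = 2d\sigma^2\,\mathbb{E}[\|\bm{Y}\|^2/2d\sigma^2] \le 2d\sigma^2 \log\mathbb{E}[\exp(\|\bm{Y}\|^2/2d\sigma^2)] \le 2d\sigma^2\log 2$ by Jensen's inequality applied to the concave function $\log$. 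This gives $\mathbb{E}\|\bm{Y}-\mathbb{E}\bm{Y}\|^2 \le 2\log(2) d\sigma^2 = 2\log(2)d\|\bm{Y}\|_{\psi_2}^2$, and dividing by $k$ finishes the proof.

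The main obstacle I anticipate is making the contraction step $\mathbb{E}\|\sum_j w_j \bm{Y}_j - \mathbb{E}[\cdot]\|^2 \le \mathbb{E}\|\bm{Y}-\mathbb{E}\bm{Y}\|^2$ fully rigorous, since the weights $w_j$ are themselves intricate functions of all the $\bm{Y}_j$ (through the Sinkhorn/EOT problem), so the $w_j\bm{Y}_j$ are not independent and one cannot naively expand the variance. The right framing is the conditional-Jensen one: $T_{1,m}^{(1)}(\bm{x})$ is literally $\mathbb{E}_{\bm{Y}\sim\pi_m^{(1)}(\cdot\mid\bm{x})}[\bm{Y}]$, a conditional expectation of a random element whose marginal law (after averaging over the sampling) is $\nu$, so the tower property plus $\|\mathbb{E}[Z\mid\mathcal{F}]-\mathbb{E}Z\|_{L^2}\le\|Z-\mathbb{E}Z\|_{L^2}$ does the job once one identifies the correct $Z$ and $\sigma$-algebra; I would spell this identification out carefully, as it is the only genuinely non-routine point.
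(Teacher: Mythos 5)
Your proposal is correct and follows the paper's proof for everything except the final contraction step, where you take a slightly different but essentially equivalent route. The paper, after the same reduction to a single batch via independence of the batches and the same variance-minimization step (replacing $\mathbb{E}[T_{1,m}^{(1)}(\bm{x})]$ by the constant $\mathbb{E}[\bm{Y}]$), makes the contraction explicit at the level of a fixed sample configuration: it writes $T_{1,m}(\bm{X}_i)-\mathbb{E}[\bm{Y}]$ as the $i$-th row of $\mathbf{P}\mathbf{Y}$ for a doubly stochastic matrix $\mathbf{P}$ and invokes Bauer's maximum principle on the Birkhoff polytope to get $\norm{\mathbf{P}\mathbf{Y}}_F^2\le\norm{\mathbf{Y}}_F^2$. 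Your conditional-Jensen formulation ($\mathbb{E}\norm{\mathbb{E}[Z\mid\mathcal{F}]-\mathbb{E}Z}^2\le\mathbb{E}\norm{Z-\mathbb{E}Z}^2$ with $Z=\bm{Y}'$ drawn from $\pi_m^{(1)}(\cdot\mid\bm{X})$) is the probabilistic restatement of the same fact, and it is arguably cleaner since it avoids the appeal to extreme points. Be aware, though, that the step you flag as needing care --- verifying that the unconditional law of $\bm{Y}'$ is $\nu$ --- genuinely requires both ingredients the paper uses: the exchangeability/sample-swap argument to replace the out-of-sample query row by an in-sample row, and then the column-sum (second marginal) constraint of the doubly stochastic empirical coupling to see that averaging the rows reproduces $\nu_{\cY}$, whose expectation is $\nu$. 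So the doubly stochastic structure does not disappear in your version; it just enters through the marginal identification rather than through Birkhoff's theorem. Your handling of the subGaussian second-moment bound via Jensen applied to $\log$ is exactly the paper's argument.
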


The bias term is more complicated. The general strategy is to define a measure $\tilde{\pi}$ which satisfies the following property. For every measurable set $A \subset \mathbb{R}^d \times \mathbb{R}^d$ it holds
\begin{equation*}
    \tilde{\pi}[A] = \mathbb{E}_{\cX,\cY}\left [ \pi_m[A] \right ] = (\mathbb{E}_{\cX,\cY}\left [ \pi_m \right ])[A].
\end{equation*}
The evaluation of $\pi_m[A]$ can be done for fixed samples $\cX,\cY$ via the following formula:
\begin{equation*}
    \pi_m[A] = \frac{1}{m^2}\sum_{i,j=1}^m p_m(\bm{X}_i,\bm{Y}_j)\cdot \pmb{1}[(\bm{X}_i,\bm{Y}_j) \in A].
\end{equation*}
The measure $\tilde{\pi}$ can be considered the average sample entropy-regularized coupling.  The measure $\tilde{\pi}$ depends on the choice of $m$, but since all the results involving $\tilde{\pi}$ hold for any $m \in \mathbb{N}$ we exclude it from the notation. Also note that we use $m$ instead of $n$ to define $\tilde{\pi}$ because we are interested in an estimate from a single batch of size $m$ instead of a sum over $k$ batches (which would use $n$ samples). This measure has several properties of note.  
\begin{proposition} \label{prop:pi_tilde_properties}
    The measure $\tilde{\pi}$ has density $d\tilde{\pi}(\bm{x},\bm{y})$ given by
    \begin{equation}
        \mathbb{E}_{\cX,\cY}\left [ p_m(\bm{X}_1,\bm{Y}_1) \big | \bm{X}_1 = \bm{x}, \bm{Y}_1 = \bm{y} \bm\right ] d\mu(\bm{x})d\nu(\bm{y}). \label{eq:intuitive_meaning} 
    \end{equation}
    In addition $\tilde{\pi}$ has marginals $\mu$ and $\nu$. As a consequence, $\tilde{\pi}$ has conditional density $d\tilde{\pi}(\bm{y}|\bm{x})$ given by
    \begin{equation*}
        \mathbb{E}_{\cX,\cY}\left [p_m(\bm{X}_1,\bm{Y}_1) \ | \ \bm{X}_1 = \bm{x}, \bm{Y}_1 = \bm{y} \right ] d\nu(\bm{y}).
    \end{equation*}
\end{proposition}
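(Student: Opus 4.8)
The plan is to unwind the definition $\tilde{\pi}[A] = \mathbb{E}_{\cX,\cY}[\pi_m[A]]$ together with the explicit formula $\pi_m[A] = \frac{1}{m^2}\sum_{i,j=1}^m p_m(\bm{X}_i,\bm{Y}_j)\,\pmb{1}[(\bm{X}_i,\bm{Y}_j)\in A]$, and to exploit the exchangeability of the samples. First I would observe that the empirical EOT problem is invariant under relabelling of the $\bm{X}$'s and of the $\bm{Y}$'s: applying a permutation $(\sigma,\tau)\in S_m\times S_m$ to the point clouds sends the (unique) optimal coupling to its relabelling, so $p_m(\bm{X}_{\sigma(i)},\bm{Y}_{\tau(j)})$ computed on the permuted clouds equals $p_m(\bm{X}_i,\bm{Y}_j)$ computed on the original clouds. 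Since $(\bm{X}_1,\dots,\bm{X}_m)$ and $(\bm{Y}_1,\dots,\bm{Y}_m)$ are i.i.d., it follows that the joint law of $\big(\bm{X}_i,\bm{Y}_j,\,p_m(\bm{X}_i,\bm{Y}_j)\big)$ does not depend on $(i,j)$; ties among samples have probability zero when $\mu,\nu$ are non-atomic and in any case do not affect the argument. Hence all $m^2$ summands contribute equally, and
\[
\tilde{\pi}[A] = \mathbb{E}_{\cX,\cY}\big[\,p_m(\bm{X}_1,\bm{Y}_1)\,\pmb{1}[(\bm{X}_1,\bm{Y}_1)\in A]\,\big].
\]

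Next I would disintegrate. Conditioning on $(\bm{X}_1,\bm{Y}_1)=(\bm{x},\bm{y})$ — with the remaining $2(m-1)$ samples still i.i.d.\ from $\mu,\nu$ — defines the measurable function $h(\bm{x},\bm{y}) := \mathbb{E}_{\cX,\cY}[\,p_m(\bm{X}_1,\bm{Y}_1)\mid \bm{X}_1=\bm{x},\bm{Y}_1=\bm{y}\,]$, and since $(\bm{X}_1,\bm{Y}_1)\sim \mu\otimes\nu$ the tower property yields $\tilde{\pi}[A]=\int_A h(\bm{x},\bm{y})\,d\mu(\bm{x})\,d\nu(\bm{y})$, which is exactly the density formula \eqref{eq:intuitive_meaning}. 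Measurability of $h$ and existence of the regular conditional law are standard since everything lives on Polish spaces, and $p_m$ is a measurable function of $(\cX,\cY)$ by uniqueness of the empirical optimal coupling.

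For the marginals I would argue directly rather than through $h$: for Borel $B$ one has $\pi_m[B\times\mathbb{R}^d]=\muX[B]$ because $\pi_m\in\Pi(\muX,\nuY)$, so $\tilde{\pi}[B\times\mathbb{R}^d]=\mathbb{E}_{\cX}[\muX[B]]=\mu[B]$, and symmetrically $\tilde{\pi}[\mathbb{R}^d\times B]=\nu[B]$. (Equivalently this is the identity $\int h(\bm{x},\bm{y})\,d\nu(\bm{y})=1$ for $\mu$-a.e.\ $\bm{x}$, which one can also read off from the marginal constraint $\frac1m\sum_j p_m(\bm{X}_1,\bm{Y}_j)=1$ after symmetrizing.) Finally, the conditional-density claim is then immediate: having shown the $\bm{x}$-marginal of $\tilde{\pi}$ is $\mu$, the disintegration theorem gives $d\tilde{\pi}(\bm{x},\bm{y})=d\tilde{\pi}(\bm{y}\mid\bm{x})\,d\mu(\bm{x})$, and comparing with the density formula forces $d\tilde{\pi}(\bm{y}\mid\bm{x})=h(\bm{x},\bm{y})\,d\nu(\bm{y})$.

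The only genuinely delicate point is the symmetrization step, i.e.\ verifying that $p_m(\bm{X}_i,\bm{Y}_j)$ is "exchangeable in $(i,j)$"; this rests on the permutation-equivariance of the unique empirical EOT coupling combined with the i.i.d.\ structure of the samples. Once that is in place, the remainder is bookkeeping with Fubini's theorem and disintegration.
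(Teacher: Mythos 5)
Your proof is correct and follows essentially the same route as the paper's: both reduce the $m^2$-term sum to the single $(i,j)=(1,1)$ term via exchangeability of the empirical EOT coupling and then condition on $(\bm{X}_1,\bm{Y}_1)$, and both derive the marginals from the coupling constraint on $\pi_m$ (you do so directly at the measure level, while the paper unwinds it through the test-function identity $\sum_j \frac{1}{m}p_m(\bm{X}_i,\bm{Y}_j)=1$ — the same fact). Your explicit discussion of permutation-equivariance of $p_m$ makes precise a step the paper passes over quickly with "because of the independence of every variable we can replace."
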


From \eqref{eq:intuitive_meaning} we can see that the interpretation of $d\tilde{\pi}(\bm{x},\bm{y})$ is that it is the average amount of mass the empirical coupling gives to the pair $(\bm{x},\bm{y})$, conditioned on both points being the first sample in $\cX$ and $\cY$ respectively.  

An important consequence of Proposition \ref{prop:pi_tilde_properties} is the following identity for $\hat{T}_n(\bm{x})$.
\begin{corollary} \label{cor:pi_tilde_cond_exp}
    For fixed $\bm{x}$ it holds
    \begin{equation*}
        \mathbb{E}[\hat{T}_n(\bm{x})] = \mathbb{E}_{\bm{Y} \sim \tilde{\pi}(\cdot|\bm{x})}[\bm{Y}].
    \end{equation*}
\end{corollary}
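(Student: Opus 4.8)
The plan is to peel the corollary back to a single batch, reinterpret the ``augmented sample'' $\cX^{(1)}$ as a conditioning operation, and then read off the answer from Proposition~\ref{prop:pi_tilde_properties}.

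First I would use linearity of expectation on the definition \eqref{eq:T_hat}: $\mathbb{E}[\hat{T}_n(\bm{x})] = \frac{1}{k}\sum_{\ell=1}^{k}\mathbb{E}[T_{\ell,m}^{(1)}(\bm{x})]$. Because the $k$ batches consist of disjoint blocks of the original i.i.d.\ samples, they are themselves i.i.d., so every summand equals $\mathbb{E}_{\cX,\cY}[T_m^{(1)}(\bm{x})]$ where $\cX,\cY$ now stand for $m$ i.i.d.\ draws. Thus it suffices to prove $\mathbb{E}_{\cX,\cY}[T_m^{(1)}(\bm{x})] = \mathbb{E}_{\bm{Y}\sim\tilde{\pi}(\cdot|\bm{x})}[\bm{Y}]$, i.e.\ the corollary for $k=1$.

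Next I would observe that the augmented sample $\cX^{(1)} = \{\bm{x},\bm{X}_2,\dots,\bm{X}_m\}$, with $\bm{X}_2,\dots,\bm{X}_m$ i.i.d.\ from $\mu$ and $\bm{x}$ deterministic, has exactly the law of an i.i.d.\ sample $(\bm{X}_1,\dots,\bm{X}_m)$ from $\mu$ conditioned on $\{\bm{X}_1=\bm{x}\}$, and under this identification $p_m^{(1)}(\bm{x},\cdot)$ becomes $p_m(\bm{X}_1,\cdot)$. Combining this with the formula \eqref{eq:SampEOTmap1} for $T_m^{(1)}$ gives
\[
    \mathbb{E}_{\cX,\cY}[T_m^{(1)}(\bm{x})] = \mathbb{E}_{\cX,\cY}\!\left[\tfrac{1}{m}\sum_{j=1}^{m} p_m(\bm{X}_1,\bm{Y}_j)\bm{Y}_j \,\Big|\, \bm{X}_1=\bm{x}\right],
\]
where swapping the finite sum with the expectation is justified by the crude bound $p_m\le m^2$ together with $\mathbb{E}\|\bm{Y}_1\|<\infty$. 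Since conditionally on $\{\bm{X}_1=\bm{x}\}$ the vector $(\bm{Y}_1,\dots,\bm{Y}_m)$ is exchangeable and the empirical EOT coupling is equivariant under relabeling the $\bm{Y}_j$, each term $\mathbb{E}[p_m(\bm{X}_1,\bm{Y}_j)\bm{Y}_j\mid\bm{X}_1=\bm{x}]$ is the same, so the $\tfrac1m$ and the $m$ identical terms cancel and we are left with $\mathbb{E}_{\cX,\cY}[p_m(\bm{X}_1,\bm{Y}_1)\bm{Y}_1\mid\bm{X}_1=\bm{x}]$. Finally I would condition additionally on $\bm{Y}_1$ and apply the tower rule:
\[
    \mathbb{E}_{\cX,\cY}[p_m(\bm{X}_1,\bm{Y}_1)\bm{Y}_1\mid\bm{X}_1=\bm{x}] = \mathbb{E}_{\bm{Y}_1\sim\nu}\!\left[\bm{Y}_1\,\mathbb{E}_{\cX,\cY}\!\left[p_m(\bm{X}_1,\bm{Y}_1)\mid\bm{X}_1=\bm{x},\bm{Y}_1\right]\right],
\]
and by the conditional-density statement of Proposition~\ref{prop:pi_tilde_properties} the inner conditional expectation is exactly the density of $\tilde{\pi}(\cdot|\bm{x})$ with respect to $\nu$ at $\bm{Y}_1$, so the right-hand side is $\int\bm{y}\,d\tilde{\pi}(\bm{y}|\bm{x}) = \mathbb{E}_{\bm{Y}\sim\tilde{\pi}(\cdot|\bm{x})}[\bm{Y}]$.

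The only genuinely delicate point is the identification in the second step: one must be careful that plugging the deterministic point $\bm{x}$ into the first slot of the sample (as in the definition of $T_n^{(1)}$) produces the same law, and the same relative densities, as conditioning the i.i.d.\ sample on $\{\bm{X}_1=\bm{x}\}$, so that $\tilde{\pi}$ as built from Proposition~\ref{prop:pi_tilde_properties} really is the object appearing here. Everything else — linearity over batches, the exchangeability collapse of the sum, and the tower property — is routine bookkeeping, and the integrability needed to swap sums and expectations is immediate from $p_m\le m^2$.
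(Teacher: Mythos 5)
Your proposal is correct and is essentially the paper's own argument run in the reverse direction: the paper starts from $\mathbb{E}_{\bm{Y}\sim\tilde{\pi}(\cdot|\bm{x})}[\bm{Y}]$, expands the density from Proposition~\ref{prop:pi_tilde_properties} as a double sum, and collapses it to $\mathbb{E}[T_{1,m}^{(1)}(\bm{x})]$ using the same sample-swap identification (plugging the deterministic $\bm{x}$ into one slot has the same law as conditioning on $\bm{X}_i=\bm{x}$) and exchangeability that you invoke. The point you flag as delicate is exactly the step the paper also leans on (its ``same deterministic function of identically distributed tuples'' argument), so there is no gap.
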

This provides the identity
\begin{equation*}
    \mathbb{E}[\hat{T}_n](\bm{x}) - T_{\varepsilon}(\bm{x}) = 
    \mathbb{E}_{\bm{Y} \sim \tilde{\pi}(\cdot|\bm{x})}[\bm{Y}] - \mathbb{E}_{\bm{Y} \sim \pi_\varepsilon(\cdot|\bm{x})}[\bm{Y}].
\end{equation*}
In several settings discussed below, using transport inequalities \cite{gozlan2010transport} we are able to control the deviation on the right using a KL-Divergence bound:
\begin{align} 
    \|\mathbb{E}_{\bm{Y} \sim \tilde{\pi}(\cdot|\bm{x})}[\bm{Y}] - \mathbb{E}_{\bm{Y} \sim \pi_\varepsilon(\cdot|\bm{x})}[\bm{Y}]\|^2 \lesssim \mathsf{KL}(\tilde{\pi}(\cdot|\bm{x}) \ || \  \pi_\varepsilon(\cdot|\bm{x})).
    \label{eq:KL_bound}
\end{align}
The final step is to bound the KL-divergence, which is handled by the following two results:
\begin{proposition} \label{prop:kl_bound}
    With the notation above, we have for any measures $\mu,\nu$:
    \begin{align*}
        \mathsf{KL}(\tilde{\pi} \ || \ \pi_\varepsilon) &= \mu(\mathsf{KL}(\tilde{\pi}(\cdot|\bm{X}) \ || \  \pi_\varepsilon(\cdot|\bm{X}))) \\
        &\leq \frac{1}{\varepsilon}\mathbb{E}_{\cX,\cY}[|S_\varepsilon(\mu,\nu) - S_\varepsilon(\cX,\cY)|].
    \end{align*}
\end{proposition}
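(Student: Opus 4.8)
The plan is to prove the two assertions separately: first the disintegration identity $\mathsf{KL}(\tilde{\pi}\,||\,\pi_\varepsilon) = \mu(\mathsf{KL}(\tilde{\pi}(\cdot|\bm{X})\,||\,\pi_\varepsilon(\cdot|\bm{X})))$, and then the bound by the expected cost gap. The first assertion is just the chain rule for relative entropy: both $\tilde{\pi}$ and $\pi_\varepsilon$ have first marginal $\mu$ (the former by Proposition~\ref{prop:pi_tilde_properties}, the latter because $\pi_\varepsilon \in \Pi(\mu,\nu)$), and since $p_\varepsilon = \expe(f_\varepsilon(\bm{x}) + g_\varepsilon(\bm{y}) - \tfrac12\|\bm{x} - \bm{y}\|^2)$ is strictly positive, $\pi_\varepsilon$ and $\mu\otimes\nu$ are mutually absolutely continuous and hence $\tilde{\pi} \ll \pi_\varepsilon$. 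Disintegrating both measures over the first coordinate makes the common $d\mu(\bm{x})$ factor cancel inside the logarithm, and integrating $\bm{x}$ out against $\mu$ gives the equality.

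For the inequality the central device is the identity, valid for \emph{any} $\pi \in \Pi(\mu,\nu)$ with $\mathsf{KL}(\pi\,||\,\mu\otimes\nu) < \infty$,
\begin{equation*}
    \mathsf{KL}(\pi\,||\,\pi_\varepsilon) \;=\; \frac{1}{\varepsilon}\Big( \pi\big(\tfrac12\|\bm{x} - \bm{y}\|^2\big) + \varepsilon\,\mathsf{KL}(\pi\,||\,\mu\otimes\nu) - S_\varepsilon(\mu,\nu) \Big) \;=\; \frac{1}{\varepsilon}\big( J(\pi) - S_\varepsilon(\mu,\nu)\big),
\end{equation*}
where $J(\pi) := \pi(\tfrac12\|\bm{x}-\bm{y}\|^2) + \varepsilon\,\mathsf{KL}(\pi\,||\,\mu\otimes\nu)$ is the primal objective of \eqref{eq:EOT}. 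This follows from \eqref{eq:opt_relation} by expanding $\log\frac{d\pi}{d\pi_\varepsilon} = \log\frac{d\pi}{d(\mu\otimes\nu)} - \tfrac1\varepsilon\big(f_\varepsilon(\bm{x}) + g_\varepsilon(\bm{y}) - \tfrac12\|\bm{x} - \bm{y}\|^2\big)$, integrating against $\pi$, using the marginal constraints on $\pi$, and substituting $\mu(f_\varepsilon) + \nu(g_\varepsilon) = S_\varepsilon(\mu,\nu)$ (which follows from the dual formula together with $(\mu\otimes\nu)(p_\varepsilon) = \pi_\varepsilon(\mathbb{R}^d\times\mathbb{R}^d) = 1$). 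Applying it with $\pi = \tilde{\pi}$ reduces the proposition to the single inequality $J(\tilde{\pi}) \le \mathbb{E}_{\cX,\cY}[S_\varepsilon(\cX,\cY)]$, since then $\mathsf{KL}(\tilde{\pi}\,||\,\pi_\varepsilon) = \tfrac1\varepsilon(J(\tilde{\pi}) - S_\varepsilon(\mu,\nu)) \le \tfrac1\varepsilon(\mathbb{E}_{\cX,\cY}[S_\varepsilon(\cX,\cY)] - S_\varepsilon(\mu,\nu)) \le \tfrac1\varepsilon\mathbb{E}_{\cX,\cY}[|S_\varepsilon(\mu,\nu) - S_\varepsilon(\cX,\cY)|]$.

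To bound $J(\tilde{\pi})$ I would treat the two terms separately. The transport term is linear in the coupling, so by Tonelli and $\tilde{\pi} = \mathbb{E}_{\cX,\cY}[\pi_m]$ one has $\tilde{\pi}(\tfrac12\|\bm{x} - \bm{y}\|^2) = \mathbb{E}_{\cX,\cY}[\pi_m(\tfrac12\|\bm{x} - \bm{y}\|^2)]$. For the entropy term I would use the density formula \eqref{eq:intuitive_meaning}: with $\Phi(t) = t\log t$ convex and $\tilde{p}(\bm{x},\bm{y}) := \mathbb{E}_{\cX,\cY}[p_m(\bm{X}_1,\bm{Y}_1)\mid \bm{X}_1 = \bm{x},\,\bm{Y}_1 = \bm{y}]$, conditional Jensen gives $\mathsf{KL}(\tilde{\pi}\,||\,\mu\otimes\nu) = \int\Phi(\tilde{p})\,d\mu\,d\nu \le \int \mathbb{E}_{\cX,\cY}[\Phi(p_m(\bm{X}_1,\bm{Y}_1))\mid \bm{X}_1 = \bm{x},\,\bm{Y}_1 = \bm{y}]\,d\mu(\bm{x})\,d\nu(\bm{y})$, which equals $\mathbb{E}_{\cX,\cY}[p_m(\bm{X}_1,\bm{Y}_1)\log p_m(\bm{X}_1,\bm{Y}_1)]$ because $\bm{X}_1 \sim \mu$ and $\bm{Y}_1 \sim \nu$ are independent. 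By exchangeability of the $X$-samples and of the $Y$-samples, $p_m(\bm{X}_i,\bm{Y}_j)$ has the same law as $p_m(\bm{X}_1,\bm{Y}_1)$ for every $i,j$, so this in turn equals $\mathbb{E}_{\cX,\cY}\big[\tfrac1{m^2}\sum_{i,j=1}^m p_m(\bm{X}_i,\bm{Y}_j)\log p_m(\bm{X}_i,\bm{Y}_j)\big] = \mathbb{E}_{\cX,\cY}[\mathsf{KL}(\pi_m\,||\,\mu_{\cX}\otimes\nu_{\cY})]$, using $d\mu_{\cX}\otimes d\nu_{\cY} = 1/m^2$ on the atoms and $\pi_m(\bm{X}_i,\bm{Y}_j) = p_m(\bm{X}_i,\bm{Y}_j)/m^2$. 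Adding the two estimates and using that $\pi_m$ is optimal for the empirical EOT problem yields $J(\tilde{\pi}) \le \mathbb{E}_{\cX,\cY}[\pi_m(\tfrac12\|\bm{x}-\bm{y}\|^2) + \varepsilon\,\mathsf{KL}(\pi_m\,||\,\mu_{\cX}\otimes\nu_{\cY})] = \mathbb{E}_{\cX,\cY}[S_\varepsilon(\cX,\cY)]$.

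The step to watch is precisely this entropy term. One is tempted to use convexity of $J$ directly to write $J(\tilde{\pi}) \le \mathbb{E}_{\cX,\cY}[J(\pi_m)]$, but that is not literally valid: each $\pi_m$ is carried by a random finite set and is \emph{not} absolutely continuous with respect to $\mu\otimes\nu$, so $\mathsf{KL}(\pi_m\,||\,\mu\otimes\nu) = \infty$. The density identity \eqref{eq:intuitive_meaning} for $\tilde{\pi}$ is exactly what allows pushing Jensen inside the integral against $\mu\otimes\nu$ and then re-identifying the empirical objective $\mathsf{KL}(\pi_m\,||\,\mu_{\cX}\otimes\nu_{\cY})$. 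Alongside this I would record the routine finiteness facts that license applying the cost-gap identity to $\tilde{\pi}$: $\tilde{\pi}(\|\bm{x} - \bm{y}\|^2) \le 2\mu(\|\bm{x}\|^2) + 2\nu(\|\bm{y}\|^2) < \infty$ since $\mu,\nu \in \mathcal{P}_2(\mathbb{R}^d)$, and $\mathsf{KL}(\pi_m\,||\,\mu_{\cX}\otimes\nu_{\cY}) \le \log(m^2)$ deterministically, so that $J(\tilde{\pi})$ and hence $\mathsf{KL}(\tilde{\pi}\,||\,\pi_\varepsilon)$ are finite.
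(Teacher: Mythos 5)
Your proof is correct, but it is organized differently from the paper's. You isolate the primal gap identity $\mathsf{KL}(\pi\,\|\,\pi_\varepsilon) = \tfrac{1}{\varepsilon}\bigl(J(\pi) - S_\varepsilon(\mu,\nu)\bigr)$ valid for any coupling $\pi\in\Pi(\mu,\nu)$ with finite relative entropy, which reduces the proposition to the single estimate $J(\tilde{\pi}) \le \mathbb{E}_{\cX,\cY}[S_\varepsilon(\cX,\cY)]$, and you then treat the two summands of $J$ separately: the transport term by linearity and Tonelli, the entropy term by pointwise Jensen applied to $\Phi(t)=t\log t$ through the density formula \eqref{eq:intuitive_meaning}. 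The paper instead works directly on the conditional quantities $\mathsf{KL}(\tilde{\pi}(\cdot|\bm{x})\,\|\,\pi_\varepsilon(\cdot|\bm{x}))$, applies Jensen twice at that conditional level, and then carries out an algebraic reduction with the dual potentials $f_m, g_m, f_\varepsilon, g_\varepsilon$, recognizing $\mu_{\cX}(f_m)+\nu_{\cY}(g_m) = S_\varepsilon(\cX,\cY)$ and $\mathbb{E}[\mu_{\cX}(f_\varepsilon)+\nu_{\cY}(g_\varepsilon)] = S_\varepsilon(\mu,\nu)$. The crux — the Jensen step on the entropy term — is the same in both, but your version buys transparency by naming the primal objective, and you explicitly supply the finiteness facts ($\tilde{\pi}$ has finite second moments and $\mathsf{KL}(\pi_m\,\|\,\mu_{\cX}\otimes\nu_{\cY}) \le \log(m^2)$ deterministically) that justify the manipulations; the paper leaves these implicit. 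Your remark that one cannot naively invoke joint convexity of $J$ to write $J(\tilde{\pi}) \le \mathbb{E}[J(\pi_m)]$ — because $\pi_m$ is singular with respect to $\mu\otimes\nu$, so $\mathsf{KL}(\pi_m\,\|\,\mu\otimes\nu)=\infty$ — is exactly the pitfall that the density identity \eqref{eq:intuitive_meaning} is there to circumvent, and you route around it correctly.
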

This result is Lemma 8 in \cite{stromme23_ICML}, and we include a proof using our notation in the supplement. The final result that we will require is the following.
\begin{theorem} \label{thm:mena_weed} (\cite{mena2019statistical}) 
    Let $\mu,\nu$ be $\sigma^2$-norm-subGaussian. Then 
    \begin{equation*}
        \mathbb{E}_{\cX,\cY}[|S_\varepsilon(\mu,\nu) - S_\varepsilon(\cX,\cY)|] \lesssim \varepsilon \left ( 1 + \frac{\sigma^{\lceil 5d/2 \rceil + 6}}{\varepsilon^{\lceil 5d/4\rceil + 3}}\right ) \cdot \frac{1}{\sqrt{n}}
    \end{equation*}
\end{theorem}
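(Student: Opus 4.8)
The plan is to reconstruct the argument of \cite{mena2019statistical} in three stages: reduce the cost deviation to a uniform deviation over a class of dual potentials, prove quantitative regularity of those potentials, and control the resulting empirical process by chaining.

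\textbf{Stage 1: Reduction to a dual-potential empirical process.} Split
\[
|S_\varepsilon(\mu,\nu) - S_\varepsilon(\cX,\cY)| \leq |S_\varepsilon(\mu,\nu) - S_\varepsilon(\mu_{\cX},\nu)| + |S_\varepsilon(\mu_{\cX},\nu) - S_\varepsilon(\mu_{\cX},\nu_{\cY})|
\]
and bound each one-sample gap after conditioning on the fixed marginal. For a gap like $S_\varepsilon(\mu_{\cX},\nu) - S_\varepsilon(\mu,\nu)$, feed the optimal (extended) dual pair of one problem into the dual objective of the other; because optimal potentials can be normalized so that $\nu(\expe(f_\varepsilon(\bm{x}) + g_\varepsilon(\bm{y}) - \tfrac12\|\bm{x} - \bm{y}\|^2)) = 1$ for \emph{every} $\bm{x}$, the exponential penalty evaluates to exactly $1$ against both $\mu \otimes \nu$ and $\mu_{\cX} \otimes \nu$ and cancels, leaving only the linear term $\mu_{\cX}(f) - \mu(f)$. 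Taking the supremum over the (finitely many) admissible potentials,
\[
\mathbb{E}_{\cX}|S_\varepsilon(\mu_{\cX},\nu) - S_\varepsilon(\mu,\nu)| \leq \mathbb{E}_{\cX}\sup_{f \in \mathcal{F}_\varepsilon}|\mu_{\cX}(f) - \mu(f)|,
\]
and symmetrically for the $\nu$-marginal term with a class $\mathcal{G}_\varepsilon$ of $g$-potentials.

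\textbf{Stage 2: Regularity of potentials (the crux).} From the self-consistency identity $f_\varepsilon(\bm{x}) = -\varepsilon\log\nu(\expe(g_\varepsilon(\bm{y}) - \tfrac12\|\bm{x} - \bm{y}\|^2))$ and its empirical counterpart, the centered potential $\bm{x} \mapsto f_\varepsilon(\bm{x}) - \tfrac12\|\bm{x}\|^2$ is a log-moment-generating function of $\nu$ exponentially tilted by the Gaussian kernel. Differentiating under the integral, its order-$k$ derivatives are polynomials in the moments of the tilted conditional $\pi_\varepsilon(\cdot|\bm{x})$; by induction on $k$ and using the subGaussian hypothesis on $\nu$ (for $f$-potentials) and on $\mu$ (for $g$-potentials) one shows these derivatives are dominated by a polynomial in $\|\bm{x}\|$ whose degree depends on $d$ and whose coefficients are \emph{polynomial} in $\sigma$ and $1/\varepsilon$. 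Hence $\mathcal{F}_\varepsilon \cup \mathcal{G}_\varepsilon$ sits inside a ball, of radius polynomial in $\sigma$ and $1/\varepsilon$, of a weighted smoothness class $C^s_w$ of functions on $\mathbb{R}^d$ whose derivatives up to any prescribed order $s$ are controlled by a subGaussian weight $w$.

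\textbf{Stage 3: Chaining and bookkeeping.} Bound $\mathbb{E}\sup_{f \in \mathcal{F}_\varepsilon}|\mu_{\cX}(f) - \mu(f)|$ by symmetrization followed by Dudley's entropy integral in $L^2(\mu)$. Covering-number estimates for a radius-$R$ ball of $C^s_w$ on $\mathbb{R}^d$ give $\log N(\delta) \lesssim (R/\delta)^{d/s}$ up to the weight; choosing $s \asymp d$ large enough that $d/s < 2$ makes the entropy integral finite and produces the parametric rate $n^{-1/2}$, with a $\delta$-independent prefactor collecting a power of $R$ and of $\sigma$ from the tail. Propagating the $\sigma$ and $1/\varepsilon$ exponents through the $\approx d/2$ derivatives of Stage 2 and the tail/weight estimates of Stage 3 yields the displayed exponents $\lceil 5d/2\rceil + 6$ and $\lceil 5d/4\rceil + 3$; restoring the factored-out $\varepsilon$ and summing the two marginal contributions gives the claim.

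\textbf{Main obstacle.} The delicate step is carrying out Stage 2 \emph{uniformly} over the family that includes the empirical potentials appearing after conditioning — e.g.\ the $g$-potential of $(\mu_{\cX},\nu)$, whose regularity is governed by $\mu_{\cX}$, which is not subGaussian with a fixed constant. The fix is to express the tilted moments through the relative-density formula and push them onto the genuinely subGaussian marginal, so that only expectations of finite quantities survive the outer $\mathbb{E}_{\cX,\cY}$; this, together with the careful tracking of constants through the chaining argument (to avoid stray $\log n$ factors), is where essentially all the technical work lies.
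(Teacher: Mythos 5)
This theorem is not proved in the paper: it is quoted verbatim from \cite{mena2019statistical}, so there is no in-paper argument to compare against. Your three-stage outline is a faithful reconstruction of the strategy of that reference: the one-sample reduction using the Schr\"odinger normalization $\nu\left(\expe\left(f_\varepsilon(\bm{x})+g_\varepsilon(\bm{y})-\tfrac12\|\bm{x}-\bm{y}\|^2\right)\right)=1$ so that only the linear terms survive; derivative bounds on the potentials obtained by viewing $\tfrac12\|\bm{x}\|^2-f_\varepsilon(\bm{x})$ as a log-moment-generating function of a tilted measure, with cumulants controlled by conditional moments of $\pi_\varepsilon(\cdot|\bm{x})$; and an empirical-process bound over a H\"older-type ball of smoothness just above $d/2$ via Dudley's entropy integral. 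You have also correctly flagged the genuinely delicate point, namely that the potentials of the mixed problems $(\mu_{\cX},\nu)$ must have their regularity pushed onto the population marginal.

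That said, as it stands this is an outline rather than a proof. The two steps that carry all the content of the stated bound --- the induction giving explicit polynomial dependence of the order-$k$ derivative bounds on $\sigma$ and $1/\varepsilon$, and the weighted covering-number estimate on the noncompact domain that feeds the entropy integral --- are asserted, and the claim that the bookkeeping ``yields the displayed exponents $\lceil 5d/2\rceil+6$ and $\lceil 5d/4\rceil+3$'' is not derived. Since the theorem's value here lies precisely in those explicit exponents (they propagate into Theorems \ref{thm:ultimate_bounded}--\ref{thm:ultimate_subg}), a self-contained proof would have to carry out that computation; otherwise the statement should simply be cited, as the paper does.
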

The general proof strategy is now possible by applying Lemma \ref{lem:triangle} to break the error into two terms, one of which is controlled by Lemma \ref{lem:easy_term}. The other term is handled by the tandem of Corollary \ref{cor:pi_tilde_cond_exp}, Proposition \ref{prop:kl_bound}, and Theorem \ref{thm:mena_weed}. Therefore, what remains is to justify the bound in \eqref{eq:KL_bound}. In Section \ref{sec:bounded_slc}, we discuss two cases where this bound can be immediately arrived at.  In Section \ref{sec:subGaussian_measures}, we modify this approach slightly to analyze general subGaussian $\mu$ and $\nu$.

\section{Compactly-Supported and Strongly Log-Concave Target Measures}\label{sec:bounded_slc}

As noted above, the procedure above is applicable as long as we are able to justify a KL-Divergence upper bound. In this section we cover two cases where this can be done using known results: compactly supported and strongly log-concave measures. These require the following definition.
\begin{definition}
    Let $\mu \in \mathcal{P}(\mathbb{R}^d)$. The \textit{Laplace functional} of $\mu$ is given by
    \begin{equation*}
        E_\mu(\lambda) = \sup_f \mathbb{E}_{\bm{X} \sim \mu}[e^{\lambda f(\bm{X})}]
    \end{equation*}
    where the supremum is over all functions $f:\mathbb{R}^d \rightarrow \mathbb{R}$ which are 1-Lipschitz with respect to the Euclidean distance and satisfy $\mu(f) = 0$.
\end{definition}
The following T1-transport inequality is classical.
\begin{theorem} \label{thm:w1_kl}(\cite{bobkov1999exponential, ledoux2001concentration})
    Let $\mu \in \mathcal{P}(\mathbb{R}^d)$. Then 
    \begin{equation*}
        W_1(\mu,\nu) \leq \sqrt{2C\mathsf{KL}(\nu || \mu)}
    \end{equation*}
    for some $C > 0$ and all $\nu$ if and only if 
    \begin{equation*}
        E_{\mu}(\lambda) \leq e^{C\lambda^2/2}, \hspace{1cm} \forall \ \lambda \geq 0.
    \end{equation*}
\end{theorem}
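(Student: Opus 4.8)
The plan is to run both implications through the Kantorovich--Rubinstein dual representation $W_1(\mu,\nu) = \sup\{\nu(f) - \mu(f) : f \text{ is $1$-Lipschitz}\}$, which reduces everything to estimates on the linear functionals $f \mapsto \nu(f)-\mu(f)$. Replacing $f$ by $f-\mu(f)$ alters neither the Lipschitz constant nor $\nu(f)-\mu(f)$, so we may restrict to $1$-Lipschitz $f$ with $\mu(f)=0$; for exactly this class, $\sup_f \mu(e^{\lambda f}) = E_\mu(\lambda)$, which is the observation that ties $E_\mu$ to the dual of $W_1$.

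Forward direction ($E_\mu(\lambda)\le e^{C\lambda^2/2}$ for all $\lambda\ge 0$ implies the transport inequality): fix $\nu$ with $\mathsf{KL}(\nu\|\mu)<\infty$ (otherwise there is nothing to prove) and a $1$-Lipschitz $f$ with $\mu(f)=0$. Since $\mu(e^{\lambda f})\le E_\mu(\lambda)<\infty$, the Donsker--Varadhan inequality $\nu(h)\le \mathsf{KL}(\nu\|\mu) + \log\mu(e^{h})$ applies with $h=\lambda f$ for every $\lambda>0$, giving $\lambda\,\nu(f)\le \mathsf{KL}(\nu\|\mu)+\log E_\mu(\lambda)\le \mathsf{KL}(\nu\|\mu)+C\lambda^2/2$, hence $\nu(f)-\mu(f)\le \mathsf{KL}(\nu\|\mu)/\lambda + C\lambda/2$. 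Choosing $\lambda=\sqrt{2\,\mathsf{KL}(\nu\|\mu)/C}$ yields $\nu(f)-\mu(f)\le \sqrt{2C\,\mathsf{KL}(\nu\|\mu)}$; taking the supremum over admissible $f$ and invoking Kantorovich--Rubinstein gives $W_1(\mu,\nu)\le\sqrt{2C\,\mathsf{KL}(\nu\|\mu)}$.

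Converse direction: fix a \emph{bounded} $1$-Lipschitz $f$ with $\mu(f)=0$ and set $H(\lambda)=\log\mu(e^{\lambda f})$, finite and real-analytic in $\lambda$. Introduce the tilted measures $d\nu_\lambda = e^{\lambda f - H(\lambda)}\,d\mu\in\mathcal{P}(\mathbb{R}^d)$; differentiating under the integral gives $H'(\lambda)=\nu_\lambda(f)$, and a direct computation gives $\mathsf{KL}(\nu_\lambda\|\mu)=\lambda H'(\lambda)-H(\lambda)$. Because $f$ is $1$-Lipschitz, Kantorovich--Rubinstein together with the assumed transport inequality give $H'(\lambda)=\nu_\lambda(f)-\mu(f)\le W_1(\mu,\nu_\lambda)\le\sqrt{2C(\lambda H'(\lambda)-H(\lambda))}$, i.e. $H'(\lambda)^2\le 2C(\lambda H'(\lambda)-H(\lambda))$ for $\lambda\ge 0$, with $H(0)=0$ and $H'(0)=\mu(f)=0$. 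Writing $K(\lambda)=H(\lambda)/\lambda$ for $\lambda>0$ and $K(0)=0$, we have $\lambda H'(\lambda)-H(\lambda)=\lambda^2 K'(\lambda)$ and $H'(\lambda)=K(\lambda)+\lambda K'(\lambda)$, so the inequality reads $(K(\lambda)+\lambda K'(\lambda))^2\le 2C\lambda^2 K'(\lambda)$; since the right side is nonnegative we may take square roots, obtaining $K(\lambda)\le\lambda(\sqrt{2C\,K'(\lambda)}-K'(\lambda))$, and the elementary bound $\sqrt{2Cv}-v = -(\sqrt v-\sqrt{C/2})^2 + C/2\le C/2$ for all $v\ge 0$ forces $K(\lambda)\le C\lambda/2$, that is $\log\mu(e^{\lambda f})\le C\lambda^2/2$. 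Finally, a general $1$-Lipschitz $f$ with $\mu(f)=0$ is handled by truncation: $f_M=(f\wedge M)\vee(-M)$ is bounded and $1$-Lipschitz, the bounded case applies to $f_M-\mu(f_M)$, and letting $M\to\infty$ (using $\mu(f_M)\to 0$ and Fatou's lemma on $\mu(e^{\lambda f_M})$) transfers the estimate; taking the supremum over such $f$ gives $E_\mu(\lambda)\le e^{C\lambda^2/2}$.

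The main obstacle is the bookkeeping in the converse: one must first secure finiteness and differentiability of the log-moment-generating function $H$ before the differential inequality is meaningful (hence the restriction to bounded $f$), note that convexity of $H$ with $H'(0)=0$ makes $H'\ge 0$ so that taking square roots is legitimate, and then verify that the bound survives the passage from bounded to general Lipschitz $f$. The forward implication, by contrast, is essentially a one-line consequence of Donsker--Varadhan and a scalar optimization once the dual formula for $W_1$ is in hand.
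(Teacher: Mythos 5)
The paper does not prove this theorem; it is cited verbatim from Bobkov--G\"otze and Ledoux, so there is no in-text argument to compare against. Your reconstruction is the standard Bobkov--G\"otze proof and it is correct: the forward implication is Kantorovich--Rubinstein duality plus the Donsker--Varadhan variational formula followed by optimizing the scalar $\lambda$, and the converse is the exponential-tilting argument, where the transport inequality applied to $\nu_\lambda$ yields a pointwise inequality that forces $\log\mu(e^{\lambda f})\le C\lambda^2/2$. You correctly flag the two subtle points in the converse (finiteness and smoothness of $H$, nonnegativity of $H'$ before square-rooting, passage from bounded to general Lipschitz $f$).

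One small remark: the $K(\lambda)=H(\lambda)/\lambda$ substitution is more work than needed. From $H'(\lambda)^2\le 2C\bigl(\lambda H'(\lambda)-H(\lambda)\bigr)$ one can complete the square directly in $H'$:
\begin{equation*}
    \bigl(H'(\lambda)-C\lambda\bigr)^2 \;\le\; C^2\lambda^2 - 2C\,H(\lambda),
\end{equation*}
and nonnegativity of the left-hand side already forces $H(\lambda)\le C\lambda^2/2$. This gives the same conclusion in one line without introducing $K$ or appealing to the elementary bound $\sqrt{2Cv}-v\le C/2$, though your route is equally valid.
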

This theorem is useful in our setting because of the inequality
\begin{align}
    \|\mathbb{E}_{\bm{Y} \sim \tilde{\pi}(\cdot|\bm{x})}[\bm{Y}] - \mathbb{E}_{\bm{Y} \sim \pi_\varepsilon(\cdot|\bm{x})}[\bm{Y}]\|^2 \leq W_1^2(\tilde{\pi}(\cdot|\bm{x}), \pi_\varepsilon(\cdot|\bm{x})) \label{eq:expectation_vs_w1}
\end{align}
which follows from taking a $W_1$-optimal coupling of $\tilde{\pi}(\cdot|\bm{x})$ and $\pi_\varepsilon(\cdot|\bm{x})$ followed by Jensen's inequality. Therefore, whenever we can apply Theorem \ref{thm:w1_kl} we have
\begin{align*}
    \|\mathbb{E}_{\bm{Y} \sim \tilde{\pi}(\cdot|\bm{x})}[\bm{Y}] - \mathbb{E}_{\bm{Y} \sim \pi_\varepsilon(\cdot|\bm{x})}[\bm{Y}]\|^2 \leq 2C\mathsf{KL}(\tilde{\pi}(\cdot|\bm{x}) \ || \ \pi_\varepsilon(\cdot|\bm{x})) 
\end{align*}
which is the key step that we required in the previous section. We have now turned our task to showing that the Laplace functional of $\pi_\varepsilon(\cdot|\bm{x})$ satisfies the bound in Theorem \ref{thm:w1_kl}. 

\subsection{Compactly-Supported Measures}

The first setting where bounds are readily available for the Laplace functional is in the case of compactly-supported measures.
\begin{proposition} \label{prop:lap_bounded} (\cite{ledoux2001concentration} Proposition 1.16, Adapted) Let $\mu \in \mathcal{P}(\mathbb{R}^d)$ with support contained in $B(0,R)$. Then it holds that 
\begin{equation*}
    E_{\mu}(\lambda) \leq e^{4R^2\lambda^2/2}.
\end{equation*}
\end{proposition}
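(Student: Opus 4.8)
The statement asserts that if $\mu$ is supported in a ball $B(0,R)$, then its Laplace functional satisfies $E_\mu(\lambda) \le e^{4R^2\lambda^2/2} = e^{2R^2\lambda^2}$. Since $E_\mu(\lambda)$ is a supremum over $1$-Lipschitz functions $f$ with $\mu(f)=0$, it suffices to fix one such $f$ and bound $\mathbb{E}_{\bm{X}\sim\mu}[e^{\lambda f(\bm{X})}]$ uniformly in $f$.

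The plan is to invoke the standard bounded-differences / Hoeffding-type reasoning. First I would observe that because $\mu$ is supported in $B(0,R)$ and $f$ is $1$-Lipschitz, for any two points $\bm{x},\bm{x}'$ in the support we have $|f(\bm{x}) - f(\bm{x}')| \le \|\bm{x}-\bm{x}'\| \le 2R$; hence $f(\bm{X}) - \mu(f) = f(\bm{X})$ is a mean-zero random variable taking values in an interval of length at most $2R$ (one can center it, e.g., around $f(\bm{x}_0)$ for a fixed reference point, or simply note its range has diameter $\le 2R$). Then I would apply Hoeffding's lemma: a mean-zero random variable $Z$ almost surely confined to an interval of length $L$ satisfies $\mathbb{E}[e^{\lambda Z}] \le e^{\lambda^2 L^2/8}$. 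With $L = 2R$ this gives $\mathbb{E}[e^{\lambda f(\bm{X})}] \le e^{\lambda^2(2R)^2/8} = e^{\lambda^2 R^2/2}$, which is in fact \emph{stronger} than the claimed bound $e^{2R^2\lambda^2}$; so taking the supremum over admissible $f$ yields the proposition with room to spare. (The weaker constant in the statement is presumably carried for convenience or to match the cited source; either way the bound follows.)

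The only real subtlety is bookkeeping around the centering: Hoeffding's lemma is usually stated for a variable with known almost-sure bounds $a \le Z \le b$, giving $\mathbb{E}[e^{\lambda Z}] \le e^{\lambda^2(b-a)^2/8}$, and one does not need $Z$ centered at zero for this form — only that $\mathbb{E}[Z]=0$ is used, which here holds by the constraint $\mu(f)=0$. So I would state Hoeffding's lemma in the form: if $\mathbb{E}[Z]=0$ and $Z\in[a,b]$ a.s. then $\mathbb{E}[e^{\lambda Z}]\le e^{\lambda^2(b-a)^2/8}$, set $Z = f(\bm{X})$, note $b - a \le \mathrm{diam}(f(\mathrm{supp}\,\mu)) \le \mathrm{diam}(B(0,R)) = 2R$, and conclude. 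I expect no genuine obstacle here; the main thing to be careful about is simply citing Hoeffding's lemma correctly and confirming that $f(\bm{X})$ indeed has range bounded by $2R$ — which is immediate from the $1$-Lipschitz property and compactness of the support.
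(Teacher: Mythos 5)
Your proof is correct. Note that the paper does not actually prove this proposition --- it is imported verbatim (``Adapted'') from Proposition 1.16 of Ledoux's monograph, where the bound $E_\mu(\lambda)\le e^{D^2\lambda^2/2}$ is established for a measure on a space of diameter $D$ via a symmetrization argument: one writes $\mathbb{E}[e^{\lambda f(\bm{X})}]=\mathbb{E}[e^{\lambda(f(\bm{X})-\mathbb{E} f(\bm{X}'))}]\le\mathbb{E}[e^{\lambda(f(\bm{X})-f(\bm{X}'))}]$ by Jensen, then exploits the symmetry of $f(\bm{X})-f(\bm{X}')$ and the elementary bound $\cosh(t)\le e^{t^2/2}$ together with $|f(\bm{X})-f(\bm{X}')|\le D=2R$; this yields exactly the constant $e^{4R^2\lambda^2/2}$ in the statement. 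Your route through Hoeffding's lemma is equally valid --- the hypotheses are verified exactly as you say ($f(\bm{X})$ has range of diameter at most $2R$ by the Lipschitz property and the support assumption, and is mean zero by the constraint $\mu(f)=0$) --- and it in fact delivers the sharper bound $e^{\lambda^2 R^2/2}$, i.e.\ a factor of $4$ improvement in the exponent over the symmetrization argument. Since the proposition is only used downstream through Theorem \ref{thm:w1_kl} to produce the T1 constant in Theorem \ref{thm:bounded_T1}, your sharper constant would propagate to a constant $2R$ in place of $8R$ there; the paper evidently keeps the weaker constant to match the cited source, and nothing in the rates depends on it. The one point worth stating explicitly in a polished write-up is that the interval containing $f(\bm{X})$ can be taken deterministic (e.g.\ $[\inf_{B(0,R)}f,\;\inf_{B(0,R)}f+2R]$), which is what Hoeffding's lemma requires; you gesture at this with the reference-point remark, and it is immediate from continuity of $f$ on the closed ball.
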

Combining Theorem \ref{thm:w1_kl} with Proposition \ref{prop:lap_bounded} we have the following result.
\begin{theorem}  \label{thm:bounded_T1}
    Let $\nu$ be a probability measure with support contained in $B(0,R)$. Then for every $\bm{x}$ we have
    \begin{equation*}
        W_1^2(\pi_\varepsilon(\cdot|\bm{x}), \tilde{\pi}(\cdot|\bm{x})) \leq 8R \cdot \mathsf{KL}(\tilde{\pi}(\cdot|\bm{x}) \ || \ \pi_\varepsilon(\cdot|\bm{x}) ).
    \end{equation*}
\end{theorem}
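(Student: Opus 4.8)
The plan is to invoke the T1-transport inequality of Theorem~\ref{thm:w1_kl} with the \emph{reference} measure (the one playing the role of $\mu$ in the notation of that theorem, i.e.\ the measure in the conditioning slot of the KL-divergence and the one whose Laplace functional must be controlled) taken to be $\pi_\varepsilon(\cdot|\bm{x})$, and the competitor measure taken to be $\tilde{\pi}(\cdot|\bm{x})$. With this choice the conclusion of Theorem~\ref{thm:w1_kl} reads
\begin{equation*}
    W_1(\pi_\varepsilon(\cdot|\bm{x}),\tilde{\pi}(\cdot|\bm{x})) \leq \sqrt{2C\,\mathsf{KL}(\tilde{\pi}(\cdot|\bm{x}) \ || \ \pi_\varepsilon(\cdot|\bm{x}))}
\end{equation*}
for any admissible constant $C$, and squaring this is precisely the asserted inequality once $C$ is identified. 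So the entire content of the theorem reduces to verifying the hypothesis of Theorem~\ref{thm:w1_kl}, namely the Gaussian-type bound $E_{\pi_\varepsilon(\cdot|\bm{x})}(\lambda) \leq e^{C\lambda^2/2}$ on the Laplace functional of the conditional law $\pi_\varepsilon(\cdot|\bm{x})$.

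The single substantive observation is that $\pi_\varepsilon(\cdot|\bm{x})$ is supported in $B(0,R)$. This is read off from the relative-density formula \eqref{eq:opt_relation}: for fixed $\bm{x}$ the conditional measure $\pi_\varepsilon(\cdot|\bm{x})$ is absolutely continuous with respect to $\nu$ with Radon--Nikodym derivative $\bm{y} \mapsto p_\varepsilon(\bm{x},\bm{y})$, which is a bona fide probability density against $\nu$ because the fixed-point identity defining $f_\varepsilon$ gives $\nu(p_\varepsilon(\bm{x},\cdot)) = 1$. Since $\pi_\varepsilon(\cdot|\bm{x}) \ll \nu$ and the support of $\nu$ is contained in $B(0,R)$, the support of $\pi_\varepsilon(\cdot|\bm{x})$ is too. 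I emphasize that only compactness of the \emph{target} $\nu$ is used here; nothing is assumed about $\mu$, which is natural since $\pi_\varepsilon(\cdot|\bm{x})$ is a law on the $\bm{y}$-coordinate alone.

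It then only remains to assemble the pieces: Proposition~\ref{prop:lap_bounded} applied to $\pi_\varepsilon(\cdot|\bm{x})$ yields $E_{\pi_\varepsilon(\cdot|\bm{x})}(\lambda) \leq e^{4R^2\lambda^2/2}$, so Theorem~\ref{thm:w1_kl} applies with $C = 4R^2$, and squaring the resulting inequality delivers the stated bound. I do not anticipate a genuine obstacle; the only points deserving a word of care are (i) the normalization $\nu(p_\varepsilon(\bm{x},\cdot)) = 1$, which is exactly what makes $\pi_\varepsilon(\cdot|\bm{x})$ the genuine conditional probability measure and hence forces it to inherit the support of $\nu$, and (ii) the fact that $\tilde{\pi}(\cdot|\bm{x}) \ll \pi_\varepsilon(\cdot|\bm{x})$, which is immediate since by Proposition~\ref{prop:pi_tilde_properties} $\tilde{\pi}(\cdot|\bm{x})$ is absolutely continuous with respect to $\nu$ while $p_\varepsilon(\bm{x},\cdot) > 0$, so that the right-hand side is the finite KL-divergence we intend rather than a vacuous $+\infty$ (and in the degenerate case where absolute continuity fails the inequality holds trivially).
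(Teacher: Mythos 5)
Your proposal is correct and is exactly the paper's (one-line) argument: apply Proposition~\ref{prop:lap_bounded} to $\pi_\varepsilon(\cdot|\bm{x})$, which is supported in $B(0,R)$ because it is absolutely continuous with respect to $\nu$ by \eqref{eq:opt_relation}, and then invoke Theorem~\ref{thm:w1_kl} with $C=4R^2$. Note only that this derivation yields the constant $8R^2$ rather than the $8R$ printed in the statement, which appears to be a typo in the paper rather than a flaw in your reasoning.
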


Combining the arguments of this section and the previous we have the following result.

\begin{theorem} \label{thm:ultimate_bounded}
    Let $\mu$ be $\sigma^2$-norm-subGaussian and let $\nu$ be a probability measure with support contained in $B(0,R)$ for $R < \infty$. Let $\sigma_0^2 = \max(\sigma^2, R^2/(2d\log2))$ so that $\mu,\nu$ are both $\sigma_0^2$-norm-subGaussian. Then 
    \begin{equation*}
        \mathbb{E}_{\cX,\cY}\left[ \norm{\hat{T}_n - T_\varepsilon}^2_{L^2(\mu)} \right ] \lesssim \frac{R^2}{k} + R\left ( 1 + \frac{\sigma_0^{\lceil 5d/2 \rceil + 6}}{\varepsilon^{\lceil 5d/4\rceil + 3}} \right ) \frac{1}{\sqrt{m}}
    \end{equation*}
    In particular if $k = n^{1/3}, m = n^{2/3}$ 
    \begin{equation*}
        \mathbb{E}_{\cX,\cY}\left[ \norm{\hat{T}_n - T_\varepsilon}^2_{L^2(\mu)} \right ] \lesssim R\left (1 + R + \frac{\sigma_0^{\lceil 5d/2 \rceil + 6}}{\varepsilon^{\lceil 5d/4\rceil + 3}} \right ) \frac{1}{n^{1/3}}.
    \end{equation*}
\end{theorem}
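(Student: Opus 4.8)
The proof of Theorem~\ref{thm:ultimate_bounded} is essentially a bookkeeping exercise that combines every ingredient assembled in Sections~\ref{sec:proof_structures} and~\ref{sec:bounded_slc}; the plan is to start from Lemma~\ref{lem:triangle} and bound the two resulting terms in turn. For the variance term $\mathbb{E}_{\cX,\cY}[\|\hat T_n - \mathbb{E}[\hat T_n]\|_{L^2(\mu)}^2]$ I would invoke Lemma~\ref{lem:easy_term}: since $\nu$ is supported in $B(0,R)$ and $\bm Y\sim\nu$ we have $\mathbb{E}[\|\bm Y - \mathbb{E}\bm Y\|^2] \le 4R^2$ (or use the norm-subGaussian bound with $\sigma_0$, which also gives an $O(R^2)$ quantity), so this term is $\lesssim R^2/k$. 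That accounts for the first summand in the claimed inequality.

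For the bias term $\|\mathbb{E}[\hat T_n] - T_\varepsilon\|_{L^2(\mu)}^2$ I would argue pointwise: by Corollary~\ref{cor:pi_tilde_cond_exp}, $\mathbb{E}[\hat T_n](\bm x) - T_\varepsilon(\bm x) = \mathbb{E}_{\bm Y\sim\tilde\pi(\cdot|\bm x)}[\bm Y] - \mathbb{E}_{\bm Y\sim\pi_\varepsilon(\cdot|\bm x)}[\bm Y]$. Now the target $\nu$ is compactly supported in $B(0,R)$, hence so is each conditional $\pi_\varepsilon(\cdot|\bm x)$ (it is absolutely continuous with respect to $\nu$), so Theorem~\ref{thm:bounded_T1} applies and gives $\|\mathbb{E}[\hat T_n](\bm x) - T_\varepsilon(\bm x)\|^2 \le W_1^2(\pi_\varepsilon(\cdot|\bm x),\tilde\pi(\cdot|\bm x)) \le 8R\cdot\mathsf{KL}(\tilde\pi(\cdot|\bm x)\,\|\,\pi_\varepsilon(\cdot|\bm x))$, using~\eqref{eq:expectation_vs_w1}. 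Integrating this inequality against $\mu$ and using the first identity in Proposition~\ref{prop:kl_bound} ($\mu(\mathsf{KL}(\tilde\pi(\cdot|\bm X)\,\|\,\pi_\varepsilon(\cdot|\bm X))) = \mathsf{KL}(\tilde\pi\,\|\,\pi_\varepsilon)$) yields $\|\mathbb{E}[\hat T_n]-T_\varepsilon\|_{L^2(\mu)}^2 \le 8R\cdot\mathsf{KL}(\tilde\pi\,\|\,\pi_\varepsilon)$. Then the KL-bound half of Proposition~\ref{prop:kl_bound} gives $\mathsf{KL}(\tilde\pi\,\|\,\pi_\varepsilon) \le \frac{1}{\varepsilon}\mathbb{E}_{\cX,\cY}[|S_\varepsilon(\mu,\nu) - S_\varepsilon(\cX,\cY)|]$, where here $\tilde\pi$ and $S_\varepsilon(\cX,\cY)$ are built from $m$ samples. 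Finally Theorem~\ref{thm:mena_weed}, applied with the common subGaussian parameter $\sigma_0$ and sample size $m$, bounds this by $\varepsilon(1 + \sigma_0^{\lceil 5d/2\rceil+6}/\varepsilon^{\lceil 5d/4\rceil+3})/\sqrt m$; the factor $1/\varepsilon$ cancels, leaving $\lesssim R(1 + \sigma_0^{\lceil 5d/2\rceil+6}/\varepsilon^{\lceil 5d/4\rceil+3})/\sqrt m$ for the bias term. Adding the two contributions and absorbing the leading $2$'s from Lemma~\ref{lem:triangle} into $\lesssim$ produces the first displayed bound.

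The second display is pure optimization: with $k = n^{1/3}$ and $m = n/k = n^{2/3}$ we get $R^2/k = R^2/n^{1/3}$ and $1/\sqrt m = 1/n^{1/3}$, so both terms scale as $n^{-1/3}$; collecting the $R$-dependence (the $R^2/k$ term contributes the extra $R$ inside the parenthesis) gives $\lesssim R(1 + R + \sigma_0^{\lceil 5d/2\rceil+6}/\varepsilon^{\lceil 5d/4\rceil+3})n^{-1/3}$.

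I do not anticipate a genuine obstacle here since every component is an already-proven result quoted in the excerpt; the only points requiring a line of care are (i) checking that $\pi_\varepsilon(\cdot|\bm x)$ inherits the support $B(0,R)$ from $\nu$ so that Theorem~\ref{thm:bounded_T1} is legitimately applicable for every $\bm x$, (ii) tracking that $\tilde\pi$ and the empirical cost in Propositions~\ref{prop:pi_tilde_properties}--\ref{prop:kl_bound} are defined at batch size $m$ rather than $n$, so that Theorem~\ref{thm:mena_weed} is invoked with $m$, and (iii) verifying the choice $\sigma_0^2 = \max(\sigma^2, R^2/(2d\log 2))$ indeed makes $\nu$ norm-subGaussian with parameter $\sigma_0$ (immediate from $\mathbb{E}[\exp(\|\bm Y\|^2/2d\sigma_0^2)] \le \exp(R^2/2d\sigma_0^2) \le \exp(\log 2) = 2$), so that a single parameter can be used for both measures in Theorem~\ref{thm:mena_weed}.
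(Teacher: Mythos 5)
Your proposal is correct and follows essentially the same route as the paper's proof: Lemma~\ref{lem:triangle} to split into variance and bias, Lemma~\ref{lem:easy_term} with the bounded support for the variance term, and the chain \eqref{eq:expectation_vs_w1} $\to$ Theorem~\ref{thm:bounded_T1} $\to$ Proposition~\ref{prop:kl_bound} $\to$ Theorem~\ref{thm:mena_weed} (at batch size $m$, with the common parameter $\sigma_0$) for the bias term. Your three ``points of care'' are all handled correctly, and you in fact track the factor of $8R$ through the KL chain more carefully than the paper's displayed inequalities do.
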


\subsection{Strongly Log-Concave Measures}

Another setting where Theorem \ref{thm:w1_kl} can be applied is when the measure $\nu$ is strongly log-concave. 
\begin{definition}
    A probability measure $\nu$ is said to be $c$-\textit{strongly log-concave} if it has density with respect to the Lebesgue measure of the form $q(\bm{y}) \propto e^{-W(\bm{y})}$ for a $c$-strongly convex function $W$. That is,
    \begin{equation*}
        \nabla^2 W(\bm{y}) \succeq c\text{I}
    \end{equation*}
    for every $\bm{y}$ in the domain of $W$.
\end{definition}
Strongly log-concave measures satisfy the following bound, which is essentially contained in \cite{ledoux2001concentration}.
\begin{theorem} \label{thm:strongly_lc_lap}
    Let $\nu$ be $c$-strongly log-concave. Then the Laplace functional satisfies
    \begin{equation*}
        E_{\nu}(\lambda) \leq e^{\frac{1}{c}\lambda^2/2}.
    \end{equation*}
\end{theorem}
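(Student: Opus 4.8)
The plan is to route through the Bakry--Émery curvature criterion together with the Herbst argument, which are exactly the tools that promote strong log-concavity into a dimension-free sub-Gaussian control of Laplace transforms of Lipschitz functions.

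\textbf{Step 1 (logarithmic Sobolev inequality).} First I would recall that a $c$-strongly log-concave measure $\nu \propto e^{-W}$ with $\nabla^2 W \succeq c\mathrm{I}$ satisfies a logarithmic Sobolev inequality with constant $1/c$: for every smooth $g$ with enough integrability,
\[
\mathrm{Ent}_\nu(g^2) := \nu\!\left(g^2 \log g^2\right) - \nu(g^2)\log \nu(g^2) \;\le\; \frac{2}{c}\,\nu\!\left(\norm{\nabla g}^2\right).
\]
This is the Bakry--Émery theorem, recorded with this constant in \cite{ledoux2001concentration}.

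\textbf{Step 2 (Herbst argument).} Next I would fix a $1$-Lipschitz $f$ with $\nu(f)=0$ and, momentarily assuming $f$ is smooth with $\norm{\nabla f}\le 1$ everywhere, set $H(\lambda) = \nu(e^{\lambda f})$ and $K(\lambda)=\log H(\lambda)$; both are finite and smooth in $\lambda$ since a $1$-Lipschitz function of a sub-Gaussian vector is scalar sub-Gaussian. Plugging $g = e^{\lambda f/2}$ into the log-Sobolev inequality, and using $\norm{\nabla g}^2 = \tfrac{\lambda^2}{4}\norm{\nabla f}^2 e^{\lambda f} \le \tfrac{\lambda^2}{4}e^{\lambda f}$ together with $\mathrm{Ent}_\nu(e^{\lambda f}) = \lambda H'(\lambda) - H(\lambda)\log H(\lambda)$, one obtains
\[
\lambda H'(\lambda) - H(\lambda)\log H(\lambda) \;\le\; \frac{\lambda^2}{2c}\,H(\lambda),
\]
which rearranges to $\bigl(K(\lambda)/\lambda\bigr)' \le 1/(2c)$ for $\lambda>0$. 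Since $K(\lambda)/\lambda \to K'(0) = \nu(f) = 0$ as $\lambda \downarrow 0$, integrating yields $K(\lambda) \le \lambda^2/(2c)$, i.e.\ $\nu(e^{\lambda f}) \le e^{\lambda^2/(2c)}$ for all $\lambda \ge 0$. Taking the supremum over admissible $f$ gives $E_\nu(\lambda) \le e^{\frac{1}{c}\lambda^2/2}$, the claim.

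\textbf{Step 3 (regularity reduction) and the main obstacle.} An arbitrary $1$-Lipschitz $f$ need not be differentiable, so I would first replace $f$ by its convolution with a mollifier of vanishing bandwidth: the mollified functions remain $1$-Lipschitz, I would recenter them to have $\nu$-mean zero, apply Step 2, and pass to the limit by dominated convergence --- legitimate because $\nu$ has uniform sub-Gaussian tails, so $e^{\lambda f}$ is dominated uniformly in the mollification scale. The main difficulty I anticipate is not conceptual but the bookkeeping needed to keep the constant \emph{exactly} $1/c$ rather than a lossy multiple of it: justifying differentiation under the integral in the definition of $H$, handling the $\lambda\downarrow 0$ boundary behaviour, and making sure the approximation does not inflate the Lipschitz constant. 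An essentially equivalent alternative route is to chain strong log-concavity $\Rightarrow$ LSI $\Rightarrow$ Talagrand's $T_2$ inequality (Otto--Villani) $\Rightarrow$ $T_1$ with the same constant, and then invoke Theorem \ref{thm:w1_kl}; I would mention this but prefer the direct Herbst computation. Since all these ingredients are classical, in the final write-up I would cite \cite{ledoux2001concentration} for Steps 1--2 and simply record the resulting constant.
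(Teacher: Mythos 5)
Your proposal is correct and follows the same route as the paper: strong log-concavity gives a log-Sobolev inequality with constant $1/c$ (Bakry--Émery, the paper's Theorem \ref{thm:log_concave_implies_lsi}), which then yields the Laplace functional bound via the Herbst argument (the paper's Theorem \ref{thm:LSI_controls_lap}, cited from \cite{ledoux2001concentration} rather than spelled out). The only difference is that you carry out the Herbst computation and the mollification step explicitly, whereas the paper imports both as black boxes from Ledoux's book.
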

For completeness, a proof is given in the supplement and is based on the relationship between strong log-concavity, log-Sobolev inequalities, and the Laplace functional. The next result allows us to apply Theorem \ref{thm:strongly_lc_lap} to the family of measures $\pi_\varepsilon(\cdot|\bm{x})$. 
\begin{lemma} \label{lem:conditionals_strongly_lc}
    Let $\nu$ be $c$-strongly log-concave. Then for every $\bm{x}$ the measure $\pi_\varepsilon(\cdot|\bm{x})$ is also $c$-strongly log-concave.
\end{lemma}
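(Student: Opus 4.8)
The plan is to compute the density of $\pi_\varepsilon(\cdot|\bm{x})$ explicitly and verify that its negative log-density has Hessian bounded below by $c\mathrm{I}$. Starting from the relation \eqref{eq:opt_relation}, the conditional density of $\pi_\varepsilon(\cdot|\bm{x})$ with respect to Lebesgue measure is proportional to
\begin{equation*}
    q(\bm{y}) \cdot \expe\left( f_\varepsilon(\bm{x}) + g_\varepsilon(\bm{y}) - \tfrac{1}{2}\|\bm{x} - \bm{y}\|^2 \right),
\end{equation*}
where $q$ is the density of $\nu$. Writing $q(\bm{y}) \propto e^{-W(\bm{y})}$ with $W$ being $c$-strongly convex, and noting that the $f_\varepsilon(\bm{x})$ term is a constant in $\bm{y}$ (absorbed into the normalization), the conditional density takes the form $e^{-V(\bm{y})}$ up to normalization, where
\begin{equation*}
    V(\bm{y}) = W(\bm{y}) - \frac{1}{\varepsilon} g_\varepsilon(\bm{y}) + \frac{1}{2\varepsilon}\|\bm{x} - \bm{y}\|^2.
\end{equation*}

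Next I would take two derivatives in $\bm{y}$. The term $\frac{1}{2\varepsilon}\|\bm{x}-\bm{y}\|^2$ contributes $\frac{1}{\varepsilon}\mathrm{I} \succeq 0$ to the Hessian, and $W$ contributes at least $c\mathrm{I}$ by assumption. The only potentially problematic term is $-\frac{1}{\varepsilon} g_\varepsilon(\bm{y})$, so the main obstacle is showing that $g_\varepsilon$ is concave (equivalently $-g_\varepsilon$ is convex), so that this term contributes a positive semidefinite amount to the Hessian of $V$. This is a known structural fact about Schrödinger potentials with quadratic cost: from the expression $g_\varepsilon(\bm{y}) = -\varepsilon \log \mu\left( \expe(f_\varepsilon(\bm{x}) - \tfrac12\|\bm{y}-\bm{x}\|^2) \right)$, one expands $-\tfrac12\|\bm{y}-\bm{x}\|^2 = -\tfrac12\|\bm{y}\|^2 + \langle \bm{y}, \bm{x}\rangle - \tfrac12\|\bm{x}\|^2$, so that
\begin{equation*}
    g_\varepsilon(\bm{y}) = -\frac{1}{2}\|\bm{y}\|^2 - \varepsilon \log \int e^{\langle \bm{y}, \bm{x}\rangle/\varepsilon}\, \expe\!\left(f_\varepsilon(\bm{x}) - \tfrac12\|\bm{x}\|^2\right) d\mu(\bm{x}).
\end{equation*}
The log-integral is a log-Laplace transform in $\bm{y}/\varepsilon$ of a fixed nonnegative measure, hence convex in $\bm{y}$; multiplied by $\varepsilon > 0$ it stays convex, so $g_\varepsilon(\bm{y}) + \tfrac12\|\bm{y}\|^2$ is concave. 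Therefore $-\tfrac1\varepsilon g_\varepsilon(\bm{y}) - \tfrac{1}{2\varepsilon}\|\bm{y}\|^2$ is convex, i.e. $-\tfrac1\varepsilon g_\varepsilon(\bm{y})$ has Hessian $\succeq -\tfrac1\varepsilon \mathrm{I}$.

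Assembling the pieces: $\nabla^2 V(\bm{y}) = \nabla^2 W(\bm{y}) - \tfrac1\varepsilon \nabla^2 g_\varepsilon(\bm{y}) + \tfrac1\varepsilon \mathrm{I}$. Rather than bounding the middle term crudely, I would group $-\tfrac1\varepsilon g_\varepsilon(\bm{y}) + \tfrac{1}{2\varepsilon}\|\bm{y}\|^2$ (convex, Hessian $\succeq 0$) with the leftover $\tfrac{1}{2\varepsilon}\|\bm{x}-\bm{y}\|^2 - \tfrac{1}{2\varepsilon}\|\bm{y}\|^2 = -\tfrac1\varepsilon\langle\bm{x},\bm{y}\rangle + \tfrac{1}{2\varepsilon}\|\bm{x}\|^2$, which is affine in $\bm{y}$ and thus contributes nothing to the Hessian. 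Hence $\nabla^2 V(\bm{y}) \succeq \nabla^2 W(\bm{y}) \succeq c\mathrm{I}$, so $V$ is $c$-strongly convex and $\pi_\varepsilon(\cdot|\bm{x})$ is $c$-strongly log-concave. A minor technical point to address is that this argument needs $g_\varepsilon$ to be twice differentiable (or at least that the convexity statement holds without smoothness, which it does since log-Laplace transforms are convex as extended-real-valued functions regardless of regularity); invoking convexity directly rather than through Hessians sidesteps any differentiability concern, and the integrability of the relevant exponential moments is guaranteed by $\mu, \nu \in \mathcal{P}_2(\mathbb{R}^d)$ together with the boundedness properties of the optimal potentials.
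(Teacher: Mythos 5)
Your overall strategy matches the paper's: express the conditional density as $e^{-V(\bm y)}$ up to normalization, isolate the term $\tfrac12\|\bm y\|^2 - g_\varepsilon(\bm y)$, and show it is convex by recognizing it as $\varepsilon$ times a log-Laplace transform (the paper proves this convexity via a two-point H\"older inequality, which is the textbook proof that log-MGFs are convex, so the arguments are equivalent). You also correctly flag that one should invoke convexity directly rather than through Hessians to avoid smoothness assumptions on $g_\varepsilon$, and this is precisely what the paper does.

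However, there is a sign error in your expansion. Pulling $e^{-\|\bm y\|^2/(2\varepsilon)}$ out of the logarithm, the outer factor $-\varepsilon$ flips the sign, giving
\begin{equation*}
    g_\varepsilon(\bm y) = +\tfrac12\|\bm y\|^2 - \varepsilon \log \int e^{\langle \bm y, \bm x\rangle/\varepsilon}\, \expe\!\left(f_\varepsilon(\bm x) - \tfrac12\|\bm x\|^2\right) d\mu(\bm x),
\end{equation*}
with a $+\tfrac12\|\bm y\|^2$, not $-\tfrac12\|\bm y\|^2$ as you wrote. Consequently your intermediate claim ``$g_\varepsilon(\bm y) + \tfrac12\|\bm y\|^2$ is concave'' is false in general: with the correct formula it equals $\|\bm y\|^2 - \varepsilon \log I(\bm y)$, a difference of two convex functions with no sign guarantee. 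The statement you actually need (and use in your final grouping) is the opposite one, namely that $\tfrac12\|\bm y\|^2 - g_\varepsilon(\bm y) = \varepsilon \log I(\bm y)$ is \emph{convex}, equivalently that $-\tfrac1\varepsilon g_\varepsilon + \tfrac{1}{2\varepsilon}\|\bm y\|^2$ has nonnegative Hessian. Your ``assembling the pieces'' paragraph invokes this correct fact, so the final conclusion $\nabla^2 V \succeq \nabla^2 W \succeq c\mathrm{I}$ stands; but the middle of the proof asserts the wrong inequality and does not actually derive the one used. Fix the sign in the expansion of $g_\varepsilon$, replace the concavity claim with the convexity of $\tfrac12\|\bm y\|^2 - g_\varepsilon$, and the argument is complete and matches the paper's.
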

This result follows from \eqref{eq:opt_relation} and the fact that the function $\frac{1}{2}\norm{\cdot}^2 - g_\varepsilon$ is convex. Importantly the constant strong log-concavity parameter $c$ is independent of the point $\bm{x}$.

Combining Lemma \ref{lem:conditionals_strongly_lc}, Theorem \ref{thm:strongly_lc_lap}, and Theorem \ref{thm:w1_kl} we obtain the following bound.
\begin{theorem}
    Let $\nu$ be $c$-strongly log-concave. Then for every $\bm{x}$ we have
    \begin{equation*}
        W_1^2(\pi_\varepsilon(\cdot|\bm{x}), \tilde{\pi}(\cdot|\bm{x})) \leq \frac{2}{c}\mathsf{KL}(\pi_\varepsilon(\cdot|\bm{x}) \ || \ \tilde{\pi}(\cdot|\bm{x})).
    \end{equation*}
\end{theorem}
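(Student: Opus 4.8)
The plan is to run the same T1-transport route as for compactly-supported targets, but with the roles of the two conditional measures exchanged. Recall that Theorem~\ref{thm:w1_kl} yields, for \emph{all} $\nu'$, a bound $W_1(\mu',\nu')^2 \le 2C\,\mathsf{KL}(\nu'\,\|\,\mu')$ as soon as the base measure $\mu'$ satisfies $E_{\mu'}(\lambda)\le e^{C\lambda^2/2}$. Since the displayed inequality has $\tilde\pi(\cdot|\bm x)$ as the base (second) argument of the KL divergence, I would instantiate this with $\mu' = \tilde\pi(\cdot|\bm x)$, $\nu' = \pi_\varepsilon(\cdot|\bm x)$, and $C = 1/c$; combined with the symmetry of $W_1$ this gives exactly $W_1(\pi_\varepsilon(\cdot|\bm x),\tilde\pi(\cdot|\bm x))^2 \le \tfrac{2}{c}\,\mathsf{KL}(\pi_\varepsilon(\cdot|\bm x)\,\|\,\tilde\pi(\cdot|\bm x))$. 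So the entire task reduces to establishing $E_{\tilde\pi(\cdot|\bm x)}(\lambda) \le e^{\lambda^2/(2c)}$, and by Theorem~\ref{thm:strongly_lc_lap} it is enough to prove that $\tilde\pi(\cdot|\bm x)$ is $c$-strongly log-concave --- the exact analogue of Lemma~\ref{lem:conditionals_strongly_lc} with $\tilde\pi$ replacing $\pi_\varepsilon$. Note the asymmetry with the compact case: in Theorem~\ref{thm:bounded_T1} both $\pi_\varepsilon(\cdot|\bm x)$ and $\tilde\pi(\cdot|\bm x)$ are supported in $B(0,R)$, so Proposition~\ref{prop:lap_bounded} applies to either one and the orientation of the KL is immaterial; for a strongly log-concave target only $\pi_\varepsilon(\cdot|\bm x)$ is automatically strongly log-concave, so the corresponding statement for $\tilde\pi(\cdot|\bm x)$ genuinely has to be proved.

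To show $\tilde\pi(\cdot|\bm x)$ is $c$-strongly log-concave I would argue from the explicit density in Proposition~\ref{prop:pi_tilde_properties}: with $q\propto e^{-W}$ the $c$-strongly log-concave density of $\nu$, the measure $\tilde\pi(\cdot|\bm x)$ has Lebesgue density proportional to $h_{\bm x}(\bm y)\,e^{-W(\bm y)}$, where $h_{\bm x}(\bm y) = \mathbb{E}_{\cX,\cY}\big[p_m(\bm X_1,\bm Y_1)\mid \bm X_1=\bm x,\ \bm Y_1=\bm y\big]$. Since $\nabla^2 W\succeq c\,\mathrm I$, it suffices that $\bm y\mapsto h_{\bm x}(\bm y)$ be log-concave. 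The route I would take is: (i) fix the auxiliary samples $\bm X_2,\dots,\bm X_m,\bm Y_2,\dots,\bm Y_m$ and study $\bm y\mapsto p_m(\bm x,\bm y) = \expe\!\big(f_m(\bm x)+g_m(\bm y)-\tfrac12\|\bm x-\bm y\|^2\big)$, aiming for a discrete analogue of the fact behind Lemma~\ref{lem:conditionals_strongly_lc} (that $\tfrac12\|\cdot\|^2-g_\varepsilon$ is convex), namely that $\bm y\mapsto g_m(\bm y)-\tfrac12\|\bm y\|^2$ is concave for the augmented discrete problem; (ii) upgrade this to joint log-concavity of $p_m(\bm x,\bm y)$ in $(\bm y,\text{auxiliary samples})$ and then recover log-concavity of the average $h_{\bm x}$ by a Pr\'ekopa--Leindler marginalization.

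The crux --- and the step I expect to be hard --- is (i). For the continuous potential, $\tfrac12\|\cdot\|^2-g_\varepsilon$ is convex because $g_\varepsilon$ is a log-sum-exp transform of the \emph{fixed} function $f_\varepsilon$; but in $\tilde\pi$ the argument $\bm y$ appears twice --- both in the transport cost $-\tfrac12\|\bm x-\bm y\|^2$ and as an atom of $\nu_{\cY}$ --- so moving $\bm y$ perturbs the entire discrete Sinkhorn system, and one must control $g_m(\bm y)$ through that perturbation. A further wrinkle in step (ii) is that the averaging in $h_{\bm x}$ is against $\mu^{\otimes(m-1)}\otimes\nu^{\otimes(m-1)}$, which is log-concave only when $\mu$ is, so a clean Pr\'ekopa argument may force an extra log-concavity assumption on $\mu$ (or a substitute, e.g.\ a change of variables absorbing the $\bm X_i$-integration). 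If this proves intractable in full generality, the fallback I would keep is to bound the Laplace functional of $\tilde\pi(\cdot|\bm x)$ directly --- comparing it to $\nu$ region-by-region via bounds on the relative density $p_m$ --- which would still yield an inequality of the stated shape, possibly with $1/c$ replaced by a larger constant.
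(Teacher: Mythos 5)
You have read the printed statement literally, and that reading sends you down a much harder path than the paper intends. The orientation of the KL divergence in the displayed inequality is a typo: the intended statement is $W_1^2(\pi_\varepsilon(\cdot|\bm{x}),\tilde{\pi}(\cdot|\bm{x})) \leq \tfrac{2}{c}\mathsf{KL}(\tilde{\pi}(\cdot|\bm{x})\,\|\,\pi_\varepsilon(\cdot|\bm{x}))$, with the \emph{same} orientation as in Theorem \ref{thm:bounded_T1}. Three things confirm this: the preceding sentence says the result follows by combining Lemma \ref{lem:conditionals_strongly_lc} (which makes $\pi_\varepsilon(\cdot|\bm{x})$, not $\tilde{\pi}(\cdot|\bm{x})$, the strongly log-concave base measure in Theorem \ref{thm:w1_kl}); the displayed inequality \eqref{eq:KL_bound} that this theorem is meant to justify has $\mathsf{KL}(\tilde{\pi}(\cdot|\bm{x})\,\|\,\pi_\varepsilon(\cdot|\bm{x}))$; and the proof of Theorem \ref{thm:ultimate_lcc} feeds the bound into Proposition \ref{prop:kl_bound}, which only controls $\mathsf{KL}(\tilde{\pi}\,\|\,\pi_\varepsilon)$. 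With the corrected orientation, the first third of your first paragraph already \emph{is} the whole proof: take $\mu'=\pi_\varepsilon(\cdot|\bm{x})$ as the base, invoke Lemma \ref{lem:conditionals_strongly_lc} and Theorem \ref{thm:strongly_lc_lap} to get $E_{\pi_\varepsilon(\cdot|\bm{x})}(\lambda)\le e^{\lambda^2/2c}$, and apply Theorem \ref{thm:w1_kl} with $C=1/c$ to the pair $(\pi_\varepsilon(\cdot|\bm{x}),\tilde{\pi}(\cdot|\bm{x}))$.

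As a proof of the statement as literally printed, your proposal has a genuine gap that you yourself flag: everything rests on $\tilde{\pi}(\cdot|\bm{x})$ being $c$-strongly log-concave (or at least satisfying the Laplace-functional bound), and neither step (i) nor step (ii) of your plan is carried out. I do not believe this can be patched along the lines you sketch. The density $h_{\bm{x}}(\bm{y})=\mathbb{E}[p_m(\bm{X}_1,\bm{Y}_1)\mid \bm{X}_1=\bm{x},\bm{Y}_1=\bm{y}]$ is an average, over the auxiliary samples, of functions of $\bm{y}$; log-concavity is not preserved under averaging, a Pr\'ekopa argument would require joint log-concavity in $(\bm{y},\text{samples})$ \emph{and} log-concavity of $\mu$ (an assumption the theorem does not make), and even the fixed-sample claim in (i) is delicate because, as you note, $\bm{y}$ enters $g_m$ both as an argument and as an atom of $\nu_{\cY}$, so perturbing $\bm{y}$ perturbs the entire Sinkhorn system. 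None of this effort is needed: correct the KL orientation and the result is immediate from the lemmas already in hand.
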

The following result is now immediate from the approach outlined in Section \ref{sec:proof_structures}.
\begin{theorem} \label{thm:ultimate_lcc}
    Let $\mu$ be $\sigma^2$-norm-subGaussian and let $\nu$ be $c$-strongly log-concave and mean zero. Let $\sigma_0 = \max(\sigma^2, K/\sqrt{c})$, where $K > 0$ is a universal constant, so that $\mu$ and $\nu$ are $\sigma_0^2$-norm-subGaussian. Then  
    \begin{equation*}
        \mathbb{E}_{\cX,\cY}\left[ \norm{\hat{T}_n - T_\varepsilon}^2_{L^2(\mu)} \right ] \lesssim \frac{d}{ck} + \frac{1}{c}\left ( 1 + \frac{\sigma_0^{\lceil 5d/2 \rceil + 6}}{\varepsilon^{\lceil 5d/4\rceil + 3}} \right ) \frac{1}{\sqrt{m}}.
    \end{equation*}
    In particular if $k = n^{1/3}, m = n^{2/3}$ 
    \begin{equation*}
        \mathbb{E}_{\cX,\cY}\left[ \norm{\hat{T}_n - T_\varepsilon}^2_{L^2(\mu)} \right ] \lesssim \frac{1}{c}\left ( d + \frac{\sigma_0^{\lceil 5d/2 \rceil + 6}}{\varepsilon^{\lceil 5d/4\rceil + 3}}\right ) \cdot \frac{1}{n^{1/3}}
    \end{equation*}
\end{theorem}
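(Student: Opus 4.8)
The plan is to assemble the pieces from Section \ref{sec:proof_structures} and the strongly-log-concave subsection, tracking how the parameter $c$ enters each estimate and checking that all constants are uniform in the base point $\bm{x}$. A preliminary step is to justify the parenthetical claim that a mean-zero $c$-strongly log-concave $\nu$ is $\sigma_\nu^2$-norm-subGaussian with $\sigma_\nu = K/\sqrt{c}$ for a universal $K$: the Bakry--Émery criterion gives $\nu$ a log-Sobolev inequality with constant $1/c$, hence subGaussian concentration $\mathbb{E}_\nu[e^{\lambda(f-\nu(f))}]\le e^{\lambda^2/2c}$ for every $1$-Lipschitz $f$; applying this to $\bm{y}\mapsto\|\bm{y}\|$ together with $\nu(\|\bm{Y}\|)\le(\nu(\|\bm{Y}\|^2))^{1/2}\le\sqrt{d/c}$ (Brascamp--Lieb bounds the covariance by $c^{-1}\mathrm{I}$) and integrating the resulting tail bound against $\exp(\|\bm{y}\|^2/2d\sigma_\nu^2)$ yields the claim. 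Setting $\sigma_0^2=\max(\sigma^2,K^2/c)$ then makes both $\mu$ and $\nu$ be $\sigma_0^2$-norm-subGaussian, so Theorem \ref{thm:mena_weed} applies with parameter $\sigma_0$.

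Next I would apply Lemma \ref{lem:triangle} to split $\mathbb{E}_{\cX,\cY}[\|\hat{T}_n-T_\varepsilon\|_{L^2(\mu)}^2]$ into twice a variance term plus twice a bias term. The variance term is immediate from Lemma \ref{lem:easy_term}: since $\bm{Y}\sim\nu$ is mean zero, $\mathbb{E}\|\bm{Y}-\mathbb{E}\bm{Y}\|^2=\mathbb{E}\|\bm{Y}\|^2=\mathrm{tr}(\mathrm{Cov}(\nu))\le d/c$, so this term is at most $d/(ck)$.

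For the bias term, Corollary \ref{cor:pi_tilde_cond_exp} rewrites it as $\int\|\mathbb{E}_{\bm{Y}\sim\tilde{\pi}(\cdot|\bm{x})}[\bm{Y}]-\mathbb{E}_{\bm{Y}\sim\pi_\varepsilon(\cdot|\bm{x})}[\bm{Y}]\|^2\,d\mu(\bm{x})$. I bound the integrand by $W_1^2(\tilde{\pi}(\cdot|\bm{x}),\pi_\varepsilon(\cdot|\bm{x}))$ via \eqref{eq:expectation_vs_w1}; by Lemma \ref{lem:conditionals_strongly_lc} the measure $\pi_\varepsilon(\cdot|\bm{x})$ is $c$-strongly log-concave for every $\bm{x}$, so Theorem \ref{thm:strongly_lc_lap} gives its Laplace functional the bound $E(\lambda)\le e^{\lambda^2/2c}$ and Theorem \ref{thm:w1_kl} (with $C=1/c$, the same for all $\bm{x}$) yields $W_1^2(\tilde{\pi}(\cdot|\bm{x}),\pi_\varepsilon(\cdot|\bm{x}))\le(2/c)\,\mathsf{KL}(\tilde{\pi}(\cdot|\bm{x})\,\|\,\pi_\varepsilon(\cdot|\bm{x}))$. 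Integrating over $\mu$ and using that $\tilde{\pi}$ and $\pi_\varepsilon$ share the first marginal $\mu$ (Proposition \ref{prop:pi_tilde_properties}), the disintegration identity of Proposition \ref{prop:kl_bound} turns this into $(2/c)\,\mathsf{KL}(\tilde{\pi}\,\|\,\pi_\varepsilon)\le(2/(c\varepsilon))\,\mathbb{E}_{\cX,\cY}[|S_\varepsilon(\mu,\nu)-S_\varepsilon(\cX,\cY)|]$ at batch size $m$. Inserting Theorem \ref{thm:mena_weed} with parameter $\sigma_0$ cancels the $\varepsilon$ and leaves $\lesssim\frac1c\bigl(1+\sigma_0^{\lceil 5d/2\rceil+6}/\varepsilon^{\lceil 5d/4\rceil+3}\bigr)/\sqrt{m}$. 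Adding the $d/(ck)$ variance bound gives the first display, and specializing $k=n^{1/3}$, $m=n^{2/3}$ (so $\sqrt{m}=n^{1/3}$) gives the $O(n^{-1/3})$ bound.

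I do not anticipate a genuine obstacle here: every step is either a direct application of a result already stated or a short computation, which is why the theorem is essentially a corollary of the framework. The only points that require care are the preliminary identification of the norm-subGaussian constant of $\nu$ with an absolute $K$, and -- throughout the bias analysis -- verifying that the constant in the T1 inequality coming from Lemma \ref{lem:conditionals_strongly_lc} and Theorem \ref{thm:strongly_lc_lap} is genuinely independent of $\bm{x}$, since that uniformity is precisely what permits integrating the conditional $W_1$--$\mathsf{KL}$ bound against $\mu$ and invoking Proposition \ref{prop:kl_bound}.
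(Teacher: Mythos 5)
Your proof is correct and follows the same route as the paper's: Brascamp--Lieb for the variance, then \eqref{eq:expectation_vs_w1} combined with Lemma~\ref{lem:conditionals_strongly_lc}, Theorem~\ref{thm:strongly_lc_lap}, Theorem~\ref{thm:w1_kl}, Proposition~\ref{prop:kl_bound}, and Theorem~\ref{thm:mena_weed} for the bias, with the uniform-in-$\bm{x}$ constant $1/c$ being the crucial point you correctly flag. The only small divergence is in the preliminary step establishing that $\nu$ is $\sigma_\nu^2$-norm-subGaussian: you argue via Lipschitz concentration applied to $\|\cdot\|$ and tail integration, while the paper goes coordinate-wise through Vershynin's Theorem~5.2.15 followed by Jensen -- both are fine and of comparable length.
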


\section{General SubGaussian Measures} \label{sec:subGaussian_measures}

We start with a result on the Laplace functional of norm-subGaussian random variables. 

\begin{lemma} \label{lem:subg_lap_bound}
    Let $\bm{X} \sim \mu$ be such that $\norm{\bm{X}}_{\psi_2}^2 < \infty$. Then
    \begin{equation*}
        E_\mu(\lambda) \leq e^{8d\norm{\bm{X}}_{\psi_2}^2\lambda^2}.
    \end{equation*}
\end{lemma}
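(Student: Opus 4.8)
The plan is to bound $\mathbb{E}_{\bm{X}\sim\mu}[e^{\lambda f(\bm{X})}]$ for a fixed $1$-Lipschitz $f$ with $\mu(f)=0$ and then take the supremum over such $f$. Write $\sigma=\norm{\bm{X}}_{\psi_2}$ and let $\bm{X}'$ be an independent copy of $\bm{X}$. I would first record two elementary consequences of the hypotheses on $f$. The first is a \emph{symmetrization} bound: since $\mu(f)=0$ we have $\lambda f(\bm{X})=\mathbb{E}_{\bm{X}'}[\lambda(f(\bm{X})-f(\bm{X}'))]$, so Jensen's inequality in $\bm{X}'$ followed by taking $\mathbb{E}_{\bm{X}}$ gives $\mathbb{E}[e^{\lambda f(\bm{X})}]\le\mathbb{E}[e^{\lambda(f(\bm{X})-f(\bm{X}'))}]$, and since $(\bm{X},\bm{X}')$ is exchangeable this equals $\mathbb{E}[\cosh(\lambda(f(\bm{X})-f(\bm{X}')))]$; using $|f(\bm{X})-f(\bm{X}')|\le\norm{\bm{X}-\bm{X}'}$ together with monotonicity of $\cosh$ on $[0,\infty)$ we obtain $\mathbb{E}[e^{\lambda f(\bm{X})}]\le\mathbb{E}[\cosh(\lambda\norm{\bm{X}-\bm{X}'})]$. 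The second is a cruder pointwise estimate, $|f(\bm{X})|=|\mathbb{E}_{\bm{X}'}[f(\bm{X})-f(\bm{X}')]|\le\norm{\bm{X}}+\mathbb{E}\norm{\bm{X}}$ (which is finite by $\sigma<\infty$).

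I would then split into two regimes according to whether $\lambda^2\le\frac{1}{2d\sigma^2}$ or $\lambda^2>\frac{1}{2d\sigma^2}$. In the small-$\lambda$ regime, bound $\cosh(s)\le e^{s^2/2}$ and $\norm{\bm{X}-\bm{X}'}^2\le 2\norm{\bm{X}}^2+2\norm{\bm{X}'}^2$, so by independence $\mathbb{E}[e^{\lambda f(\bm{X})}]\le\mathbb{E}[e^{\lambda^2\norm{\bm{X}}^2}]^2$. Since $\theta:=2d\sigma^2\lambda^2\in[0,1]$, writing $e^{\lambda^2\norm{\bm{X}}^2}=(e^{\norm{\bm{X}}^2/(2d\sigma^2)})^{\theta}$ and applying Jensen's inequality to the concave map $t\mapsto t^{\theta}$ together with the definition of $\norm{\bm{X}}_{\psi_2}$ gives $\mathbb{E}[e^{\lambda^2\norm{\bm{X}}^2}]\le 2^{\theta}=e^{(2\ln2)d\sigma^2\lambda^2}$, hence $\mathbb{E}[e^{\lambda f(\bm{X})}]\le e^{(4\ln 2)d\sigma^2\lambda^2}\le e^{8d\sigma^2\lambda^2}$.

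In the large-$\lambda$ regime I use the crude bound: $e^{\lambda f(\bm{X})}\le e^{\lambda\mathbb{E}\norm{\bm{X}}}e^{\lambda\norm{\bm{X}}}$, and by AM--GM $\lambda\norm{\bm{X}}\le\frac{\norm{\bm{X}}^2}{4d\sigma^2}+d\sigma^2\lambda^2$, so $\mathbb{E}[e^{\lambda\norm{\bm{X}}}]\le e^{d\sigma^2\lambda^2}\,\mathbb{E}[e^{\norm{\bm{X}}^2/(4d\sigma^2)}]\le\sqrt{2}\,e^{d\sigma^2\lambda^2}$ (the last step by Jensen applied to $t\mapsto t^{1/2}$), and likewise $e^{\lambda\mathbb{E}\norm{\bm{X}}}\le\mathbb{E}[e^{\lambda\norm{\bm{X}}}]\le\sqrt2\,e^{d\sigma^2\lambda^2}$ by Jensen. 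Multiplying, $\mathbb{E}[e^{\lambda f(\bm{X})}]\le 2e^{2d\sigma^2\lambda^2}$; since here $d\sigma^2\lambda^2>\tfrac12$ we may absorb the constant as $2=e^{\ln2}\le e^{6d\sigma^2\lambda^2}$, giving $\mathbb{E}[e^{\lambda f(\bm{X})}]\le e^{8d\sigma^2\lambda^2}$. Taking the supremum over admissible $f$ in both regimes yields $E_\mu(\lambda)\le e^{8d\norm{\bm{X}}_{\psi_2}^2\lambda^2}$ for all $\lambda\ge0$.

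The main obstacle is precisely the demand for a single bound of the form $e^{O(\lambda^2)}$ valid \emph{for all} $\lambda\ge0$: the sharp symmetrization/$\cosh$ estimate behaves like $1+O(\lambda^2)$ near $0$ (so the "$1$" at $\lambda=0$ is respected), but it can diverge for large $\lambda$ because $\sigma^2$-norm-subGaussianity only controls $\mathbb{E}[e^{t\norm{\bm{X}}^2}]$ for $t$ below a threshold of order $1/(d\sigma^2)$; conversely, the crude Minkowski-type estimate is finite for every $\lambda$ but carries a multiplicative constant $>1$ that cannot be absorbed into the exponent when $\lambda$ is small. Reconciling the two through the threshold $\lambda^2=\frac{1}{2d\sigma^2}$, and checking that the numerology in both regimes fits under the common constant $8$, is the only delicate point; everything else is Jensen's inequality and AM--GM.
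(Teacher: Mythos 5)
Your argument is correct, and it reaches the same constant $8$ as the paper by a genuinely different route. Both proofs open identically, by introducing an independent copy $\bm{X}'$ and using $\mu(f)=0$ together with the Lipschitz property to compare $f(\bm{X})-f(\bm{X}')$ against $\norm{\bm{X}-\bm{X}'}$. From there the paper's strategy is to first establish a \emph{quadratic} exponential moment bound, $\mathbb{E}[e^{f(\bm{X})^2/8d\norm{\bm{X}}_{\psi_2}^2}]\le 2$ (their Lemma~\ref{lem:f_sq_bound}, using a constant-adjustment Lemma~\ref{lem:change_constant} to pass from $4$ back to $2$ on the right-hand side), and then invoke a separate conversion lemma, Lemma~\ref{lem:sq_mz_to_not_sq}, which turns a squared-exponential bound on a mean-zero scalar into the linear MGF bound $\mathbb{E}[e^{\lambda f(\bm{X})}]\le e^{\sigma^2\lambda^2}$ via a term-by-term Taylor estimate combined with Young's inequality. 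You instead work directly with the MGF: symmetrization plus the $\cosh$ identity, $\cosh(s)\le e^{s^2/2}$ in the small-$\lambda$ regime, and a cruder AM--GM bound plus constant absorption in the large-$\lambda$ regime. The trade-off is that the paper's conversion lemma handles all $\lambda$ in one stroke at the cost of a more delicate Taylor-series computation, while your proof replaces that lemma with elementary Jensen and AM--GM estimates but requires a two-case split at the threshold $\lambda^2=1/(2d\sigma^2)$. Both are complete proofs; yours is self-contained and arguably more elementary, whereas the paper's modular structure lets Lemma~\ref{lem:sq_mz_to_not_sq} and Lemma~\ref{lem:change_constant} be reused elsewhere (e.g.\ in the proof of Lemma~\ref{lem:expected_squared_concentration}).
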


Next we will prove a utility result on subGaussian random variables; we state it in generality and specialize to our case.
\begin{lemma} \label{lem:expected_squared_concentration}
    Let $\mu,\nu$ be probability measures on $\mathbb{R}^d$ and let $\pi$ be a probability measure on $\mathbb{R}^d \times \mathbb{R}^d$ with marginals $\mu$ and $\nu$. Let $\bm{X} \sim \mu, \bm{Y} \sim \nu,$ and let $\bm{Y}^{\bm{x}}$ be distributed according to $\pi(\cdot|\bm{x})$. Then it holds
    \begin{equation*}
        \mathbb{E}_{\bm{X}} \left [ \norm{\bm{Y}^{\bm{X}}}_{\psi_2}^2 \right ] \leq 2\norm{\bm{Y}}_{\psi_2}^2
    \end{equation*}
\end{lemma}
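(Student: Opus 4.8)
The plan is to compare $\norm{\bm{Y}^{\bm{x}}}_{\psi_2}^2$ pointwise against the fixed scale $\sigma^2$ for an arbitrary $\sigma > \norm{\bm{Y}}_{\psi_2}$, and then average over $\bm{x}$. Fix such a $\sigma$, so that $\mathbb{E}_{\bm{Y}\sim\nu}[\exp(\norm{\bm{Y}}^2/2d\sigma^2)]\le 2$. Since the second marginal of $\pi$ is $\nu$, disintegration and the tower property give
\[
\mathbb{E}_{\bm{X}\sim\mu}\big[\phi(\bm{X})\big]=\mathbb{E}_{\bm{Y}\sim\nu}\!\left[\exp\!\left(\frac{\norm{\bm{Y}}^2}{2d\sigma^2}\right)\right]\le 2,\qquad\text{where }\ \phi(\bm{x}):=\mathbb{E}_{\bm{Y}^{\bm{x}}}\!\left[\exp\!\left(\frac{\norm{\bm{Y}^{\bm{x}}}^2}{2d\sigma^2}\right)\right].
\]
Note $\phi\ge 1$ everywhere and $\phi(\bm{x})<\infty$ for $\mu$-a.e.\ $\bm{x}$, so the set where $\norm{\bm{Y}^{\bm{x}}}_{\psi_2}=\infty$ is $\mu$-null and can be ignored.

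The second step is a pointwise ``inflation'' estimate: if $\mathbb{E}[\exp(\norm{\bm{Z}}^2/2d\sigma^2)]=M\ge 1$, then $\norm{\bm{Z}}_{\psi_2}^2\le\frac{\sigma^2}{\log 2}\log\max(M,2)$. Indeed, for any $a\ge 1$ the map $t\mapsto t^{1/a}$ is concave, so Jensen gives $\mathbb{E}[\exp(\norm{\bm{Z}}^2/2d\sigma^2 a)]\le M^{1/a}$; taking $a=\max(1,\log M/\log 2)$ makes the right-hand side at most $2$, hence $\norm{\bm{Z}}_{\psi_2}^2\le a\sigma^2$. Applying this with $\bm{Z}=\bm{Y}^{\bm{x}}$ and $M=\phi(\bm{x})$ yields $\norm{\bm{Y}^{\bm{x}}}_{\psi_2}^2\le\frac{\sigma^2}{\log 2}\log\max(\phi(\bm{x}),2)$ for $\mu$-a.e.\ $\bm{x}$.

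The final step integrates this over $\bm{x}\sim\mu$ and uses concavity of $\log$ together with the elementary bound $\max(a,2)\le a+2$ for $a\ge 0$:
\[
\mathbb{E}_{\bm{X}}\big[\norm{\bm{Y}^{\bm{X}}}_{\psi_2}^2\big]\le\frac{\sigma^2}{\log 2}\,\mathbb{E}_{\bm{X}}\big[\log\max(\phi(\bm{X}),2)\big]\le\frac{\sigma^2}{\log 2}\log\!\big(\mathbb{E}_{\bm{X}}[\phi(\bm{X})]+2\big)\le\frac{\sigma^2}{\log 2}\log 4=2\sigma^2.
\]
Letting $\sigma\downarrow\norm{\bm{Y}}_{\psi_2}$ gives $\mathbb{E}_{\bm{X}}[\norm{\bm{Y}^{\bm{X}}}_{\psi_2}^2]\le 2\norm{\bm{Y}}_{\psi_2}^2$, as claimed. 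There is no real obstacle here; the only care needed is bookkeeping the directions of the two Jensen applications (the power step uses concavity because the exponent $1/a\le 1$, the $\log$ step uses concavity of $\log$) and noting that the $\mu$-null set on which $\phi=\infty$ does not affect the integral.
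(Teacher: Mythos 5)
Your proposal is correct and takes essentially the same route as the paper: use the tower property to control $\mathbb{E}_{\bm{X}}[\phi(\bm{X})]$, convert the conditional exponential-moment bound into a $\psi_2$-norm bound via the $t\mapsto t^{1/a}$ inflation trick (the paper isolates this as its Lemma~\ref{lem:change_constant}), and aggregate with Jensen on $\log$. The only difference is cosmetic: the paper splits on the event $\{\phi(\bm{X})>2\}$ and bounds the two pieces separately, while you fold both cases into $\log\max(\phi,2)\le\log(\phi+2)$, arriving at the same factor $2$.
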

This result essentially says that most of the conditional measures of a joint distribution are at least as well concentrated as the marginal. This is a key technical tool as will be seen in the proof of the following result.
\begin{theorem} \label{thm:ultimate_subg}
    Let $\mu,\nu$ be $\sigma^2$-norm-subGaussian. Then 
    \begin{align*}
        \mathbb{E}\left [ \norm{\hat{T}_n - T_\varepsilon}_{L^1(\mu)} \right ] \lesssim 
        \frac{\sqrt{d\sigma^2}}{\sqrt{k}} + \sqrt{d\sigma^2\left ( 1 + \frac{\sigma^{\lceil 5d/2 \rceil + 6}}{\varepsilon^{\lceil 5d/4\rceil + 3}} \right )} \frac{1}{m^{1/4}}.
    \end{align*}
    In particular if $k = n^{1/3},m=n^{2/3}$ then
    \begin{align*}
        \mathbb{E}\left [ \norm{\hat{T}_n - T_\varepsilon}_{L^1(\mu)} \right ] \lesssim \sqrt{
        d\sigma^2\left ( 1 + \frac{\sigma^{\lceil 5d/2 \rceil + 6}}{\varepsilon^{\lceil 5d/4\rceil + 3}} \right )} \frac{1}{n^{1/6}}.
    \end{align*}
\end{theorem}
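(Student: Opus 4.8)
The plan is to run the bias--variance machinery of Section \ref{sec:proof_structures}, but to measure everything in $L^1(\mu)$ rather than $L^2(\mu)$: for a general subGaussian target we only control the $\psi_2$-norm of the conditionals $\pi_\varepsilon(\cdot|\bm{x})$ \emph{on average} over $\bm{x}$ (Lemma \ref{lem:expected_squared_concentration}), not pointwise, and passing to $L^1$ lets us take a square root of the transport inequality before aggregating over $\bm{x}$.

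First, the variance term. By the triangle inequality in $L^1(\mu)$ and linearity of expectation,
\begin{equation*}
\mathbb{E}[\|\hat{T}_n - T_\varepsilon\|_{L^1(\mu)}] \leq \mathbb{E}[\|\hat{T}_n - \mathbb{E}[\hat{T}_n]\|_{L^1(\mu)}] + \|\mathbb{E}[\hat{T}_n] - T_\varepsilon\|_{L^1(\mu)} .
\end{equation*}
Since $\|\cdot\|_{L^1(\mu)} \le \|\cdot\|_{L^2(\mu)}$ and $\mathbb{E}[\sqrt{Z}]\le\sqrt{\mathbb{E}[Z]}$, Lemma \ref{lem:easy_term} gives $\mathbb{E}[\|\hat{T}_n - \mathbb{E}[\hat{T}_n]\|_{L^1(\mu)}] \le \sqrt{\mathbb{E}[\|\hat{T}_n - \mathbb{E}[\hat{T}_n]\|_{L^2(\mu)}^2]} \lesssim \sqrt{d\sigma^2}/\sqrt{k}$, which is the first term in the claim.

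For the bias term, Corollary \ref{cor:pi_tilde_cond_exp} gives $\|\mathbb{E}[\hat{T}_n] - T_\varepsilon\|_{L^1(\mu)} = \mathbb{E}_{\bm{X}\sim\mu}\big[\|\mathbb{E}_{\bm{Y}\sim\tilde{\pi}(\cdot|\bm{X})}[\bm{Y}] - \mathbb{E}_{\bm{Y}\sim\pi_\varepsilon(\cdot|\bm{X})}[\bm{Y}]\|\big]$. For a fixed $\bm{x}$, Lemma \ref{lem:subg_lap_bound} shows the Laplace functional of $\pi_\varepsilon(\cdot|\bm{x})$ satisfies the hypothesis of Theorem \ref{thm:w1_kl} with constant $C(\bm{x}) = 16d\norm{\bm{Y}^{\bm{x}}}_{\psi_2}^2$, where $\bm{Y}^{\bm{x}}\sim\pi_\varepsilon(\cdot|\bm{x})$; this is finite for $\mu$-a.e.\ $\bm{x}$ because the $\bm{y}$-marginal of $\pi_\varepsilon$ is $\nu$ and Lemma \ref{lem:expected_squared_concentration} gives $\mathbb{E}_{\bm{X}}[\norm{\bm{Y}^{\bm{X}}}_{\psi_2}^2]\le 2\sigma^2$. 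Combining Theorem \ref{thm:w1_kl} with \eqref{eq:expectation_vs_w1} and taking square roots,
\begin{equation*}
\|\mathbb{E}_{\bm{Y}\sim\tilde{\pi}(\cdot|\bm{x})}[\bm{Y}] - \mathbb{E}_{\bm{Y}\sim\pi_\varepsilon(\cdot|\bm{x})}[\bm{Y}]\| \le \sqrt{2C(\bm{x})}\,\sqrt{\mathsf{KL}(\tilde{\pi}(\cdot|\bm{x})\,||\,\pi_\varepsilon(\cdot|\bm{x}))} .
\end{equation*}
Integrating over $\bm{x}\sim\mu$ and using Cauchy--Schwarz,
\begin{equation*}
\|\mathbb{E}[\hat{T}_n] - T_\varepsilon\|_{L^1(\mu)} \le \sqrt{\,\mathbb{E}_{\bm{X}\sim\mu}[2C(\bm{X})]\,}\cdot\sqrt{\,\mathbb{E}_{\bm{X}\sim\mu}[\mathsf{KL}(\tilde{\pi}(\cdot|\bm{X})\,||\,\pi_\varepsilon(\cdot|\bm{X}))]\,} .
\end{equation*}
The first factor is $\lesssim \sqrt{d\sigma^2}$ by Lemma \ref{lem:expected_squared_concentration}; the second equals $\sqrt{\mathsf{KL}(\tilde{\pi}\,||\,\pi_\varepsilon)}$ by the chain rule of Proposition \ref{prop:kl_bound}, hence is at most $\sqrt{\tfrac{1}{\varepsilon}\mathbb{E}_{\cX,\cY}[|S_\varepsilon(\mu,\nu) - S_\varepsilon(\cX,\cY)|]} \lesssim \sqrt{1 + \sigma^{\lceil 5d/2\rceil+6}/\varepsilon^{\lceil 5d/4\rceil+3}}\;m^{-1/4}$ by Theorem \ref{thm:mena_weed} applied to the batch of $m$ samples defining $\tilde\pi$. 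Multiplying the two factors gives the second term of the claim, and the in-particular statement follows from $k=n^{1/3}$, $m=n^{2/3}$.

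The main obstacle is the non-uniformity of $C(\bm{x})$. Attempting the same argument directly in $L^2(\mu)$ would require bounding $\mathbb{E}_{\bm{X}}[C(\bm{X})\,\mathsf{KL}(\tilde{\pi}(\cdot|\bm{X})||\pi_\varepsilon(\cdot|\bm{X}))]$, and a Cauchy--Schwarz split would then demand control of $\mathbb{E}_{\bm{X}}[C(\bm{X})^2]$ (a fourth-order $\psi_2$ estimate that Lemma \ref{lem:expected_squared_concentration} does not supply) and of $\mathbb{E}_{\bm{X}}[\mathsf{KL}(\cdot||\cdot)^2]$ (not controlled by the \emph{aggregate} bound in Proposition \ref{prop:kl_bound}). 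Taking the square root of the transport inequality first --- which is only legitimate when we are content with an $L^1$ estimate --- is precisely what lets a single application of Cauchy--Schwarz cleanly separate the averaged $\psi_2$-bound from the averaged KL-bound, at the cost of the weaker (square-root) exponent and an $L^1$ rather than $L^2$ guarantee. A routine point to keep straight is that all the $\tilde\pi$-identities are used at batch size $m$, which is why the cost-estimation rate enters as $m^{-1/2}$ and the bias scales as $m^{-1/4}$.
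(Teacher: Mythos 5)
Your proposal is correct and follows essentially the same route as the paper: triangle inequality in $L^1$, Jensen plus Lemma \ref{lem:easy_term} for the variance, and for the bias a conditional application of Theorem \ref{thm:w1_kl} with an $\bm{x}$-dependent constant followed by Cauchy--Schwarz, Lemma \ref{lem:expected_squared_concentration}, Proposition \ref{prop:kl_bound}, and Theorem \ref{thm:mena_weed} at batch size $m$. The only (immaterial) difference is that you take $\bm{Y}^{\bm{x}}\sim\pi_\varepsilon(\cdot|\bm{x})$ where the paper takes $\bm{Y}^{\bm{x}}\sim\tilde{\pi}(\cdot|\bm{x})$; both conditionals come from couplings with marginals $\mu,\nu$, so Lemma \ref{lem:expected_squared_concentration} applies either way, and your choice is the one matching the literal hypothesis of Theorem \ref{thm:w1_kl}.
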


\begin{proof}
    We start in a similar way to the previous proofs:
    \begin{align*}
        \mathbb{E} \norm{\hat{T}_n - T_\varepsilon}_{L^1(\mu)} \leq  \mathbb{E} \norm{\hat{T}_n - \mathbb{E}[\hat{T}_n]}_{L^1(\mu)} +  \norm{\mathbb{E}[\hat{T}_n] - T_\varepsilon}_{L^1(\mu)}.
    \end{align*}
    The first term is handled by Jensen's inequality:
    \begin{align*}
        \mathbb{E} \norm{\hat{T}_n - \mathbb{E}[\hat{T}_n]}_{L^1(\mu)} \leq \mathbb{E} \left [ \norm{\hat{T}_n - \mathbb{E}[\hat{T}_n]}_{L^2(\mu)}^2 \right ]^{1/2}
    \end{align*}
    and the resulting term can be controlled by Lemma \ref{lem:easy_term}.

    For the other term, we have:
    \begin{align*}
        \mathbb{E}_{\bm{X}} \left [  \norm{\mathbb{E}[\hat{T}_n](\bm{X}) - T_\varepsilon(\bm{X})} \right ] \leq \mathbb{E}_{\bm{X}} \left [ W_1(\tilde{\pi}(\cdot|\bm{X}), \pi_\varepsilon(\cdot | \bm{X})) \right ].
    \end{align*}
    The strategy now is to apply Theorem \ref{thm:w1_kl} conditionally with a constant that depends on the value of $\bm{X}$. In particular if we let $\bm{Y}^{\bm{x}} \sim \tilde{\pi}(\cdot|\bm{x})$ then we have by Lemma \ref{lem:subg_lap_bound} and Theorem \ref{thm:w1_kl} that:
    \begin{align*}
        W_1(\tilde{\pi}(\cdot|\bm{x}), \pi_\varepsilon(\cdot | \bm{x})) \leq \sqrt{16d\norm{\bm{Y}^{\bm{x}}}^2_{\psi_2}\mathsf{KL}(\tilde{\pi}(\cdot|\bm{x}) \ || \  \pi_\varepsilon(\cdot | \bm{x}))}
    \end{align*}
    Using this inequality above and applying Cauchy-Schwarz we have:
    \begin{align*}
        \mathbb{E}_{\bm{X}} \left [ W_1(\tilde{\pi}(\cdot|\bm{X}), \pi_\varepsilon(\cdot | \bm{X})) \right ] &\leq \mathbb{E}_{\bm{X}} \left [ \sqrt{16d\norm{\bm{Y}^{\bm{X}}}^2_{\psi_2}\mathsf{KL}(\tilde{\pi}(\cdot|\bm{X}) \ || \  \pi_\varepsilon(\cdot | \bm{X}))} \right ] \\
        &\leq 4\sqrt{d}\mathbb{E}_{\bm{X}}\left [ \norm{\bm{Y}^{\bm{X}}}^2_{\psi_2}\right ]^{1/2} \cdot \mathbb{E}_{\bm{X}}\left [\mathsf{KL}(\tilde{\pi}(\cdot|\bm{X}) \ || \  \pi_\varepsilon(\cdot | \bm{X})) \right ]^{1/2}.
    \end{align*}
    The result now follows from Lemma \ref{lem:expected_squared_concentration}, Proposition \ref{prop:kl_bound}, and Theorem \ref{thm:mena_weed}.
\end{proof}

The main reason that we obtain a different bound in Theorem \ref{thm:ultimate_subg} compared to Theorems \ref{thm:ultimate_bounded} and \ref{thm:ultimate_lcc} is that we do not have a uniform control on the constant used when conditionally applying Theorem \ref{thm:w1_kl}. This is handled by Cauchy-Schwarz inequality which adds an additional square-root, leading to a bound of $n^{-1/6}$ instead of $n^{-1/3}$. We suspect that this may be avoidable (see Conjecture \ref{conj:subg}).

\section{Numerical Investigations} \label{sec:numerics}

All code used to generate the figures throughout this section will be made publicly available. Detailed descriptions of the parameters used and further numerical experiments are deferred to Section \ref{sec:numeric_details} in the supplement. 

In this section we consider a few different settings to empirically verify and shed more light on the results in Theorems \ref{thm:ultimate_bounded}, \ref{thm:ultimate_lcc}, and \ref{thm:ultimate_subg}. 

\subsection{Gaussian Results} 

\begin{figure}
    \centering
    \includegraphics[width=0.5\linewidth]{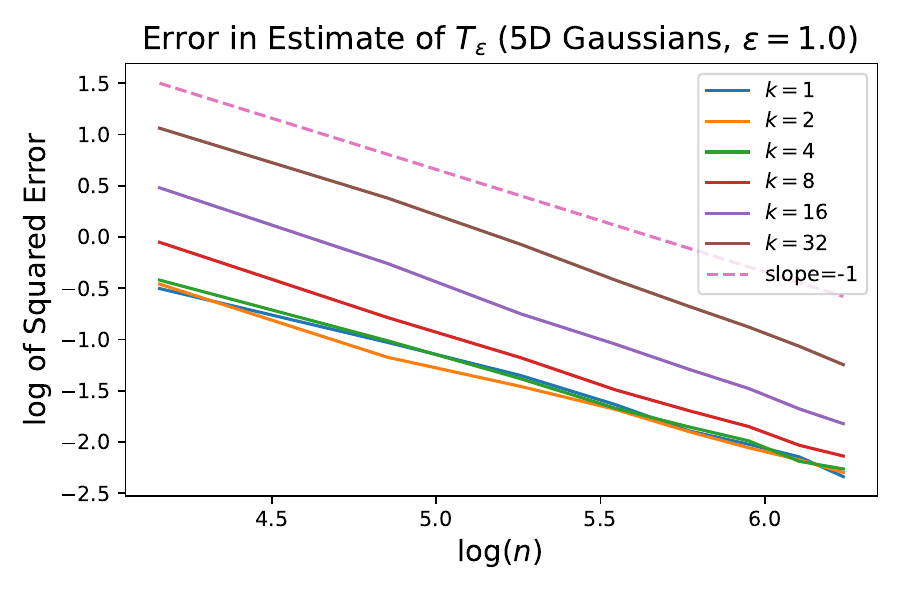}
    \caption{Convergence rate of $\hat{T}_n$ in squared $L_2(\mu)$, in log-scale for different choices of $k$. (Averaged over 100 replications)}
    \label{fig:exact_gaussian_eps1}
\end{figure}

In Figure \ref{fig:exact_gaussian_eps1} we show the log-log plot of estimation error vs $n$ for varying choices of $k$ when $\mu = N(\bm{x}_0, \Sigma_0), \nu = N(\bm{x}_1, \Sigma_1)$ and utilize the known closed-form expression for $T_\varepsilon$ from \cite{janati2020entropic}. For each $n$ we draw $n$ samples from $\mu$ and $\nu$ which are used to construct $\hat{T}_n$, and a further 500 independent samples $\bm{X}_1,...,\bm{X}_{500} \sim \mu$ which are used for the Monte-Carlo estimation 
\begin{equation} \label{eq:monte_carlo}
    \norm{\hat{T}_n - T_\varepsilon}_{L^2(\mu)}^2 \approx \frac{1}{n} \sum_{i=500}^\ell \norm{\hat{T}_n(X_i) - T_\varepsilon(X_i)}_2^2.
\end{equation}
We compare across several settings of the number of batches $k$ and compare the results. We observe the smaller the number of batches (equivalently the larger each batch is) the better the estimator is. Additionally, for all settings of $k$ we observe decay approximately proportional to $n^{-1}$. This suggests that our analysis may be loose, at least in the Gaussian case.

\subsection{Variance Estimates}
In this section we explore the possible dependence on $m$ of the variance term in the bias-variance decomposition analysis. To estimate this quantity without estimating $\mathbb{E}[\hat{T}_n]$ let $\hat{T}_n'$ be an independent map estimate from $\hat{T}_n$ (meaning that it is generated by a separate set of samples $\cX',\cY'$) and let $T_{1,m}^{(1)},{T_{1,m}^{(1)}}'$ be single terms in $\hat{T}_n$ and $\hat{T}_n'$ as in \eqref{eq:T_hat}. One can show that
\begin{align*}
    2 \mathbb{E}\norm{\hat{T}_n - \mathbb{E}[\hat{T}_n]}_{L^2(\mu)}^2 = \frac{1}{k}\mathbb{E}\norm{T_{1,m}^{(1)} - {T_{1,m}^{(1)}}'}_{L^2(\mu)}^2.
\end{align*} 
Note that the right hand side only relies on comparing two independent estimates which can be easily constructed.

\begin{figure}
    \centering
    \begin{minipage}{.45\textwidth}
        \centering
        \includegraphics[width=0.9\linewidth]{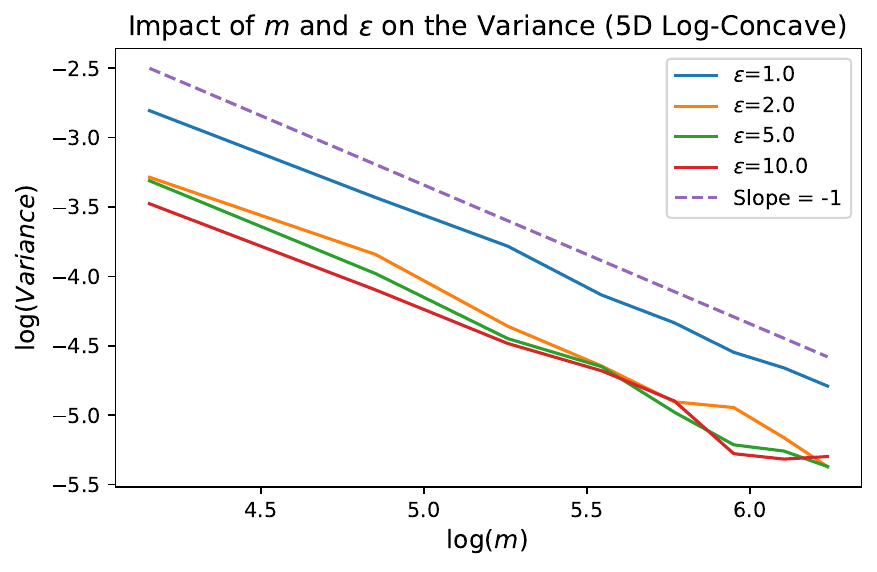}
        \captionof{figure}{Estimate of the variance term in the bias variance decomposition in log-scale for 5D log-concave measures.}
        \label{fig:variance_lcc}
    \end{minipage} %
    \hfill
    \begin{minipage}{.45\textwidth}
        \centering
        \includegraphics[width=0.9\linewidth]{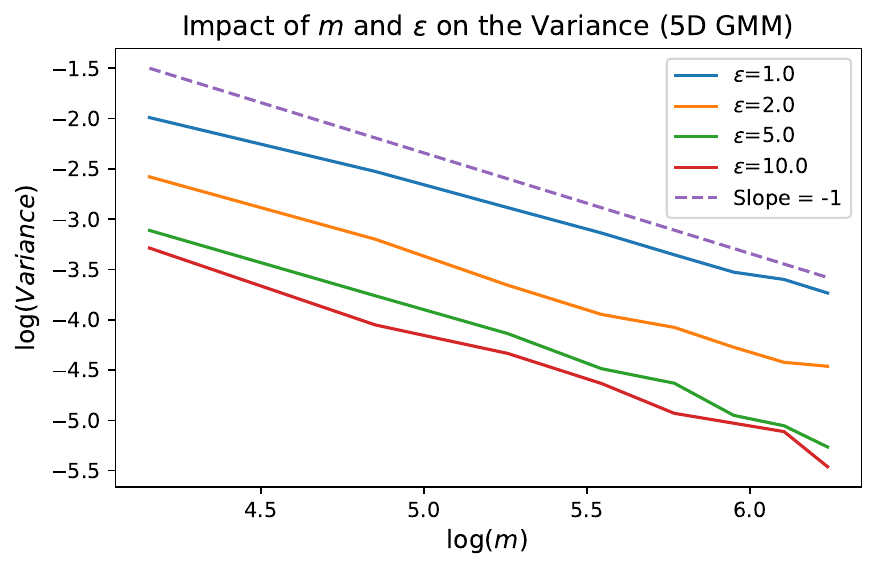}
        \captionof{figure}{Estimate of the variance term in the bias variance decomposition in log-scale for 5D Gaussian mixture models.}
        \label{fig:variance_gmm}
    \end{minipage}
\end{figure}

Figures \ref{fig:variance_lcc} and \ref{fig:variance_gmm} show the log-log plots of the variance term vs $m$ with varying $\varepsilon$ for two cases: when the source and targets are both strongly log-concave (but not multivariate Gaussians) and when the source and targets are both Gaussian Mixture Models, which models the non-log-concave but subGaussian case.

In both cases we use Monte Carlo integration as in \eqref{eq:monte_carlo} and use 100 replications. These figures demonstrate two things. First, increasing $\varepsilon$ tends to decrease the variance. Second, the variance appears to decay at a rate of roughly $n^{-1}$. Due to these empirical findings we propose the following:
\begin{conjecture} \label{conj:variance}
    Let $\mu,\nu$ be subGaussian. Then 
    \begin{equation*}
        \mathbb{E}\norm{T_{1,m} - \mathbb{E}[T_{1,m}]}_{L^2(\mu)}^2 \lesssim \frac{1}{m}
    \end{equation*}
    or equivalently
    \begin{equation*}
        \mathbb{E}\norm{\hat{T}_n - \mathbb{E}[\hat{T}_n]}_{L^2(\mu)}^2 \lesssim \frac{1}{k}\cdot \frac{1}{m} = \frac{1}{n}
    \end{equation*}
    with a constant that can be taken independently of $m,k,$ and $n$.
\end{conjecture}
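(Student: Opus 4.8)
The plan is to reduce the conjecture to a single‑batch variance estimate and then establish that estimate by an Efron--Stein (bounded‑differences‑in‑$L^2$) argument, the essential new ingredient being a quantitative stability bound for the empirical entropic potentials under resampling of a single point; since Lemma~\ref{lem:easy_term} already supplies the factor $1/k$, the whole content of the conjecture is the missing extra factor $1/m$ \emph{inside} a single batch. \emph{Step 1 (reduction to one batch).} The batch estimators $T_{1,m}^{(1)},\dots,T_{k,m}^{(1)}$ (abbreviated $T_{\ell,m}$ in the conjecture) are i.i.d., so writing $\hat T_n(\bm{x}) - \mathbb{E}[\hat T_n(\bm{x})] = \tfrac1k\sum_{\ell}\big(T_{\ell,m}^{(1)}(\bm{x}) - \mathbb{E}[T_{\ell,m}^{(1)}(\bm{x})]\big)$, expanding $\|\cdot\|^2$ as an inner product, taking expectations, discarding the vanishing cross‑terms, and integrating over $\bm{x}\sim\mu$ gives the unconditional identity $\mathbb{E}\|\hat T_n - \mathbb{E}[\hat T_n]\|_{L^2(\mu)}^2 = \tfrac1k\,\mathbb{E}\|T_{1,m}^{(1)} - \mathbb{E}[T_{1,m}^{(1)}]\|_{L^2(\mu)}^2$. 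This identity is precisely the equivalence of the two displays in the conjecture, so it suffices to prove $\mathbb{E}\|T_{1,m}^{(1)} - \mathbb{E}[T_{1,m}^{(1)}]\|_{L^2(\mu)}^2 \lesssim C/m$, where $C$ may depend on $\varepsilon,\sigma,d$ but not on $m$.

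\emph{Step 2 (Efron--Stein).} For fixed $\bm{x}$, $T_{1,m}^{(1)}(\bm{x}) = \sum_{j=1}^m \tfrac1m p_m^{(1)}(\bm{x},\bm{Y}_j)\bm{Y}_j$ is a vector‑valued function of the $2m-1$ independent samples $\bm{X}_2,\dots,\bm{X}_m,\bm{Y}_1,\dots,\bm{Y}_m$ (here $\bm{X}_1$ is frozen at $\bm{x}$). Applying Efron--Stein coordinatewise, summing over the coordinates of $\mathbb{R}^d$, integrating over $\bm{x}\sim\mu$, and using Fubini yields $\mathbb{E}\|T_{1,m}^{(1)} - \mathbb{E}[T_{1,m}^{(1)}]\|_{L^2(\mu)}^2 \le \tfrac12\sum_{i}\mathbb{E}\|T_{1,m}^{(1)} - (T_{1,m}^{(1)})^{(i)}\|_{L^2(\mu)}^2$, where $(\cdot)^{(i)}$ replaces the $i$‑th sample by an independent copy. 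It then remains to show each of the $2m-1$ terms is $O(C/m^2)$.

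\emph{Step 3 (per‑coordinate sensitivity --- the crux).} Resampling one $\bm{X}_i$ ($i\ge2$) changes $T_{1,m}^{(1)}(\bm{x})$ only through the extended potentials $f_m^{(e)},g_m^{(e)}$; resampling one $\bm{Y}_j$ additionally swaps the single summand $\tfrac1m p_m^{(1)}(\bm{x},\bm{Y}_j)\bm{Y}_j$ and alters the $j$‑th marginal constraint. The direct swap contributes $\lesssim \tfrac1{m^2}\big(p_m^{(1)}(\bm{x},\bm{Y}_j)^2\|\bm{Y}_j\|^2+\cdots\big)$, whose expectation is $O(C/m^2)$ once the (exponentially large but subGaussian‑integrable) density factor is controlled --- exactly the kind of moment bound underlying Theorem~\ref{thm:mena_weed}. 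For the potential contribution I would prove a \emph{single‑sample stability estimate}: moving one empirical atom perturbs a marginal by mass $1/m$, and by subtracting the perturbed and unperturbed Sinkhorn fixed‑point equations for $(f_m^{(e)},g_m^{(e)})$ and running a contraction/Gr\"onwall argument (Hilbert--Birkhoff contraction in the compact case, log‑Lipschitz estimates on the soft‑min operator in the subGaussian case), one should obtain $\|f_m^{(e)}-\tilde f_m^{(e)}\|_\infty,\ \|g_m^{(e)}-\tilde g_m^{(e)}\|_\infty = O(1/m)$ up to factors polynomial in $1/\varepsilon$ and in the subGaussian scale. Propagating this through $p_m^{(1)} = \expe(f_m^{(e)}+g_m^{(e)}-\tfrac12\|\cdot\|^2)$ and the weighted average, and integrating over $\bm{x}\sim\mu$ using subGaussianity of $\mu$, gives $O(C/m^2)$ per coordinate; summing over the $2m-1$ coordinates gives $O(C/m)$, which Step~1 converts back into both displayed forms.

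\emph{Main obstacle.} The real difficulty lies in Step~3: obtaining the single‑sample potential‑stability bound in the \emph{non‑compact} subGaussian regime with \emph{explicit polynomial} dependence on $1/\varepsilon$. Existing stability results are typically stated for compactly supported marginals or carry constants that degrade with the support diameter, whereas here $p_m^{(1)}(\bm{x},\bm{y})$ can be as large as $\expe\!\big(\text{poly}(\|\bm{x}\|,\|\bm{y}\|)\big)$, so the heavy tails must be tamed simultaneously with the stability argument. For the compactly supported and strongly log‑concave targets of Theorems~\ref{thm:ultimate_bounded}--\ref{thm:ultimate_lcc}, the Hilbert--Birkhoff contraction already delivers a clean $O(1/m)$ perturbation bound, and I expect the conjecture to go through there; the fully general subGaussian case is the genuinely open part. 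An alternative route --- extracting the $1/m$ rate from the recently established central limit theorems for empirical entropic potentials --- would also work in principle, but would similarly require the limiting variance to be controlled explicitly and uniformly in $m$.
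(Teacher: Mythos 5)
This statement is posed in the paper as an open \emph{conjecture}, so there is no authors' proof to compare against; what you have written is a research plan rather than a proof, and you say as much. Given that framing, the plan has the right general shape but contains quantitative gaps beyond the one you flag.

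Your Step~1 reduction is correct and is in fact exactly the identity the paper already establishes in the proof of Lemma~\ref{lem:easy_term} (the cross-terms vanish by i.i.d.\ batches), so the whole content of the conjecture is indeed the missing extra $1/m$ factor inside a single batch. Your Step~2 (Efron--Stein over the $2m-1$ free samples) is a sensible framework. Step~3 is where things get shaky, and not only in the way you acknowledge. Three concrete issues. First, the ``direct swap'' term $\tfrac{1}{m^2}\,p_m^{(1)}(\bm{x},\bm{Y}_j)^2\|\bm{Y}_j\|^2$: the relative density satisfies $\sum_{j}\tfrac1m p_m(\bm{X}_i,\bm{Y}_j)=1$, so in the small-$\varepsilon$ regime the entries concentrate and $\mathbb{E}[p_m(\bm{X}_i,\bm{Y}_j)^2]$ can grow linearly in $m$ (heuristically, an entry of size $\approx m$ occurring with probability $\approx 1/m$), which would turn your claimed $O(1/m^2)$ per-coordinate bound into $O(1/m)$ and wipe out the gain after summing over coordinates. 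You would need a uniform-in-$m$ second-moment bound on $p_m$, which is not obviously available and is not implied by the moment controls underlying Theorem~\ref{thm:mena_weed}. Second, the Hilbert--Birkhoff contraction you invoke in the compact case has contraction factor of the form $1 - e^{-\Delta^2/\varepsilon}$ (with $\Delta$ the support diameter), so the stability constant it yields is exponentially, not polynomially, bad in $1/\varepsilon$; your parenthetical ``up to factors polynomial in $1/\varepsilon$'' is not supported by that route and would need a genuinely different argument. Third, even granting $\|f_m^{(e)}-\tilde f_m^{(e)}\|_\infty = O(1/m)$, propagating this through $p_m^{(1)}=\expe(f_m^{(e)}+g_m^{(e)}-\tfrac12\|\cdot\|^2)$ produces a \emph{multiplicative} perturbation of size $e^{O(1/(m\varepsilon))}-1 \approx O(1/(m\varepsilon))$ acting on a density that is itself unbounded, so the product can fail to be integrable without further tail control on $p_m^{(1)}$ --- this is the same obstruction as the first point, viewed differently. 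In short, your roadmap correctly identifies single-sample stability of the empirical potentials as the heart of the matter, but the quantitative claims in Step~3 (the $O(C/m^2)$ per-coordinate bound and the polynomial $\varepsilon$-dependence) are not established and are likely to fail as stated without new ideas; the conjecture remains open even in the compactly supported case you are optimistic about.
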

Consequences of this conjecture are discussed in the next Section.

\section{Open Problems and Conclusion}

\paragraph{The Variance Conjecture.} The most impactful open question is stated in Conjecture \ref{conj:variance}, and is empirically observed in Figures \ref{fig:variance_lcc} and \ref{fig:variance_gmm}. A positive answer would provide a substantial strengthening of Lemma \ref{lem:easy_term} from convergence of the order $k^{-1}$ to $n^{-1}$. As a consequence the bounds in Theorems \ref{thm:ultimate_bounded} and \ref{thm:ultimate_lcc} can be improved from order $k^{-1} + m^{-1/2}$ to $n^{-1}+ m^{-1/2}$. Choosing $m=n$ gives the bound $n^{-1} + n^{-1/2}$ which in turn improves the convergence rate from $n^{-1/3}$ to $n^{-1/2}$. In addition, the result in Theorem \ref{thm:ultimate_subg} can be improved from $n^{-1/6}$ to $n^{-1/4}$ in a similar way.

\paragraph{Squaring in the subGaussian Case.} 
As discussed after the proof of Theorem \ref{thm:ultimate_subg}, conditional application of Theorem \ref{thm:w1_kl} appears to be a substantial technical hurdle to obtaining a version of Theorem \ref{thm:ultimate_subg} of the same type as Theorems \ref{thm:ultimate_bounded} and \ref{thm:ultimate_lcc}. However, we expect that the following result should be possible.
\begin{conjecture} \label{conj:subg}
    In the same setting as Theorem \ref{thm:ultimate_subg}, it holds that
    \begin{equation*}
        \mathbb{E} \left [ \norm{\hat{T}_n - T_\varepsilon}_{L^2(\mu)}^2 \right ] \lesssim n^{-1/3}
    \end{equation*}
    where the implicit constant may depend on any parameter besides $n$.
\end{conjecture}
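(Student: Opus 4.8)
Throughout, implicit constants are allowed to depend on $d$, $\sigma$, and $\varepsilon$, as Conjecture~\ref{conj:subg} permits. The plan is to run the bias--variance decomposition of Lemma~\ref{lem:triangle} exactly as in the argument behind Theorems~\ref{thm:ultimate_bounded} and~\ref{thm:ultimate_lcc}, and to concentrate all the difficulty in the bias term. The variance term is disposed of immediately: Lemma~\ref{lem:easy_term} gives $\mathbb{E}_{\cX,\cY}[\|\hat T_n-\mathbb{E}[\hat T_n]\|_{L^2(\mu)}^2]\le 2\log(2)d\sigma^2/k$, which is $\lesssim n^{-1/3}$ once $k=n^{1/3}$. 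So the whole problem reduces to showing the bias satisfies $\|\mathbb{E}[\hat T_n]-T_\varepsilon\|_{L^2(\mu)}^2\lesssim m^{-1/2}$, after which $m=n^{2/3}$ closes the estimate. By Corollary~\ref{cor:pi_tilde_cond_exp} and \eqref{eq:expectation_vs_w1},
\[
  \|\mathbb{E}[\hat T_n]-T_\varepsilon\|_{L^2(\mu)}^2
  = \mathbb{E}_{\bm X}\big[\,\|\mathbb{E}_{\bm Y\sim\tilde\pi(\cdot|\bm X)}[\bm Y]-\mathbb{E}_{\bm Y\sim\pi_\varepsilon(\cdot|\bm X)}[\bm Y]\|^2\,\big]
  \le \mathbb{E}_{\bm X}\big[W_1^2(\tilde\pi(\cdot|\bm X),\pi_\varepsilon(\cdot|\bm X))\big].
\]

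Applying Theorem~\ref{thm:w1_kl} conditionally through the Laplace-functional bound of Lemma~\ref{lem:subg_lap_bound}, with base measure $\pi_\varepsilon(\cdot|\bm x)$ and $\bm Y^{\bm x}\sim\pi_\varepsilon(\cdot|\bm x)$, gives
\[
  \mathbb{E}_{\bm X}\big[W_1^2(\tilde\pi(\cdot|\bm X),\pi_\varepsilon(\cdot|\bm X))\big]
  \;\lesssim\; d\,\mathbb{E}_{\bm X}\Big[\,\norm{\bm Y^{\bm X}}_{\psi_2}^2\,\mathsf{KL}\big(\tilde\pi(\cdot|\bm X)\,||\,\pi_\varepsilon(\cdot|\bm X)\big)\Big].
\]
In the compactly supported and strongly log-concave cases one finishes in one line, since there $\norm{\bm Y^{\bm x}}_{\psi_2}$ is bounded uniformly in $\bm x$ and can be pulled outside the expectation, leaving exactly the quantity bounded by Proposition~\ref{prop:kl_bound} and Theorem~\ref{thm:mena_weed}. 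Under the mere sub-Gaussian hypothesis this uniformity is lost, and the first thing I would establish is the quantitative growth bound $\norm{\bm Y^{\bm x}}_{\psi_2}\lesssim 1+\norm{\bm x}$: writing $\bm Y^{\bm x}=T_\varepsilon(\bm x)+\bm W^{\bm x}$ with $\bm W^{\bm x}$ the centred fluctuation, one has $\mathrm{Cov}(\bm W^{\bm x})=\varepsilon\nabla T_\varepsilon(\bm x)\preceq\varepsilon I$, so the fluctuation has uniformly bounded conditional covariance, and combined with the at-most-linear growth of $T_\varepsilon$ (a standard consequence of the potential representation \eqref{eq:opt_relation}) this should yield the claim once the sub-Gaussianity of $\bm W^{\bm x}$ is made quantitative. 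Feeding this in, the bias is at most $d\,\mathbb{E}_{\bm X}[(1+\norm{\bm X}^2)\,\mathsf{KL}(\tilde\pi(\cdot|\bm X)\,||\,\pi_\varepsilon(\cdot|\bm X))]$, and the whole problem becomes a polynomially-weighted strengthening of Proposition~\ref{prop:kl_bound} / Theorem~\ref{thm:mena_weed}:
\[
  \mathbb{E}_{\bm X}\big[(1+\norm{\bm X}^2)\,\mathsf{KL}(\tilde\pi(\cdot|\bm X)\,||\,\pi_\varepsilon(\cdot|\bm X))\big]\;\lesssim\;\frac{1}{\sqrt m}.
\]

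There are two natural routes to this weighted estimate. The first is to re-run the Mena--Weed sample-complexity analysis underlying Theorem~\ref{thm:mena_weed} while tracking the weight $1+\norm{\bm x}^2$; the structural point that makes this plausible is that in the density ratio $d\tilde\pi(\cdot|\bm x)/d\pi_\varepsilon(\cdot|\bm x)$ the quadratic parts of $f_m$ and $f_\varepsilon$ (and of $g_m$ and $g_\varepsilon$) cancel, so $\mathsf{KL}(\tilde\pi(\cdot|\bm x)\,||\,\pi_\varepsilon(\cdot|\bm x))$ depends on $\bm x$ only through the at-most-linearly-growing remainder potentials, which should keep the weighted integral finite and still of order $m^{-1/2}$. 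The second, more self-contained, route is a dyadic-shell decomposition of $\mu$: on shells inside a radius $R_m=\Theta(\sqrt{\log m})$ one bounds the weight crudely by $R_m^2$ and uses the unweighted bound $\int\mathsf{KL}(\tilde\pi(\cdot|\bm x)\,||\,\pi_\varepsilon(\cdot|\bm x))\,d\mu(\bm x)\lesssim m^{-1/2}$ from Proposition~\ref{prop:kl_bound} and Theorem~\ref{thm:mena_weed}, while on shells outside $R_m$ one plays a deterministic pointwise bound $\mathsf{KL}(\tilde\pi(\cdot|\bm x)\,||\,\pi_\varepsilon(\cdot|\bm x))\lesssim 1+\norm{\bm x}$ (again from the cancellation of the quadratic parts of the potentials) against the sub-Gaussian tail decay of $\mu$, so that the tail shells contribute $\lesssim m^{-1/2}$ and the bulk contributes $\lesssim R_m^2 m^{-1/2}$. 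This second route delivers only the slightly weaker $\lesssim\log(m)\,m^{-1/2}$, i.e.\ $\widetilde O(n^{-1/3})$ up to a logarithmic factor; removing the logarithm requires either the weighted Mena--Weed route or a more careful resummation using a genuine second-moment control on $\mathsf{KL}(\tilde\pi(\cdot|\bm x)\,||\,\pi_\varepsilon(\cdot|\bm x))$.

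The main obstacle is precisely this weighted stability estimate. The decoupling used in the proof of Theorem~\ref{thm:ultimate_subg} --- Cauchy--Schwarz (or H\"older) applied to $\norm{\bm Y^{\bm X}}_{\psi_2}^2\cdot\mathsf{KL}(\tilde\pi(\cdot|\bm X)\,||\,\pi_\varepsilon(\cdot|\bm X))$ --- is structurally too lossy for the $L^2$ statement, because it always replaces the $m^{-1/2}$ coming from the first moment of the conditional KL by a strictly smaller fractional power of it; this is exactly the mechanism by which the $L^1$ argument degrades to $n^{-1/6}$. A proof of Conjecture~\ref{conj:subg} therefore has to decouple the (unbounded) conditional sub-Gaussian constant from the conditional KL essentially without loss, which seems to require genuinely new input --- a weight-tracking refinement of the Mena--Weed bound, or sharp control of the conditional KL deep in the tails of $\mu$. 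By contrast, the auxiliary ingredients (polynomial-in-$\norm{\bm x}$ growth of $\norm{\bm Y^{\bm x}}_{\psi_2}$ and of the conditional KL) should be routine consequences of known estimates on the entropic potentials for sub-Gaussian marginals.
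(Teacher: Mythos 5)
First, note that the statement you are asked to prove is posed in the paper as an open problem (Conjecture~\ref{conj:subg}); the paper offers no proof and explicitly suggests that a genuinely different analysis may be needed. Your write-up is an honest reduction of the problem rather than a proof, and your own closing paragraph concedes as much. The reduction itself is sound and mirrors the paper's diagnosis: the variance term is $\lesssim 1/k$ by Lemma~\ref{lem:easy_term}, and everything hinges on replacing the Cauchy--Schwarz decoupling in the proof of Theorem~\ref{thm:ultimate_subg} by the weighted stability estimate $\mathbb{E}_{\bm{X}}\left[(1+\norm{\bm{X}}^2)\,\mathsf{KL}(\tilde{\pi}(\cdot|\bm{X})\,||\,\pi_\varepsilon(\cdot|\bm{X}))\right]\lesssim m^{-1/2}$. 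But that estimate is exactly the missing idea, and neither of your two routes delivers it. The proof of Proposition~\ref{prop:kl_bound} works by using the doubly stochastic structure $\sum_j \frac{1}{m}p_m(\bm{X}_i,\bm{Y}_j)=1$ to telescope the potential sums into $S_\varepsilon(\cX,\cY)-S_\varepsilon(\mu,\nu)$ and then invoking Theorem~\ref{thm:mena_weed}; inserting a weight $1+\norm{\bm{X}_i}^2$ destroys this telescoping, so ``re-running Mena--Weed with a weight'' is not a routine modification but a new sample-complexity theorem. Your dyadic-shell route, even granting all its ingredients, yields $\log(m)\,m^{-1/2}$, which does not prove the conjecture as stated; and the pointwise bound $\mathsf{KL}(\tilde{\pi}(\cdot|\bm{x})\,||\,\pi_\varepsilon(\cdot|\bm{x}))\lesssim 1+\norm{\bm{x}}$ is asserted from a ``cancellation of quadratic parts'' heuristic that would require uniform control of $f_m-f_\varepsilon$ and $g_m-g_\varepsilon$ deep in the tails --- precisely what is unavailable in the non-compact setting.

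A secondary but genuine error: you justify $\norm{\bm{Y}^{\bm{x}}}_{\psi_2}\lesssim 1+\norm{\bm{x}}$ via $\mathrm{Cov}(\bm{W}^{\bm{x}})=\varepsilon\nabla T_\varepsilon(\bm{x})\preceq\varepsilon I$. The identity is correct, but the inequality $\nabla T_\varepsilon\preceq I$ is false in general: for $\mu=N(0,1)$, $\nu=N(0,\tau^2)$ with $\tau$ large, the entropic map is linear with slope close to $\tau$, so the conditional covariance is of order $\varepsilon\tau$, not $\varepsilon$. More to the point, uniform-in-$\bm{x}$ subGaussianity of $\pi_\varepsilon(\cdot|\bm{x})$ (or of $\tilde{\pi}(\cdot|\bm{x})$, which is the measure actually fed into Theorem~\ref{thm:w1_kl} in the paper) with any quantitative constant is exactly what the paper lacks --- this is why Lemma~\ref{lem:expected_squared_concentration} only controls $\norm{\bm{Y}^{\bm{X}}}_{\psi_2}^2$ in expectation over $\bm{X}$, forcing the lossy Cauchy--Schwarz step. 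So both auxiliary ingredients you label ``routine'' are themselves open in the required pointwise form, and the central weighted KL bound remains unproven. The proposal is a plausible research program, but it does not establish the conjecture.
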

It may be the case that obtaining such a result requires a different analysis altogether. Additionally, if Conjecture \ref{conj:variance} holds then this should be improvable to $n^{-1/2}$.

\paragraph{Convergence at a rate of $n^{-1}$.} It is shown in \cite{rigollet2022sample} that for compactly supported measures the entropy-regularized map converges at a rate of $n^{-1}$, even faster than the rate of $n^{-1/2}$ recovered in our proofs. At least for multivariate Gaussian source and target we empirically observe the same rate of convergence in Figure \ref{fig:exact_gaussian_eps1}. Obtaining this rate explicitly is likely an interesting avenue of study.
\begin{conjecture}
    Suppose that $\mu,\nu$ are mean-zero multivariate Gaussians. Then
    \begin{equation*}
        \mathbb{E} \left [ \norm{\hat{T}_n - T_\varepsilon}_{L^2(\mu)}^2 \right ] \lesssim \frac{1}{n}.
    \end{equation*}
\end{conjecture}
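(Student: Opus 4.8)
The plan is to apply the bias--variance decomposition of Lemma~\ref{lem:triangle} with a \emph{single} batch ($k=1$, $m=n$, so that $\hat T_n = T_{1,n}^{(1)}$) and to prove the sharp rate $n^{-1}$ for \emph{both} resulting terms, exploiting that for mean-zero Gaussians $\mu,\nu$ all population objects are explicit (\cite{janati2020entropic}): $\pi_\varepsilon$ is Gaussian, every conditional $\pi_\varepsilon(\cdot\,|\,\bm x)$ is Gaussian with a covariance that does \emph{not} depend on $\bm x$ --- hence $c$-strongly log-concave with a constant $c>0$ uniform in $\bm x$ --- and $T_\varepsilon$ is linear. Note that at $k=1$ Lemma~\ref{lem:easy_term} gives only an $O(1)$ variance bound, so the first half of the work is exactly a Gaussian instance of Conjecture~\ref{conj:variance}.

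\emph{Variance term.} Fix $\bm x$ and regard $T_{1,n}^{(1)}(\bm x)=\sum_{j=1}^n \tfrac1n p_n^{(1)}(\bm x,\bm Y_j)\bm Y_j$ as a (real-analytic, since the atoms are a.s.\ distinct) function of the $2n-1$ i.i.d.\ Gaussian vectors $\bm X_2,\dots,\bm X_n,\bm Y_1,\dots,\bm Y_n$. The key input is a \emph{quantitative stability estimate} for the empirical entropic map --- of the kind implicit in \cite{stromme23_ICML,rigollet2022sample} --- stating that resampling any one of these $2n-1$ vectors changes $T_{1,n}^{(1)}(\bm x)$ by $O(1/n)$ in $L^2$: morally, each atom carries mass $\tfrac1n$, so its influence on the Sinkhorn potentials, and hence on the entropic map, is $O(1/n)$. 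Given this, either the Efron--Stein inequality ($\mathrm{Var}\le \tfrac12\sum_{v}\mathbb{E}\lVert f-f^{(v)}\rVert^2 = (2n-1)\cdot O(1/n^2)$) or the Gaussian Poincar\'e inequality (bounding $\mathbb{E}\lVert\nabla_{\mathrm{samples}}T_{1,n}^{(1)}(\bm x)\rVert^2$) gives $\mathrm{Var}_{\cX,\cY}\big(T_{1,n}^{(1)}(\bm x)\big)=O(1/n)$; the rare events on which a resampled atom lands far away have exponentially small probability while the induced change is only polynomial, so they do not spoil the bound. Integrating over $\bm x\sim\mu$ (the prefactors grow at most polynomially in $\lVert\bm x\rVert$) yields $\mathbb{E}\lVert\hat T_n - \mathbb{E}[\hat T_n]\rVert_{L^2(\mu)}^2 \lesssim 1/n$.

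\emph{Bias term.} By Corollary~\ref{cor:pi_tilde_cond_exp}, $\mathbb{E}[\hat T_n](\bm x)-T_\varepsilon(\bm x)=\mathbb{E}_{\bm Y\sim\tilde\pi(\cdot|\bm x)}[\bm Y]-\mathbb{E}_{\bm Y\sim\pi_\varepsilon(\cdot|\bm x)}[\bm Y]$; since $\pi_\varepsilon(\cdot|\bm x)$ is Gaussian, hence $c$-strongly log-concave with $c$ uniform in $\bm x$, \eqref{eq:expectation_vs_w1} together with Theorems~\ref{thm:w1_kl} and \ref{thm:strongly_lc_lap} gives $\lVert\mathbb{E}[\hat T_n](\bm x)-T_\varepsilon(\bm x)\rVert^2\le \tfrac2c\,\mathsf{KL}(\tilde\pi(\cdot|\bm x)\,\lVert\,\pi_\varepsilon(\cdot|\bm x))$. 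Integrating against $\mu$ and using Proposition~\ref{prop:kl_bound} --- more precisely the sharper form its proof yields, namely $\varepsilon\,\mathsf{KL}(\tilde\pi\,\lVert\,\pi_\varepsilon)=\mathcal F_{\mu,\nu}(\tilde\pi)-S_\varepsilon(\mu,\nu)$ for the feasible coupling $\tilde\pi$ (here $\mathcal F_{\mu,\nu}(\pi)=\pi(\tfrac12\lVert\bm x-\bm y\rVert^2)+\varepsilon\mathsf{KL}(\pi\,\lVert\,\mu\otimes\nu)$), combined with $\mathcal F_{\mu,\nu}(\tilde\pi)\le \mathbb{E}_{\cX,\cY}[S_\varepsilon(\cX,\cY)]$ by linearity of $\pi\mapsto\pi(\tfrac12\lVert\cdot-\cdot\rVert^2)$, joint convexity of $\mathsf{KL}$, and $\mathbb{E}[\muX\otimes\nuY]=\mu\otimes\nu$ --- reduces the bias term to showing that the bias of the empirical EOT cost, $\mathrm{Bias}_n:=\mathbb{E}_{\cX,\cY}[S_\varepsilon(\cX,\cY)]-S_\varepsilon(\mu,\nu)$, is $O(1/n)$. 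This is strictly stronger than Theorem~\ref{thm:mena_weed}, which bounds $\mathbb{E}|S_\varepsilon(\cX,\cY)-S_\varepsilon(\mu,\nu)|$ --- a quantity that is $\Theta(n^{-1/2})$ because of fluctuations. I would obtain $\mathrm{Bias}_n=O(1/n)$ from a second-order expansion of $(\mu',\nu')\mapsto S_\varepsilon(\mu',\nu')$ around $(\mu,\nu)$: its first-order term is $\int f_\varepsilon\,d(\muX-\mu)+\int g_\varepsilon\,d(\nuY-\nu)$, which has zero expectation, and its second-order term is $O(1/n)$ since the empirical measures concentrate at rate $n^{-1/2}$; for Gaussians $f_\varepsilon,g_\varepsilon$ are quadratic and all relevant moments are finite, which is what is needed to control the remainder.

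\emph{Main obstacle.} The crux is upgrading both $n^{-1/2}$-type ingredients to genuine $n^{-1}$ bounds \emph{without} leaking any $n$-dependent (e.g.\ $\mathrm{polylog}\,n$) factor, which the conjecture does not permit. For the variance this requires a clean, uniform-in-$\bm x$ stability estimate for the empirical entropic map under unbounded Gaussian sampling; I expect this to be demanding but within reach of existing techniques. The harder step, in my view, is the $O(1/n)$ bias of empirical Sinkhorn between Gaussians: it requires a rigorous quadratic expansion of $S_\varepsilon$ in its measure arguments with a controlled remainder (a quantitative form of the differentiability of entropic OT cost), and a fully clean proof may well need an analysis tailored to the Gaussian case rather than the generic sub-Gaussian machinery used elsewhere in the paper.
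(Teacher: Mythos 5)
This statement is a \emph{conjecture} in the paper; the authors offer no proof and explicitly say it ``is still open to identify in which settings one can expect this even faster rate to apply.'' There is therefore no paper argument to compare against, and what you have written is a (candid) research plan rather than a proof. The reduction you set up is nevertheless sound and worth recording: with $k=1$, $m=n$, Lemma~\ref{lem:triangle} reduces the conjecture to (i) a single-batch variance bound $\mathbb{E}\norm{\hat T_n - \mathbb{E}[\hat T_n]}_{L^2(\mu)}^2 \lesssim n^{-1}$, which is exactly the Gaussian instance of Conjecture~\ref{conj:variance}, and (ii) a bias bound $\mathbb{E}_{\cX,\cY}[S_\varepsilon(\cX,\cY)] - S_\varepsilon(\mu,\nu) \lesssim n^{-1}$, which you correctly observe is strictly stronger than the $\Theta(n^{-1/2})$ absolute-error bound of Theorem~\ref{thm:mena_weed}. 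Your sharpening of Proposition~\ref{prop:kl_bound} to the \emph{signed} bias -- via the primal-gap identity $\varepsilon\,\mathsf{KL}(\tilde\pi\,\|\,\pi_\varepsilon) = \mathcal F_{\mu,\nu}(\tilde\pi) - S_\varepsilon(\mu,\nu)$, linearity in the cost, joint convexity of $\mathsf{KL}$, and $\mathbb{E}[\muX\otimes\nuY]=\mu\otimes\nu$ -- is correct and in fact a cleaner route to the inequality than the paper's direct Jensen computation, and the uniform-in-$\bm x$ strong log-concavity of the Gaussian conditionals $\pi_\varepsilon(\cdot|\bm x)$ holds by Lemma~\ref{lem:conditionals_strongly_lc}.

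Both (i) and (ii) are genuine gaps, not details, and as things stand neither is established. For (i), the Efron--Stein route hinges on a quantitative $O(1/n)$ perturbation bound for the in-sample entropic map when one Gaussian sample is resampled. The stability estimates you cite as ``implicit'' are proved for compactly supported data, with constants depending on the diameter of the support; here that diameter is infinite, and the Sinkhorn potentials are quadratic, not globally Lipschitz, so an outlying resampled atom induces a change whose size must be balanced against its probability in a way no existing argument carries out. Your sentence that ``rare far-landing atoms do not spoil the bound'' is the claim, not a proof. For (ii), you need a rigorous second-order expansion of $(\mu',\nu')\mapsto S_\varepsilon(\mu',\nu')$ in the measure arguments with a remainder controlled at order $n^{-1}$ uniformly over the unbounded Gaussian tails. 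The first-order term indeed vanishes in expectation, but second-order/remainder control for empirical entropic cost is itself a nontrivial result, currently available only under compactness, and the Gaussian case would require a bespoke argument exploiting the explicit quadratic potentials. In short: the decomposition and reductions are right, the two remaining ingredients are the whole problem, and closing either one in the unbounded Gaussian setting would be a contribution in its own right -- which is precisely why the statement is posed as a conjecture.
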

It is still open to identify in which settings one can expect this even faster rate to apply.

\paragraph{Conclusion} In this work we have established a framework for obtaining convergence rates for an estimate of the  EOT map, established convergence rates in three settings, two of which do not require compactness of the support, and posed three important open questions based on numerical investigations. Furthermore, it will be interesting to directly compare the extended and the in-sample estimators discussed in Section \ref{sec:problem_set_up}. While it is intuitive that they should be close to each other since they both converge to $T_\varepsilon$ with $n$, quantifying the similarity will shed more light on possible tradeoffs between the two estimators including the polynomial vs exponential dependence on $\varepsilon$ vs the rate in $n$. Finally, establishing high-probability versions of these results is also an important direction of study.

\section{Acknowledgements}
This research is supported by NSF DMS-2309519 and was supported in part by NSF CAREER 1553075 and the Camille and Henry Dreyfus Foundation. James M. Murphy acknowledges support in part by NSF DMS-2318894.  Shuchin Aeron acknowledges support in part by NSF DRL-1931978. 

\bibliographystyle{abbrv} 
\bibliography{references}

\appendix
\onecolumn

\section{Proofs From Section \ref{sec:proof_structures}}

One additional fact that we will use several times in this section is that since $\pi_n$ satisfies the marginal constraints the following holds for every $i,j=1,...,n$
\begin{align}
    \frac{1}{n} &= \sum_{i=1}^n \pi_n(\bm{X}_i,\bm{Y}_j) = \sum_{j=1}^n \pi_n(\bm{X}_i,\bm{Y}_j) \notag \\
    1 &= \sum_{i=1}^n \frac{1}{n}p_n(\bm{X}_i,\bm{Y}_j) = \sum_{j=1}^n \frac{1}{n}p_n(\bm{X}_i,\bm{Y}_j). \label{eq:pn_sum}
\end{align}

\subsection{Proof of Lemma \ref{lem:triangle}}

\begin{proof}
    Use the identity
    \begin{equation*}
        \hat{T}_n - T_\varepsilon = \frac{1}{2}\cdot 2(\hat{T}_n - \mathbb{E}[\hat{T}_n]) + \frac{1}{2}\cdot 2 (\mathbb{E}[\hat{T}_n] - T_\varepsilon)
    \end{equation*}
    then apply Jensen's inequality inside the expectation and simplify. The second term no longer has an expectation with respect to the samples $\cX,\cY$ because the two quantities are not random.
\end{proof}

\subsection{Proof of Lemma \ref{lem:easy_term}} \label{sec:easy_term} 

\begin{proof} 
    Throughout the proof the expectations are to be taken with respect to the random sets $\cX$ and $\cY$ unless otherwise stated.

    First recall the formula for the estimator \eqref{eq:T_hat},
    \begin{equation*}
        \hat{T}_n = \frac{1}{k} \sum_{\ell=1}^k T^{(1)}_{\ell,m}
    \end{equation*}
    and therefore
    \begin{equation*}
        \mathbb{E}[\hat{T}_n] = \frac{1}{k} \sum_{\ell=1}^k \mathbb{E}[T_{\ell,m}^{(1)}].
    \end{equation*}
    Since $\norm{\cdot}_{L^2(\mu)}$ is a Hilbert space we have
    \begin{align*}
        \norm{\hat{T}_n - \mathbb{E}[\hat{T}_n]}_{L^2(\mu)}^2 &= \left \langle \frac{1}{k} \sum_{\ell=1}^k \left (T_{\ell,m}^{(1)} - \mathbb{E}\left [T_{\ell,m}^{(1)} \right ] \right ), \frac{1}{k} \sum_{\ell'=1}^k \left (T_{\ell',m}^{(1)} - \mathbb{E}\left [T_{\ell',m} ^{(1)}\right ] \right ) \right \rangle_{L^2(\mu)} \\
        &= \frac{1}{k^2} \sum_{\ell=1}^k \norm{T_{\ell,m}^{(1)} - \mathbb{E}\left [T_{\ell,m}^{(1)} \right ]}_{L^2(\mu)}^2 + \frac{1}{k^2} \sum_{\ell\neq\ell'}^k \left \langle T_{\ell,m}^{(1)} - \mathbb{E}\left [T_{\ell,m}^{(1)} \right ], T_{\ell',m}^{(1)} - \mathbb{E}\left [T_{\ell',m}^{(1)} \right ] \right \rangle_{L^2(\mu)}.
    \end{align*}
    Taking expectations on both sides and using the fact that the $T_{\ell,m}^{(1)}$ are independent and identically distributed we have
    \begin{align*}
        &\mathbb{E}\norm{\hat{T}_n - \mathbb{E}[\hat{T}_n]}_{L^2(\mu)}^2 \\
        &= \frac{1}{k^2}\sum_{\ell=1}^k\mathbb{E}\norm{T_{\ell,m}^{(1)} - \mathbb{E}\left [T_{\ell,m}^{(1)}\right]}_{L^2(\mu)}^2 + \frac{1}{k^2} \sum_{\ell \neq \ell'} \mathbb{E}\left \langle T_{\ell,m}^{(1)} - \mathbb{E}\left [T_{\ell,m}^{(1)} \right ], T_{\ell',m}^{(1)} - \mathbb{E}\left [T_{\ell',m}^{(1)} \right ] \right \rangle_{L^2(\mu)} \\
        &= \frac{1}{k}\mathbb{E}\norm{T_{1,m}^{(1)} - \mathbb{E}\left [T_{1,m}^{(1)} \right ]}_{L^2(\mu)}^2 + \frac{k^2-k}{k^2} \left \langle \mathbb{E}\left [ T_{1,m}^{(1)} - \mathbb{E}\left [T_{1,m}^{(1)} \right ] \right ], \mathbb{E}\left [T_{2,m}^{(1)} - \mathbb{E}\left [T_{2,m}^{(1)}\right ] \right ] \right \rangle_{L^2(\mu)} \\
        &= \frac{1}{k}\mathbb{E}\norm{T_{1,m}^{(1)} - \mathbb{E}\left [T_{1,m}^{(1)}\right ]}_{L^2(\mu)}^2
    \end{align*}
    where we have used in the third line the independence and identical distribution to move to remove the summations and also bring the expectations inside the inner product.

    Now exchanging the order of integration we have
    \begin{equation*}
        \mathbb{E}\norm{T_{1,m}^{(1)} - \mathbb{E}\left [T_{1,m}^{(1)} \right ]}_{L^2(\mu)}^2 = \mathbb{E}_{\bm{X}} \left [ \mathbb{E} \left [ \norm{T_{1,m}^{(1)}(\bm{X}) - \mathbb{E}\left [T_{1,m}^{(1)}(\bm{X})\right ]}^2\right ] \right ].
    \end{equation*}
    Now observe that for every $i=1,...,m$ we have by exchangeability
    \begin{equation*}
        \mathbb{E}_{\bm{X}} \left [ \mathbb{E} \left [ \norm{T_{1,m}^{(1)}(\bm{X}) - \mathbb{E}\left [T_{1,m}^{(1)}(\bm{X})\right ]}^2\right ] \right ] = \mathbb{E}_{\bm{X}} \left [ \mathbb{E} \left [ \norm{T_{1,m}^{(i)}(\bm{X}) - \mathbb{E}\left [T_{1,m}^{(i)}(\bm{X})\right ]}^2\right ] \right ].
    \end{equation*}
    Importantly by the variational formula for variance we have the inequality for every fixed $\bm{x}$ and $i$
    \begin{equation*}
        \mathbb{E} \left [ \norm{T_{1,m}^{(i)}(\bm{x}) - \mathbb{E}\left [T_{1,m}^{(i)}(\bm{x})\right ]}^2\right ] \leq \mathbb{E} \left [ \norm{T_{1,m}^{(i)}(\bm{x}) - \mathbb{E}[\bm{Y}]}^2\right ]
    \end{equation*}
    From this it follows that
    \begin{equation*}
        \mathbb{E}_{\bm{X}} \left [ \mathbb{E} \left [ \norm{T_{1,m}^{(1)}(\bm{X}) - \mathbb{E}\left [T_{1,m}^{(1)}(\bm{X})\right ]}^2\right ] \right ] \leq \frac{1}{m}\sum_{i=1}^m \mathbb{E}_{\bm{X}} \left [ \mathbb{E} \left [ \norm{T_{1,m}^{(i)}(\bm{X}) - \mathbb{E}[\bm{Y}]}^2\right ] \right ] 
    \end{equation*}
    The key observation at this step is that $T_{1,m}^{(i)}(\bm{X})$ is a deterministic function of $(\bm{X}_1,...,\bm{X}_{i-1},\bm{X},\bm{X}_{i+1},...,\bm{X}_m,\cY)$ and $T_{1,m}(\bm{X}_i)$ is the same deterministic function of $(\bm{X}_1,...,\bm{X}_{i-1},\bm{X}_i,\bm{X}_{i+1},...,\bm{X}_m,\cY)$ and the random variables $(\bm{X}_1,...,\bm{X}_{i-1},\bm{X},\bm{X}_{i+1},...,\bm{X}_m,\cY)$ and $(\bm{X}_1,...,\bm{X}_{i-1},\bm{X}_i,\bm{X}_{i+1},...,\bm{X}_m,\cY)$ have the same distribution (both are distributed according to $\mu^{\otimes m}\otimes \nu^{\otimes m}$). From this we can conclude that
    \begin{equation*}
        \mathbb{E}_{\bm{X}}\left  [ \mathbb{E} \left [ \norm{T_{1,m}^{(i)}(\bm{X}) - \mathbb{E}[\bm{Y}]}^2\right ] \right ] = \mathbb{E} \left [ \norm{T_{1,m}(\bm{X}_i)-\mathbb{E}[\bm{Y}]}^2\right ].
    \end{equation*}
    Now using the above we have
    \begin{align*}
        \frac{1}{m}\sum_{i=1}^m \mathbb{E}_{\bm{X}} \left [ \mathbb{E} \left [ \norm{T_{1,m}^{(i)}(\bm{X}) - \mathbb{E}[\bm{Y}]}^2\right ] \right ] &= \frac{1}{m}\sum_{i=1}^m  \mathbb{E} \left [ \norm{T_{1,m}(\bm{X}_i)-\mathbb{E}[\bm{Y}]}^2\right ] \\
        &=   \mathbb{E} \left [ \frac{1}{m}\sum_{i=1}^m \norm{T_{1,m}(\bm{X}_i) -\mathbb{E}[\bm{Y}]}^2\right ].
    \end{align*}
    From here we will use equation \eqref{eq:Tn_formula} for $T_{1,m}(\bm{X}_i)$ and \eqref{eq:pn_sum} to write
    \begin{equation*}
        T_{1,m}(\bm{X}_i) - \mathbb{E}[\bm{Y}] = \sum_{j=1}^m \frac{1}{m}p_{1,m}(\bm{X}_i,\bm{Y}_j) (\bm{Y_j} - \mathbb{E}[\bm{Y}])
    \end{equation*}
    where the notation $p_{1,m}$ is intended to mean the relative density $p_m$ on batch 1. 

    From this formula we see that if we let $\mathbf{Y} \in \mathbb{R}^{m \times d}$ be the matrix with $j$'th row $\bm{Y}_j - \mathbb{E}[\bm{Y}]$ and let $\mathbf{P} \in \mathbb{R}^{m \times m}$ be the matrix with $\mathbf{P}_{i,j} = \frac{1}{m}p_{1,m}(\bm{X}_i,\bm{Y}_j)$ then  
    \begin{equation*}
        T_{1,m}(\bm{X}_i)- \mathbb{E}[\bm{Y}] = [\mathbf{P}\mathbf{Y}]_i
    \end{equation*}
    the $i$'th column of the matrix multiplication. Therefore
    \begin{equation*}
        \sum_{i=1}^m \norm{T_{1,m}(\bm{X}_i) -\mathbb{E}[\bm{Y}]}^2 = \sum_{i=1}^n \norm{[\mathbf{P}\mathbf{Y}]_i}^2 = \norm{\mathbf{P}\mathbf{Y}}_F^2
    \end{equation*}
    where $\norm{\cdot}_F$ is the Frobenius norm of a matrix. In addition, $\mathbf{P}$ is a doubly stochastic matrix and as such $\mathbf{P} \in B_m$ the Birkhoff polytope \cite{birkhoff1946tres}.
    Since the function
    \begin{equation*}
        \mathbf{A} \mapsto \norm{\mathbf{A}\mathbf{Y}}_F^2
    \end{equation*}
    is convex we have
    \begin{equation*}
        \norm{\mathbf{P}\mathbf{Y}}_F^2 \leq \max_{\mathbf{B} \in B_n} \norm{\mathbf{B}\mathbf{Y}}_F^2 = \norm{\mathbf{Y}}_F^2
    \end{equation*}
    which follows from Bauer's Maximum Principle \cite{bauer1958minimalstellen}, which states that the maximum of a convex function on a convex set occurs at the extreme points, and the fact that the extreme points of $B_n$ are permutation matrices \cite{birkhoff1946tres}. 
    Combining these facts and applying them above we have
    \begin{align*}
        \mathbb{E} \left [ \frac{1}{m}\sum_{i=1}^m \norm{T_{1,n}(\bm{X}_i)- \mathbb{E}[\bm{Y}]}^2\right ] &\leq \mathbb{E} \left [ \frac{1}{m} \norm{\mathbf{Y}}_F^2 \right ] \\
        &= \frac{1}{m} \sum_{i=1}^m \mathbb{E}[ \norm{\bm{Y}_i - \mathbb{E}[\bm{Y}]}^2 ] \\
        &= \mathbb{E}[ \norm{\bm{Y}- \mathbb{E}[\bm{Y}]}^2].
    \end{align*}
    Combining this with chain of the various bounds above completes the proof. 
    
    Under the assumption that $\norm{\bm{Y}}_{\psi_2} < \infty$ we have the bound
    \begin{align*}
        \mathbb{E}[\norm{\bm{Y}-\mathbb{E}[\bm{Y}]}^2] &\leq \mathbb{E}[\norm{\bm{Y}}^2] \\
        &= 2d\norm{\bm{Y}}_{\psi_2}^2\mathbb{E}\left [\log \left ( \exp \frac{\norm{\bm{Y}}^2}{2d\norm{\bm{Y}}_{\psi_2}^2} \right ) \right] \\
        &\leq 2d\norm{\bm{Y}}_{\psi_2}^2\log \left (\mathbb{E} \left [ \frac{\norm{\bm{Y}}^2}{2d\norm{\bm{Y}}_{\psi_2}^2} \right ] \right ) \leq 2\log(2)d\norm{\bm{Y}}_{\psi_2}^2.
    \end{align*}
    The first inequality follows from the fact that the expected value minimizes the variance. This proves the additional result.
\end{proof}

\subsection{Proof of Proposition \ref{prop:pi_tilde_properties}} \label{sec:pi_tilde_properties}

\begin{proof}
    Let $\phi:\mathbb{R}^d \times \mathbb{R}^d \rightarrow \mathbb{R}$ be a continuous and bounded function. By definition of $\tilde{\pi}$ we have 
    \begin{align*}
        \mathbb{E}_{(\bm{X},\bm{Y}) \sim \tilde{\pi}}[\phi(\bm{X},\bm{Y})] &= \mathbb{E}_{\cX,\cY}[\mathbb{E}_{(\bm{X},\bm{Y}) \sim \pi_m}[\phi(\bm{X},\bm{Y})]] \\
        &= \mathbb{E}_{\cX,\cY}\left [ \frac{1}{m^2}\sum_{i,j=1}^m \phi(\bm{X}_i,\bm{Y}_j) p_m(\bm{X}_i, \bm{Y}_j)\right ] \\
        &= \frac{1}{m^2}\sum_{i,j=1}^m \mathbb{E}_{\cX,\cY}\left [ \phi(\bm{X}_i,\bm{Y}_j) p_m(\bm{X}_i, \bm{Y}_j) \right ].
    \end{align*}
    Now because of the independence of every variable we can replace $\bm{X}_i$ and $\bm{Y}_j$ and condition on them
    \begin{align*}
        \mathbb{E}_{\cX,\cY}\left [ \phi(\bm{X}_i,\bm{Y}_j) p_m(\bm{X}_i, \bm{Y}_j) \right ] &= \mathbb{E}_{\bm{X},\bm{Y}} \left [ \mathbb{E}_{\cX,\cY}\left [ \phi(\bm{X}_i,\bm{Y}_j) p_m(\bm{X}_i, \bm{Y}_j) \ \big | \ \bm{X}_i = \bm{X}, \bm{Y}_j = \bm{Y} \right ] \right ] \\
        &= \mathbb{E}_{\bm{X},\bm{Y}} \left [ \mathbb{E}_{\cX,\cY}\left [ \phi(\bm{X}_1,\bm{Y}_1) p_m(\bm{X}_1, \bm{Y}_1) \ \big | \ \bm{X}_1 = \bm{X}, \bm{Y}_1 = \bm{Y} \right ] \right ].
    \end{align*}
    \eqref{eq:intuitive_meaning} follows by summing these expressions.

    To show that the first marginal is $\mu$ we have to check that for every test function $\phi' : \mathbb{R}^d \rightarrow \mathbb{R}$ we have
    \begin{equation*}
        \mathbb{E}_{\bm{X}}[\phi'(\bm{X})] = \mathbb{E}_{(\bm{X},\bm{Y}) \sim \tilde{\pi}}[\phi'(\bm{X})].
    \end{equation*}
    By the calculations above, we have for the test function $\phi(\bm{X},\bm{Y}) = \phi'(\bm{X})$
    \begin{align*}
        \mathbb{E}_{(\bm{X},\bm{Y}) \sim \tilde{\pi}}[\phi(\bm{X},\bm{Y})] &= \frac{1}{m^2}\sum_{i,j=1}^m\mathbb{E}_{\cX,\cY}\left [ \phi(\bm{X}_i,\bm{Y}_j) p_m(\bm{X}_i,\bm{Y}_j) \right ] \\
        &= \frac{1}{m}\sum_{i=1}^m\mathbb{E}_{\cX,\cY}\left [ \phi'(\bm{X}_i) \sum_{j=1}^m \frac{1}{m} p_m(\bm{X}_i,\bm{Y}_j) \right ] \\
        &= \frac{1}{m}\sum_{i=1}^m\mathbb{E}_{\cX,\cY}\left [ \phi'(\bm{X}_i) \right ] = \mathbb{E}_{\bm{X}}[\phi'(\bm{X})]
    \end{align*}
    where we have leveraged \eqref{eq:pn_sum} to remove the sum in the jump to the last line. The proof that the second marginal is $\nu$ is analogous. 

    The result for the conditional distribution follows by comparing the joint and marginal.
\end{proof}

\subsection{Proof of Corollary \ref{cor:pi_tilde_cond_exp}} \label{sec:pi_tilde_cond_exp}

\begin{proof}
    From Proposition \ref{prop:pi_tilde_properties} we have
    \begin{align*}
        \mathbb{E}_{\bm{Y} \sim \tilde{\pi}(\cdot|\bm{x})}[\bm{Y}] 
        &= \mathbb{E}_{\bm{Y}} \left [ \bm{Y} \frac{1}{m^2}\sum_{i,j=1}^m \mathbb{E}_{\cX,\cY}\left [ p_m(\bm{X}_i,\bm{Y}_j) \ \big | \ \bm{X}_i = \bm{x}, \bm{Y}_j = \bm{Y} \right ] \right ] \\
        &= \mathbb{E}_{\bm{Y}} \left [ \frac{1}{m^2}\sum_{i,j=1}^m \mathbb{E}_{\cX,\cY}\left [ \bm{Y}_j p_m(\bm{X}_i,\bm{Y}_j) \ \big | \ \bm{X}_i = \bm{x}, \bm{Y}_j = \bm{Y} \right ] \right ] \\
        &= \frac{1}{m^2}\sum_{i,j=1}^m \mathbb{E}_{\bm{Y}} \left [ \mathbb{E}_{\cX,\cY}\left [ \bm{Y}_j p_m(\bm{X}_i,\bm{Y}_j) \ \big | \ \bm{X}_i = \bm{x}, \bm{Y}_j = \bm{Y} \right ] \right ] 
    \end{align*}
    As mentioned above, by independence and identical distribution of $\bm{Y}$ and $\bm{Y}_j$ we have
    \begin{equation*}
        \mathbb{E}_{\bm{Y}} \left [ \mathbb{E}_{\cX,\cY}\left [ \bm{Y}_j p_m (\bm{X}_i,\bm{Y}_j) \ \big | \ \bm{X}_i = \bm{x}, \bm{Y}_j = \bm{Y} \right ] \right ] =  \mathbb{E}_{\cX,\cY}\left [ \bm{Y}_j p_m (\bm{X}_i,\bm{Y}_j) \ \big | \ \bm{X}_i = \bm{x} \right ].
    \end{equation*}
    Applying this to every term in the summation and continuing the calculation above gives
    \begin{align*}
        \mathbb{E}_{\bm{Y} \sim \tilde{\pi}(\cdot|\bm{x})}[\bm{Y}]  &= \frac{1}{m^2}\sum_{i,j=1}^m  \mathbb{E}_{\cX,\cY}\left [ \bm{Y}_j p_m(\bm{X}_i,\bm{Y}_j) \ \big | \ \bm{X}_i = \bm{x} \right ]  \\
        &= \frac{1}{m}\sum_{i=1}^m  \mathbb{E}_{\cX,\cY}\left [ \sum_{j=1}^m \bm{Y}_j \frac{1}{m} p_m(\bm{X}_i,\bm{Y}_j) \ \big | \ \bm{X}_i = \bm{x} \right ]  \\
        &= \frac{1}{m} \sum_{i=1}^m  \mathbb{E}_{\cX,\cY} \left [T_m(\bm{X_i}) \ \big | \ \bm{X}_i = \bm{x} \right ] \\
        &= \frac{1}{m} \sum_{i=1}^m  \mathbb{E}_{\cX,\cY} \left [T_{1,m}^{(i)}(\bm{x}) \right ].
    \end{align*}
    By exchangeability it holds for $i=1,...,m$ that
    \begin{equation*}
         \mathbb{E}_{\cX,\cY} \left [T_{1,m}^{(i)}(\bm{x}) \right ] =  \mathbb{E}_{\cX,\cY} \left [T_{1,m}^{(1)}(\bm{x}) \right ].
    \end{equation*}
    Plugging this in above gives
    \begin{align*}
        \mathbb{E}_{\bm{Y} \sim \tilde{\pi}(\cdot|\bm{x})}[\bm{Y}] = \frac{1}{m}\sum_{i=1}^m \mathbb{E} \left [ T^{(i)}_{1,m}(\bm{x}) \right ] = \mathbb{E}_{\cX,\cY} \left [T^{(1)}_{1,m}(\bm{x}) \right ].
    \end{align*}
    To conclude we have by linearity and identical distributions of $T_{1,m}^{(1)}(\bm{x}),...,T_{k,m}^{(1)}(\bm{x})$ that 
    \begin{equation*}
        \mathbb{E}[\hat{T}_n(\bm{x})] =  \mathbb{E}\left [ \frac{1}{k} \sum_{\ell=1}^k T^{(1)}_{\ell,m}(\bm{x}) \right ] = \mathbb{E}[T^{(1)}_{1,m}(\bm{x})] = \mathbb{E}_{\bm{Y} \sim \tilde{\pi}(\cdot|\bm{x})}[\bm{Y}].
    \end{equation*}
\end{proof}

\subsection{Proof of Proposition \ref{prop:kl_bound}} \label{sec:kl_bound}

\begin{proof}
    For the first equality note that for any two probability measures $\alpha,\beta$ over$\mathbb{R}^d \times \mathbb{R}^d$ with first marginals (over $\mathbb{R}^{d}$) $\alpha_1, \beta_1$ a direct calculation shows
    \begin{equation*}
        \alpha(\mathsf{KL}(\alpha(\cdot|\bm{X}) \ || \ \beta(\cdot|\bm{X}))) = \mathsf{KL}(\alpha\  || \ \beta)) - \mathsf{KL}(\alpha_1 \ || \ \beta_1)
    \end{equation*}
    so if $\alpha_1 = \beta_1$ then $\mathsf{KL}(\alpha_1 \ || \ \beta_1) = 0$ and 
    \begin{equation*}
         \alpha(\mathsf{KL}(\alpha(\cdot|\bm{X}) \ || \ \beta(\cdot|\bm{X}))) = \mathsf{KL}(\alpha\  || \ \beta)).
    \end{equation*}
    Applying this with $\alpha = \tilde{\pi}$ and $\beta = \pi_\varepsilon$ the first equality will follow if we can show that these measures have the same first marginal, in this case $\mu$. For $\pi_\varepsilon$ this is clear from the coupling constraint, and for $\tilde{\pi}$ the result follows from Proposition \ref{prop:pi_tilde_properties}.

    We now start working on the expression for the KL-Divergence. Using Proposition \ref{prop:pi_tilde_properties} and equation \eqref{eq:opt_relation} we can express the KL-Divergence, then apply Jensen's inequality twice:
    \begin{align*}
        & \mathsf{KL}(\tilde{\pi}(\bm{\cdot}|\bm{x})  \ || \ \pi_\varepsilon(\bm{\cdot}|\bm{x})) \notag \\
        & = \mathbb{E}_{\bm{Y}}\left [\left(\frac{1}{m^2} \sum_{i, j=1}^{m} \mathbb{E}_{\cX, \cY}\left [p_m(\bm{X}_i, \bm{Y}_j) \ | \ \bm{X}_i = \bm{x}, \bm{Y}_j = \bm{Y})\right ] \right)\log \frac{\frac{1}{m^2} \sum_{i, j=1}^{m} \mathbb{E}_{\cX, \cY}[p_m(\bm{X}_i, \bm{Y}_j) \ | \ \bm{X}_i = \bm{x}, \bm{Y}_j = \bm{Y})]}{p_\varepsilon(\bm{x}, \bm{Y})} \right ]\\
        & \leq \mathbb{E}_{\bm{Y}}\left [ \frac{1}{m^2} \sum_{i, j=1}^{m} \left(\mathbb{E}_{\cX, \cY}[p_m(\bm{X}_i, \bm{Y}_j) \ | \ \bm{X}_i = \bm{x}, \bm{Y}_j = \bm{Y})] \log \frac{ \mathbb{E}_{\cX, \cY}[p_m(\bm{X}_i, \bm{Y}_j) \ | \ \bm{X}_i = \bm{x}, \bm{Y}_j = \bm{Y})]}{p_\varepsilon(\bm{x}, \bm{Y})}\right) \right ]\\
        & \leq \mathbb{E}_{\bm{Y}} \left [ \frac{1}{m^2} \sum_{i,j=1}^{m} \mathbb{E}_{\cX, \cY}\left[ p_m(\bm{X}_i, \bm{Y}_j) \log \frac{p_m(\bm{X}_i, \bm{Y}_j)}{p_\varepsilon(\bm{x}, \bm{Y})} \ \bigg | \ \bm{X}_i = \bm{x}, \bm{Y}_j = \bm{Y} \right] \right ] \\
        & = \frac{1}{m^2} \sum_{i,j=1}^{m} \mathbb{E}_{\bm{Y}} \left [ \mathbb{E}_{\cX, \cY} \left [ p_m(\bm{X}_i, \bm{Y}_j) \log \frac{p_m(\bm{X}_i, \bm{Y}_j)}{p_\varepsilon(\bm{x}, \bm{Y}_j)} \ \bigg | \ \bm{X}_i = \bm{x},\bm{Y}_j = \bm{Y} \right] \right ].
    \end{align*}
    
    Now as in the proof of Proposition \ref{prop:pi_tilde_properties}, we have
    \begin{equation*}
        \mathbb{E}_{\bm{Y}} \left [ \mathbb{E}_{\cX, \cY} \left [ p_m(\bm{X}_i, \bm{Y}_j) \log \frac{p_m(\bm{X}_i, \bm{Y}_j)}{p_\varepsilon(\bm{x}, \bm{Y}_j)} \ \bigg | \ \bm{X}_i = \bm{x},\bm{Y}_j = \bm{Y} \right] \right ] =  \mathbb{E}_{\cX, \cY} \left [ p_m(\bm{X}_i, \bm{Y}_j) \log \frac{p_m(\bm{X}_i, \bm{Y}_j)}{p_\varepsilon(\bm{x}, \bm{Y}_j)} \ \bigg | \ \bm{X}_i = \bm{x}\right] 
    \end{equation*}
    Applying this identity and using linearity we have
    \begin{align*}
        \mathsf{KL}(\tilde{\pi}(\cdot | \bm{x}) \ || \  \pi_\varepsilon(\cdot  |  \bm{x})) \leq \frac{1}{m^2} \sum_{i,j=1}^m  \mathbb{E}_{\cX, \cY} \left [ p_m(\bm{X}_i, \bm{Y}_j) \log \frac{p_m(\bm{X}_i, \bm{Y}_j)}{p_\varepsilon(\bm{x}, \bm{Y}_j)} \ \bigg | \ \bm{X}_i = \bm{x}\right].
    \end{align*}
    This holds for every fixed $\bm{x}$. In particular this implies:
    \begin{align*}
        \mathbb{E}_{\bm{X}}\left [\mathsf{KL}(\tilde{\pi}(\bm{\cdot}|\bm{X}) || \pi_\varepsilon(\cdot|\bm{X}))  \right ] &\leq \mathbb{E}_{\bm{X}}\left [ \frac{1}{m^2} \sum_{i,j=1}^m  \mathbb{E}_{\cX, \cY} \left [ p_m(\bm{X}_i, \bm{Y}_j) \log \frac{p_m(\bm{X}_i, \bm{Y}_j)}{p_\varepsilon(\bm{X}, \bm{Y}_j)} \ \bigg | \ \bm{X}_i = \bm{X} \right] \right ] \\
        &= \frac{1}{m^2}\sum_{i,j=1}^m \mathbb{E}_{\bm{X}}\left [  \mathbb{E}_{\cX, \cY} \left [ p_m(\bm{X}_i, \bm{Y}_j) \log \frac{p_m(\bm{X}_i, \bm{Y}_j)}{p_\varepsilon(\bm{X}_i, \bm{Y}_j)} \ \bigg | \ \bm{X}_i = \bm{X} \right] \right ] \\
        &= \frac{1}{m^2}\sum_{i,j=1}^m \mathbb{E}_{\cX, \cY} \left [ p_m(\bm{X}_i, \bm{Y}_j) \log \frac{p_m(\bm{X}_i, \bm{Y}_j)}{p_\varepsilon(\bm{X}_i, \bm{Y}_j)} \right] \\
        &= \mathbb{E}_{\cX, \cY} \left [ \frac{1}{m^2}\sum_{i,j=1}^m p_m(\bm{X}_i, \bm{Y}_j) \log \frac{p_m(\bm{X}_i, \bm{Y}_j)}{p_\varepsilon(\bm{X}_i, \bm{Y}_j)} \right]
    \end{align*}
    where on the third line we have used the same sample swap argument except this time applied to $\bm{X}$.

    Now we turn our attention to the expression we are taking expectations of.
    To start we have by definition of $p$ and $p_m$
    \begin{align*}
        \log \frac{p_m(\bm{X}_i, \bm{Y}_j)}{p_\varepsilon(\bm{X}_i, \bm{Y}_j)} &= \frac{1}{\varepsilon} \left (f_m(\bm{X}_i) + g_m(\bm{Y}_j) - \frac{1}{2}\norm{\bm{X}_i - \bm{Y}_j}^2 \right ) - \frac{1}{\varepsilon} \left ( f_\varepsilon(\bm{X}_i) + g_\varepsilon(\bm{Y}_j) - \frac{1}{2}\norm{\bm{X}_i - \bm{Y}_j}^2\right ) \\
        &= \frac{1}{\varepsilon} \left ( f_m(\bm{X}_i) + g_m(\bm{Y}_j) - f_\varepsilon(\bm{X}_i) - g_\varepsilon(\bm{Y}_j)  \right ).
    \end{align*}
    We also have by \eqref{eq:pn_sum}
    \begin{align*}
        &\sum_{i,j=1}^m \frac{1}{m^2} p_m(\bm{X}_i, \bm{Y}_j) (f_m(\bm{X}_i) + g_m(\bm{Y}_j)) \\
        &= \sum_{i=1}^m \frac{1}{m}f_m(\bm{X}_i)\left ( \sum_{j=1}^m \frac{1}{m} p_m(\bm{X}_i, \bm{Y}_j) \right ) + \sum_{j=1}^m \frac{1}{m}g_m(\bm{Y}_j)\left ( \sum_{i=1}^m \frac{1}{m} p_m(\bm{X}_i, \bm{Y}_j) \right ) \\
        &=  \sum_{i=1}^m \frac{1}{m}f_m(\bm{X}_i) + \sum_{j=1}^m \frac{1}{m}g_m(\bm{Y}_j) \\
        &= \mu_{\cX}(f_m) + \nu_{\cY}(g_m).
    \end{align*}
    In addition 
    \begin{align*}
        &-\varepsilon(\mu_{\cX} \otimes \nu_{\cY}) \left ( \expe \left ( f_m + g_m - \frac{1}{2}\norm{\bm{x} - \bm{y}}^2\right )\right ) + \varepsilon \\
        &= \varepsilon -\varepsilon \frac{1}{m^2} \sum_{i,j=1}^m p_m{\bm{X}_i,\bm{Y}_j} \\
        &= \varepsilon - \varepsilon \frac{1}{m} \sum_{i=1}^m \frac{1}{m} \sum_{j=1}^m p_m(\bm{X}_i,\bm{Y}_j) = \varepsilon - \varepsilon \frac{1}{m} \sum_{i=1}^m 1 = 0.
    \end{align*}
    Combining the previous two calculations we have
    \begin{equation*}
        \sum_{i,j=1}^m \frac{1}{m^2} p_m(\bm{X}_i, \bm{Y}_j) (f_m(\bm{X}_i) + g_m(\bm{Y}_j)) = S_\varepsilon(\cX,\cY).
    \end{equation*}
    
    In addition
    \begin{align*}
        &\mathbb{E}_{\cX,\cY} \left [ \frac{1}{m^2} \sum_{i,j=1}^m p_m(\bm{X}_i, \bm{Y}_j)(f_\varepsilon(\bm{X}_i) + g_\varepsilon(\bm{Y}_j))  \right ] \\
        &= \mathbb{E}_{\cX,\cY} \left [ \sum_{i=1}^m f_\varepsilon(\bm{X}_i) \sum_{j=1}^m \frac{1}{m^2} p_m(\bm{X}_i, \bm{Y}_j)  \right ]  + \mathbb{E}_{\cX,\cY} \left [ \sum_{j=1}^m g_\varepsilon(\bm{Y}_j) \sum_{i=1}^m \frac{1}{m^2} p_m(\bm{X}_i, \bm{Y}_j)  \right ] \\
        &=  \mathbb{E}_{\cX,\cY} \left [ \sum_{i=1}^m f_\varepsilon(\bm{X}_i) \frac{1}{m}  \right ]  + \mathbb{E}_{\cX,\cY} \left [ \sum_{j=1}^m g_\varepsilon(\bm{Y}_j) \frac{1}{m} \right ] \\
        &= \mathbb{E}_{\bm{X}} [f_\varepsilon(\bm{X})] + \mathbb{E}_{\bm{Y}} [g_\varepsilon(\bm{Y})] = S_\varepsilon(\mu,\nu)
    \end{align*}
    where we have made use of the \eqref{eq:pn_sum} to deal with the summations.  

    Collecting inequalities above we have shown
    \begin{equation*}
        \mathbb{E}_{\bm{X}}\left [\mathsf{KL}(\tilde{\pi}(\bm{\cdot}|\bm{X}) || \pi_\varepsilon(\cdot|\bm{X}))  \right ] \leq \frac{1}{\varepsilon} \left ( \mathbb{E}S_\varepsilon{(\cX,\cY)} - S_\varepsilon(\mu,\nu) \right ) \leq \frac{1}{\varepsilon} \mathbb{E} \left [\left | S_\varepsilon{(\cX,\cY)} - S_\varepsilon(\mu,\nu) \right | \right ].
    \end{equation*}
 \end{proof}

\section{Proofs From Section \ref{sec:bounded_slc}}

\subsection{Proof of Theorem \ref{thm:ultimate_bounded}}

\begin{proof}
    We start by applying Lemma \ref{lem:triangle}:
    \begin{equation*}
        \mathbb{E}_{\cX,\cY} [\|\hat{T}_n - T_\varepsilon \|^2_{L^2(\mu)}] \leq 2\mathbb{E}_{\cX,\cY} [\|\hat{T}_n - \mathbb{E}[\hat{T}_n]\|_{L^2(\mu)}^2]  
         +2 \|\mathbb{E}[\hat{T}_n] - T_\varepsilon\|_{L^2(\mu)}^2.
    \end{equation*}
    The first term is handled by Lemma \ref{lem:easy_term} and the variational formula for variance:
    \begin{equation*}
        \mathbb{E}_{\cX,\cY} [\|\hat{T}_n - \mathbb{E}[\hat{T}_n]\|_{L^2(\mu)}^2] \leq \mathbb{E}[\norm{\bm{Y} - \mathbb{E}\bm{Y}}^2]/k \leq \mathbb{E}[\norm{\bm{Y}}^2]/k = R^2/k.
    \end{equation*}
    The latter term is handled by the following chain
    \begin{align*}
        \|\mathbb{E}[\hat{T}_n] - T_\varepsilon\|_{L^2(\mu)}^2 &\leq \mathbb{E}_{\bm{X}} W_1^2(\tilde{\pi}(\cdot|\bm{X}), \pi_{\varepsilon}(\cdot|\bm{X})) & (\text{Eq.  \ref{eq:expectation_vs_w1}}) \\
        &\leq \mathbb{E}_{\bm{X}}8R\cdot \mathsf{KL}(\tilde{\pi}(\cdot|\bm{X}) \ || \ \pi_{\varepsilon}(\cdot|\bm{X})) & (\text{Thm. \ref{thm:bounded_T1}}) \\
        &\leq \frac{1}{\varepsilon} \mathbb{E}_{\cX,\cY} |S_\varepsilon(\mu,\nu) - S_\varepsilon(\cX,\cY)| & (\text{Prop. \ref{prop:kl_bound}}) \\
        &\leq \left ( 1 + \frac{\sigma_0^{\lceil 5d/2 \rceil + 6}}{\varepsilon^{\lceil 5d/4\rceil + 3}}\right ) \cdot \frac{1}{\sqrt{m}} & (\text{Thm. \ref{thm:mena_weed}})
    \end{align*}
\end{proof}

\subsection{Proof of Theorem \ref{thm:strongly_lc_lap}} \label{sec:strongly_lc_lap}

The proof of Theorem \ref{thm:strongly_lc_lap} follows immediately from the following two definitions and theorems. This material is extracted from Chapter 5 in \cite{ledoux2001concentration}.

\begin{definition} Given a probability measure $\mu$ on some measurable space $(\Omega, \Sigma)$, for every non-negative measurable function $f$ on $(\Omega,\Sigma)$ define its \textit{entropy} as
\begin{equation*}
    \text{Ent}_\mu(f) = \int f\log f d\mu - \left ( \int f d\mu\right ) \log \left ( \int f d\mu \right )
\end{equation*}
if $\int f\log(1+f)d\mu < \infty$ and $+\infty$ if not.
\end{definition}

\begin{definition}
    A function $f:\mathbb{R}^d \rightarrow \mathbb{R}$ is said to be locally Lipschitz if for every $x \in \mathbb{R}^d$ there exists a $\delta > 0$ such that
    \begin{equation*}
        \sup_{y \in B(0,\delta), y \neq x} \frac{|f(x) - f(y)|}{\norm{x - y}} < \infty.
    \end{equation*}
    For a locally Lipschitz function we will use the notation
    \begin{equation*}
        |\nabla f(x)| = \limsup_{y \rightarrow x} \frac{|f(x) - f(y)|}{\norm{x-y}}.
    \end{equation*}
\end{definition}

\begin{definition} A probability measure $\mu$ on the Borel sets of $\mathbb{R}^n$ is said to satisfy a \textit{logarithmic Sobolev inequality} if for some constant $C > 0$ and all locally Lipschitz functions $f$ on $\mathbb{R}^d$,
\begin{equation} \label{eq:LSI}
    \text{Ent}_\mu(f^2) \leq 2C \int |\nabla f|^2 d\mu.
\end{equation}
\end{definition}

\begin{theorem}\label{thm:log_concave_implies_lsi} (\cite{ledoux2001concentration}Theorem 5.2) Let $d\mu = e^{-U}dx$ where, for some $c > 0$, $\nabla^2 U(x) \succeq c\text{Id}$ uniformly in $x \in \mathbb{R}^d.$ Then for all locally Lipschitz functions $f$ on $\mathbb{R}^d$,
\begin{equation*}
    \text{Ent}_\mu(f^2) \leq \frac{2}{c}\int | \nabla f |^2 d\mu. 
\end{equation*}
\end{theorem}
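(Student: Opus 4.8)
The plan is to run the Bakry--\'Emery $\Gamma_2$-calculus (semigroup interpolation) argument; this is essentially Ledoux's own proof. Introduce the diffusion semigroup $(P_t)_{t\geq 0}$ generated by $L = \Delta - \nabla U\cdot\nabla$, i.e.\ the semigroup of the Langevin dynamics $dX_t = \sqrt{2}\,dB_t - \nabla U(X_t)\,dt$. I will use the standard structural facts freely: $\mu$ is invariant and reversible for $P_t$ (so $\int Lh\,d\mu = 0$ and $\int (Lh)\,\phi\,d\mu = -\int \nabla h\cdot\nabla\phi\,d\mu$), the carr\'e du champ is $\Gamma(h) = |\nabla h|^2$, and $P_t g\to\int g\,d\mu$ as $t\to\infty$. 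By a routine truncation and mollification argument it suffices to establish the inequality for $f$ smooth, bounded, and bounded away from $0$; setting $g = f^2$ and normalizing $\int g\,d\mu = 1$, the claim reduces to $\text{Ent}_\mu(g)\leq \tfrac{1}{2c}\,I(g)$ where $I(g) := \int \frac{|\nabla g|^2}{g}\,d\mu$ is the Fisher information, because $I(f^2) = 4\int |\nabla f|^2\,d\mu$.

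\emph{Step 1 (entropy--dissipation identity).} Differentiate the entropy along the flow: using $\int L(P_sg)\,d\mu = 0$ and integration by parts,
\begin{equation*}
    \frac{d}{ds}\,\text{Ent}_\mu(P_s g) = \int (L P_s g)\,(\log P_s g + 1)\,d\mu = -\int \frac{|\nabla P_s g|^2}{P_s g}\,d\mu = -I(P_s g).
\end{equation*}
Since $P_s g\to 1$, one has $\text{Ent}_\mu(P_s g)\to 0$ as $s\to\infty$, so integrating yields $\text{Ent}_\mu(g) = \int_0^\infty I(P_s g)\,ds$.

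\emph{Step 2 (curvature estimate).} Bochner's formula for $L$ gives the pointwise identity $\Gamma_2(h) := \tfrac12 L|\nabla h|^2 - \nabla h\cdot\nabla(Lh) = \|\nabla^2 h\|_{\mathrm{HS}}^2 + \langle \nabla^2 U\,\nabla h,\nabla h\rangle \geq c\,|\nabla h|^2$, i.e.\ the curvature bound $CD(c,\infty)$. A standard interpolation along $t\mapsto P_t\bigl(\sqrt{\Gamma(P_{s-t}g)}\bigr)$, whose $t$-derivative is nonnegative with rate $c$ by $\Gamma_2\geq c\,\Gamma$ together with the elementary bound $|\nabla\Gamma(h)|^2 \leq 4\,\|\nabla^2 h\|_{\mathrm{HS}}^2\,|\nabla h|^2 \leq 4\,\Gamma_2(h)\,\Gamma(h)$, produces the gradient commutation estimate $|\nabla P_s g|\leq e^{-cs}\,P_s(|\nabla g|)$. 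Combining this with the Cauchy--Schwarz inequality for the Markov kernel, $\bigl(P_s |\nabla g|\bigr)^2 \leq (P_s g)\cdot P_s\!\bigl(|\nabla g|^2/g\bigr)$, and then invariance of $\mu$,
\begin{equation*}
    I(P_s g) = \int \frac{|\nabla P_s g|^2}{P_s g}\,d\mu \leq e^{-2cs}\int \frac{\bigl(P_s|\nabla g|\bigr)^2}{P_s g}\,d\mu \leq e^{-2cs}\int P_s\!\Bigl(\frac{|\nabla g|^2}{g}\Bigr)\,d\mu = e^{-2cs}\,I(g).
\end{equation*}
Inserting this into Step 1 gives $\text{Ent}_\mu(g)\leq I(g)\int_0^\infty e^{-2cs}\,ds = \tfrac{1}{2c}\,I(g)$, and substituting $g=f^2$ finishes the proof.

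\emph{Main obstacle.} The genuinely technical step is the gradient commutation estimate $|\nabla P_s g|\leq e^{-cs}P_s(|\nabla g|)$, i.e.\ the derivative computation for $P_t\bigl(\sqrt{\Gamma(P_{s-t}g)}\bigr)$: this needs care because $\sqrt{\Gamma}$ is differentiable only where $\Gamma>0$ (handled by working with $\sqrt{\Gamma+\delta}$ and letting $\delta\downarrow 0$) and because Bochner's formula must be applied to functions known to be smooth enough (handled via the regularizing property of $P_t$ and elliptic regularity). The remaining points --- differentiating under the integral, justifying $\text{Ent}_\mu(P_s g)\to 0$ (a consequence of the Poincar\'e inequality, itself implied by $CD(c,\infty)$), and passing from smooth bounded $g$ bounded away from $0$ to general locally Lipschitz $f$ --- are routine approximation arguments, and I would refer to Ledoux, Chapter~5, for these. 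An alternative route that bypasses all semigroup machinery is the Bobkov--Ledoux derivation of the log-Sobolev inequality directly from the Pr\'ekopa--Leindler inequality, which also yields the sharp constant $2/c$; I would mention it as a second option.
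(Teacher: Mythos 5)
This statement is one the paper does not prove at all: it is imported verbatim as Theorem 5.2 of Ledoux's monograph and used as a black box in the derivation of Theorem \ref{thm:strongly_lc_lap}. So there is no ``paper proof'' to compare against; what you have written is a sketch of the standard Bakry--\'Emery semigroup proof of the cited result itself. As a sketch it is correct: the generator, the Bochner identity $\Gamma_2(h)=\|\nabla^2 h\|_{\mathrm{HS}}^2+\langle\nabla^2U\,\nabla h,\nabla h\rangle$, the de Bruijn identity, the reduction $I(f^2)=4\int|\nabla f|^2\,d\mu$, and the final integration $\int_0^\infty e^{-2cs}\,ds=\tfrac1{2c}$ all land on the claimed constant $2/c$ under the paper's convention $\mathrm{Ent}_\mu(f^2)\le 2C\int|\nabla f|^2\,d\mu$ with $C=1/c$.

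One small but real slip in the key step: the chain of bounds you quote, $|\nabla\Gamma(h)|^2\le 4\|\nabla^2h\|_{\mathrm{HS}}^2\,\Gamma(h)\le 4\,\Gamma_2(h)\,\Gamma(h)$, fed into the derivative of $t\mapsto P_t\bigl(\sqrt{\Gamma(P_{s-t}g)}\bigr)$ only yields that this derivative is nonnegative, hence $|\nabla P_sg|\le P_s(|\nabla g|)$ \emph{without} the factor $e^{-cs}$ --- and without that factor the integral over $s$ diverges. To extract the rate you must stop at the first inequality and use the decomposition $\Gamma_2(h)-\|\nabla^2h\|_{\mathrm{HS}}^2=\langle\nabla^2U\,\nabla h,\nabla h\rangle\ge c\,\Gamma(h)$, which gives the differential inequality $\phi'(t)\ge c\,\phi(t)$ and hence the Gr\"onwall bound $|\nabla P_sg|\le e^{-cs}P_s(|\nabla g|)$. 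All the needed ingredients appear in your write-up, but the way they are combined as literally stated loses the exponential decay. With that correction (and the approximation arguments you rightly defer to Ledoux), the proposal is a faithful reconstruction of the proof the paper chose to cite rather than reproduce; the Pr\'ekopa--Leindler route you mention at the end is a genuine alternative that avoids semigroups entirely and also gives the sharp constant.
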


\begin{theorem} \label{thm:LSI_controls_lap} Let $\mu$ be a probability measure on $\mathbb{R}^d$ and satisfy $(\ref{eq:LSI})$ for some constant $C > 0$. Then 
\begin{equation*}
    E_{\mu}(\lambda) \leq e^{C\lambda^2/2}, \hspace{0.5cm} \lambda \geq 0.
\end{equation*}
\end{theorem}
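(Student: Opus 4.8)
The plan is to run the classical Herbst argument. Fix a $1$-Lipschitz function $f$ with $\mu(f)=0$; since $E_\mu(\lambda)$ is by definition the supremum of $\mathbb{E}_\mu[e^{\lambda f}]$ over all such $f$, it suffices to bound $H(\lambda) := \mathbb{E}_\mu[e^{\lambda f}]$ by $e^{C\lambda^2/2}$ with a constant uniform in $f$. To make the differentiation steps below rigorous I would first assume in addition that $f$ is bounded --- e.g. replace $f$ by the truncation $f_N = \max(-N,\min(N,f))$, which is still $1$-Lipschitz (after recentering so that $\mu(f_N)=0$), so that $H$ is finite, smooth on $[0,\infty)$, and differentiation under the integral is legitimate --- and then send $N\to\infty$ at the very end via monotone/dominated convergence.

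The core computation applies the log-Sobolev inequality \eqref{eq:LSI} to the locally Lipschitz function $g = e^{\lambda f/2}$, so that $g^2 = e^{\lambda f}$. On the left-hand side one computes directly
\begin{equation*}
    \text{Ent}_\mu(g^2) = \lambda H'(\lambda) - H(\lambda)\log H(\lambda).
\end{equation*}
On the right-hand side, the chain rule gives $|\nabla g| = \tfrac{\lambda}{2}|\nabla f|\,e^{\lambda f/2} \le \tfrac{\lambda}{2}\,e^{\lambda f/2}$, using the $1$-Lipschitz bound $|\nabla f|\le 1$, hence $\int |\nabla g|^2\,d\mu \le \tfrac{\lambda^2}{4}H(\lambda)$. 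Plugging these into \eqref{eq:LSI} yields the differential inequality
\begin{equation*}
    \lambda H'(\lambda) - H(\lambda)\log H(\lambda) \le \frac{C\lambda^2}{2}H(\lambda), \qquad \lambda \ge 0.
\end{equation*}

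Now introduce $K(\lambda) = \tfrac{1}{\lambda}\log H(\lambda)$ on $(0,\infty)$. A short computation shows
\begin{equation*}
    K'(\lambda) = \frac{1}{\lambda^2 H(\lambda)}\Bigl(\lambda H'(\lambda) - H(\lambda)\log H(\lambda)\Bigr) \le \frac{C}{2}.
\end{equation*}
Since $H(0)=1$ and $H'(0)=\mathbb{E}_\mu[f]=0$, a second-order Taylor expansion gives $\log H(\lambda) = O(\lambda^2)$ near $0$, so $K(\lambda)\to 0$ as $\lambda\to 0^+$. Integrating the bound $K'\le C/2$ from $0$ to $\lambda$ gives $K(\lambda)\le \tfrac{C\lambda}{2}$, i.e. $H(\lambda)\le e^{C\lambda^2/2}$. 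Removing the truncation and taking the supremum over admissible $f$ completes the proof.

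The main obstacle is entirely technical rather than conceptual: one must guarantee that $H$ is finite and differentiable and that differentiation under the integral sign is valid --- this is precisely why the boundedness truncation is introduced --- and then check that the recentering correction and the limit $N\to\infty$ do not spoil the final bound (this is where the log-Sobolev-implied exponential integrability, or a direct dominated-convergence argument, is used). The differential-inequality manipulation itself (the ``Herbst trick'') is routine once $H$ is known to be well-behaved.
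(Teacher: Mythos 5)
Your proof is correct, and it is precisely the classical Herbst argument: apply the log-Sobolev inequality to $g = e^{\lambda f/2}$, compute $\operatorname{Ent}_\mu(g^2) = \lambda H'(\lambda) - H(\lambda)\log H(\lambda)$ and bound $\int |\nabla g|^2\,d\mu \le \tfrac{\lambda^2}{4}H(\lambda)$ via $|\nabla f|\le 1$, then solve the resulting differential inequality for $K(\lambda)=\lambda^{-1}\log H(\lambda)$ with the boundary condition $K(0^+)=0$ coming from $H(0)=1$, $H'(0)=\mu(f)=0$. The truncation-and-recenter step to justify finiteness and differentiation under the integral, followed by Fatou to remove the truncation, is the standard way to make the argument rigorous, and it goes through here because $f_N$ remains $1$-Lipschitz, $|f_N|\le|f|$, and LSI already implies the exponential integrability needed for the dominated-convergence/Fatou passage. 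For context, the paper itself does not supply a proof of this theorem; it states it as a known result extracted from Chapter 5 of Ledoux's monograph, and the argument given there is the same Herbst scheme you have reproduced. So your proof matches the source the paper relies on, with the technical regularity caveats handled appropriately.
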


We can now proceed to the proof of Theorem \ref{thm:strongly_lc_lap} which combines the results above.
\begin{proof}[Proof of Theorem \ref{thm:strongly_lc_lap}] 
    Since $\nu$ has density $d\nu = e^{-W}dx$ and $\nabla^2U(y) \succeq c\text{Id}$ for $c > 0$ we have by Theorem \ref{thm:log_concave_implies_lsi} that $\nu$ satisfies
    \begin{equation*}
        \text{Ent}_\nu(f^2) \leq \frac{2}{c}\int | \nabla f |^2 d\nu
    \end{equation*}
    for all locally Lipschitz functions $f$ on $\mathbb{R}^d$. Therefore $\nu$ satisfies a logarithmic Sobolev inequality with constant $\frac{1}{c}$. Now by Theorem \ref{thm:LSI_controls_lap} we have that the Laplace functional of $\nu$ is controlled by
    \begin{equation*}
        E_{(\nu)}(\lambda) \leq e^{\frac{1}{c}\lambda^2/2}, \hspace{0.5cm} \lambda \geq 0
    \end{equation*}
    which proves the result.
\end{proof}

\subsection{Proof of Lemma \ref{lem:conditionals_strongly_lc}} \label{sec:conditionals_strongly_lc}

\begin{proof}
    Note that 
    \begin{align*}
        d\pi_{\varepsilon(\bm{y}|\bm{x})} &\propto \expe \left ( f_\varepsilon(\bm{x}) + g_\varepsilon(\bm{y}) - \frac{1}{2}\norm{\bm{x} - \bm{y}}^2 \right ) e^{-W(\bm{y})} d\bm{y} \\
        &\propto \exp\left ( -\frac{1}{\varepsilon}\left (\frac{1}{2}\norm{\bm{y}}^2 - g_\varepsilon(\bm{y}) - \langle \bm{y}, \bm{x} \rangle \right ) -  W(\bm{y}) \right )  d\bm{y}
    \end{align*}
    Therefore it is enough to show that $\bm{y} \mapsto \frac{1}{2}\norm{\bm{y}}^2 - g_\varepsilon(\bm{y}) - \langle \bm{y}, \bm{x} \rangle$ is convex, which is equivalent to showing that $\bm{y} \mapsto \frac{1}{2}\norm{\bm{y}}^2 - g_\varepsilon(\bm{y})$ is convex. This is known in some cases and is stated  in \cite{chewi2022entropic} Lemma 1. We prove this with minimal restrictions on the measure $\mu$.

    Recall that \begin{align}
        \bm{g}_\varepsilon(\bm{y}) &= -\varepsilon\log \mu \left ( \expe\left ( f_\varepsilon(\bm{x}) - \frac{1}{2} \norm{\bm{y} - \bm{x}}^2 \right ) \right ) \notag \\
        &= -\varepsilon\log \mu \left ( \expe\left ( f_\varepsilon(\bm{x}) - \frac{1}{2} \norm{\bm{x}}^2 + \langle \bm{y},\bm{x} \rangle \right ) \right ) + \frac{1}{2}\norm{\bm{y}}^2 \notag \\
        \implies \frac{1}{2}\norm{\bm{y}}^2 - g_\varepsilon(\bm{y}) &= \varepsilon\log \mu \left ( \expe\left ( f_\varepsilon(\bm{x}) - \frac{1}{2} \norm{\bm{x}}^2 + \langle \bm{y},\bm{x} \rangle \right ) \right ) \label{eq:norm_minus_g}
    \end{align}
    Now let $\bm{y}_0, \bm{y}_1 \in \mathbb{R}^d, \lambda \in (0,1)$ and set $\bm{y}_\lambda = (1-\lambda)\bm{y}_0 + \lambda\bm{y}_1$ so that by \eqref{eq:norm_minus_g} applied to $\bm{y}_\lambda$ we have
    \begin{align*}
        \frac{1}{2}\norm{\bm{y}_\lambda}^2 - g_\varepsilon(\bm{y}_\lambda) &= \varepsilon\log \mu \left ( \expe\left ( f_\varepsilon(\bm{x}) - \frac{1}{2} \norm{\bm{x}}^2 + \langle \bm{y}_\lambda,\bm{x} \rangle \right ) \right ) \\
        &= \varepsilon\log \mu \left ( \expe\left ( (1-\lambda)\left (  f_\varepsilon(\bm{x}) - \frac{1}{2} \norm{\bm{x}}^2 + \langle \bm{y}_0,\bm{x} \rangle \right ) + \lambda\left (  f_\varepsilon(\bm{x}) - \frac{1}{2} \norm{\bm{x}}^2 + \langle \bm{y}_1,\bm{x} \rangle \right ) \right ) \right ) 
    \end{align*}
    Now a version of Hölder's inequality states that for any random variables $\bm{U},\bm{V}$ and $\lambda \in (0,1)$ it holds
    \begin{equation}
        \log \mathbb{E}\left [ e^{(1-\lambda) \bm{U} + \lambda \bm{V})}\right ] \leq (1-\lambda) \log \mathbb{E}[e^{\bm{U}}] + \lambda \mathbb{E}[e^{\bm{V}}]. \label{eq:holder_special}
    \end{equation}
    Applying \eqref{eq:holder_special} to the random variables
    \begin{align*}
        \bm{U} &= \frac{1}{\varepsilon} \left ( f_\varepsilon(\bm{X}) - \frac{1}{2} \norm{\bm{X}}^2 + \langle \bm{y}_0,\bm{X} \rangle \right ) \\
        \bm{V} &= \frac{1}{\varepsilon} \left ( f_\varepsilon(\bm{X}) - \frac{1}{2} \norm{\bm{X}}^2 + \langle \bm{y}_1,\bm{X} \rangle \right )
    \end{align*}
    above gives
    \begin{align*}
        &\varepsilon\log \mu \left ( \expe\left ( (1-\lambda)\left (  f_\varepsilon(\bm{x}) - \frac{1}{2} \norm{\bm{x}}^2 + \langle \bm{y}_0,\bm{x} \rangle \right ) + \lambda\left (  f_\varepsilon(\bm{x}) - \frac{1}{2} \norm{\bm{x}}^2 + \langle \bm{y}_1,\bm{x} \rangle \right ) \right ) \right ) \\
        &\leq (1-\lambda) \varepsilon \log \mu \left ( \expe \left ( f_\varepsilon(\bm{x}) - \frac{1}{2} \norm{\bm{x}}^2 + \langle \bm{y}_0,\bm{X} \rangle \right ) \right ) + \lambda \varepsilon \log \mu \left ( \expe \left ( f_\varepsilon(\bm{x}) - \frac{1}{2} \norm{\bm{x}}^2 + \langle \bm{y}_1,\bm{X} \rangle \right ) \right ) \\
        &= (1-\lambda) \left ( \frac{1}{2}\norm{\bm{y}_0}^2 - g_\varepsilon(\bm{y}_0) \right ) + \lambda \left ( \frac{1}{2}\norm{\bm{y}_1}^2 - g_\varepsilon(\bm{y}_1) \right )
    \end{align*}
    where the last equality is two applications of \eqref{eq:norm_minus_g}. Overall this establishes
    \begin{equation*}
        \frac{1}{2}\norm{\bm{y}_\lambda}^2 - g_\varepsilon(\bm{y}_\lambda) \leq (1-\lambda) \left ( \frac{1}{2}\norm{\bm{y}_0}^2 - g_\varepsilon(\bm{y}_0) \right ) + \lambda \left ( \frac{1}{2}\norm{\bm{y}_1}^2 - g_\varepsilon(\bm{y}_1) \right )
    \end{equation*}
    which shows that $\frac{1}{2}\norm{\cdot}^2 - g_\varepsilon$ is convex.
\end{proof}

\subsection{Proof of Theorem \ref{thm:ultimate_lcc}}

\begin{proof}
    Before proceeding to the proof we collect a few facts. 

    First we have
    \begin{equation*}
        \mathbb{E}\norm{\bm{Y} - \mathbb{E}\bm{Y}}^2 = \text{Tr}\text{Cov}(\bm{Y})
    \end{equation*}
    and by the Brascamp-Lieb  inequality \cite{bobkov2000brunn} and the fact that inverses reverse the PSD ordering we have
    \begin{equation*}
        \text{Cov}(\bm{Y}) \precsim \mathbb{E}[(\nabla^2W(\bm{Y}))^{-1}] \precsim (c\text{Id})^{-1} = \frac{1}{c}\text{Id}.
    \end{equation*}
    Now using the fact that $A \precsim B \implies \text{Tr}(A) \leq \text{Tr}(B)$ we have
    \begin{equation*}
        \mathbb{E}\norm{\bm{Y} - \mathbb{E}\bm{Y}}^2 = \text{Tr}\text{Cov}(\bm{Y}) \leq \text{Tr} \frac{1}{c}\text{Id} = \frac{d}{c}.
    \end{equation*}
    This will be useful for bounding the variance term.

    By Theorem 5.2.15 in \cite{vershynin2018high} it holds that there exists a universal constant $K$ such that
    \begin{equation*}
        \mathbb{E} \left [ e^{c\bm{Y}_i^2/K} \right ] \leq 2
    \end{equation*}
    for $i=1,...,d$ and therefore
    \begin{align*}
        \mathbb{E} \left [ \exp \left ( \frac{\norm{\sqrt{c}\bm{Y}}^2}{2dK} \right )\right ] 
        &\leq \mathbb{E} \left [ \frac{1}{d}\sum_{i=1}^d \exp\left ( \frac{c\bm{Y}_i^2}{2K} \right ) \right ] \\
        &= \frac{1}{d}\sum_{i=1}^d \mathbb{E} \left [\exp\left ( \frac{c\bm{Y}_i^2}{K'} \right ) \right ] \leq 2
    \end{align*}
    and therefore $\bm{Y}$ is $(K'/\sqrt{c})$-norm-subgaussian for a universal constant $K'$.

    Now we can proceed to the main proof.
    \begin{align*}
        \mathbb{E}_{\cX,\cY}\left[ \norm{\hat{T}_n - T_\varepsilon}^2_{L^2(\mu)} \right ] 
        &\lesssim \mathbb{E}_{\cX,\cY} [\|\hat{T}_n - \mathbb{E}[\hat{T}_n]\|_{L^2(\mu)}^2]  
         + \|\mathbb{E}[\hat{T}_n] - T_\varepsilon\|_{L^2(\mu)}^2 \\
        &\lesssim \mathbb{E}[\norm{\bm{Y} - \mathbb{E}\bm{Y}}^2] +  \mathbb{E}_{\bm{X}}\|\mathbb{E}_{\bm{Y} \sim \tilde{\pi}(\cdot|\bm{X})}[\bm{Y}] - \mathbb{E}_{\bm{Y} \sim \pi_\varepsilon(\cdot|\bm{X})} [\bm{Y}]\|_{L^2(\mu)}^2 \\
        &\lesssim \frac{d}{ck} + \frac{1}{c} \mathbb{E}_{\bm{X}}[\mathsf{KL}(\tilde{\pi}(\cdot|\bm{x}) \ || \pi_\varepsilon(\cdot|\bm{x}) \ )] \\
        &\lesssim \frac{d}{ck} + \frac{1}{c}\frac{1}{\varepsilon} \mathbb{E}|S_\varepsilon(\cX,\cY) - S_\varepsilon(\mu,\nu)| \\
        &\lesssim \frac{d}{ck} + \frac{1}{c}\left ( 1 + \frac{\sigma_0^{\lceil 5d/2 \rceil + 6}}{\varepsilon^{\lceil 5d/4\rceil + 3}}\right ) \cdot \frac{1}{\sqrt{m}}.
    \end{align*}
\end{proof}

\section{Proofs From Section \ref{sec:subGaussian_measures}}

\subsection{Proof of Lemma \ref{lem:subg_lap_bound}} \label{sec:subg_lap_bound}

The proof requires several preliminary results on subGaussian random variables. 

\begin{lemma} \label{lem:change_constant}
    Let $\bm{X}$ be a scalar random variable $\sigma^2 \in \mathbb{R}$ and $K > 2$ be such that
    \begin{equation*}
        \mathbb{E} \left [ e^{\bm{X}^2/\sigma^2} \right ] \leq K.
    \end{equation*}
    Then
    \begin{equation*}
        \mathbb{E} \left [ \exp \left ( \frac{\bm{X}^2}{\sigma^2 \log(K)/ \log(2)} \right )\right ] \leq 2.
    \end{equation*}
\end{lemma}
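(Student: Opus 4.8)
The plan is to reduce the claim to a single application of Jensen's inequality via the concavity of a power function. Set $\theta = \log(2)/\log(K)$, and observe that since $K > 2$ we have $\theta \in (0,1)$, so the map $t \mapsto t^{\theta}$ is concave on $[0,\infty)$. I would then write the target quantity in the form
\begin{equation*}
    \exp\left( \frac{\bm{X}^2}{\sigma^2 \log(K)/\log(2)} \right) = \exp\left( \theta \cdot \frac{\bm{X}^2}{\sigma^2} \right) = \left( \exp\left( \frac{\bm{X}^2}{\sigma^2} \right) \right)^{\theta},
\end{equation*}
using $\theta/\sigma^2 = 1/(\sigma^2 \log(K)/\log(2))$ (here one uses $\sigma^2 > 0$).

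Applying Jensen's inequality to the concave function $t \mapsto t^{\theta}$ together with the hypothesis $\mathbb{E}[e^{\bm{X}^2/\sigma^2}] \leq K$ gives
\begin{equation*}
    \mathbb{E}\left[ \left( e^{\bm{X}^2/\sigma^2} \right)^{\theta} \right] \leq \left( \mathbb{E}\left[ e^{\bm{X}^2/\sigma^2} \right] \right)^{\theta} \leq K^{\theta} = K^{\log(2)/\log(K)} = e^{\log(2)} = 2,
\end{equation*}
which is exactly the desired bound. There is essentially no obstacle here: the only points requiring a word of care are that $\theta < 1$ (so that the power function is genuinely concave and Jensen applies in the stated direction) and that $\sigma^2 > 0$ so the exponents can be rearranged; both are immediate from the stated assumptions.
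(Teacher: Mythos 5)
Your proof is correct and follows exactly the same route as the paper: both identify $\theta = \log(2)/\log(K) = \log_K(2) \in (0,1)$, apply Jensen's inequality to the concave map $t \mapsto t^{\theta}$, and conclude from $K^{\theta} = 2$. No meaningful differences.
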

\begin{proof}
    Using basic logarithm properties we have
    \begin{equation*}
        \frac{\log(2)}{\log(K)} = \log_K(2), \hspace{0.5cm} K^{\log_K(2)} = 2, \hspace{0.5cm} 0 = \log_K(1) < \log_K(2) < \log_K(K) = 1.
    \end{equation*}
    Importantly the last inequality implies that the function $t \mapsto t^{\log_K(2)}$ is concave.

    From this we have
    \begin{align*}
        \mathbb{E} \left [ \exp \left ( \frac{\bm{X}^2}{\sigma^2 \log(K)/ \log(2)} \right )\right ] 
        &= \mathbb{E} \left [ \exp \left ( \frac{\bm{X}^2}{\sigma^2} \right )^{\log_K(2)} \right ]
        \\
        &\leq \mathbb{E} \left [ \exp \left ( \frac{\bm{X}^2}{\sigma^2} \right ) \right ]^{\log_K(2)} & (\text{Jensen}) \\
        &\leq  K^{\log_K(2)} = 2. & (\text{By Assumption})
    \end{align*}
\end{proof}

\begin{lemma} \label{lem:sq_mz_to_not_sq}
    Let $\bm{X}$ be a scalar random variable such that $\mathbb{E}[\bm{X}] = 0$ and $\mathbb{E}[e^{\bm{X}^2/\sigma^2}] \leq 2$. Then for every $\lambda \in \mathbb{R}$ it holds
    \begin{equation*}
        \mathbb{E}[\exp(\lambda \bm{X})] \leq \exp\left ( \sigma^2 \lambda^2 \right ).
    \end{equation*}
\end{lemma}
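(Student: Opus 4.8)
The plan is to reduce the moment generating function of $\bm{X}$ to that of $\bm{X}^2$ and then treat small and large $\lambda$ separately. The starting point is the elementary inequality
\begin{equation*}
    e^t \le t + e^{t^2}, \qquad \forall\, t \in \mathbb{R},
\end{equation*}
which follows from a routine convexity argument: the function $\varphi(t) = t + e^{t^2} - e^t$ satisfies $\varphi(0) = \varphi'(0) = 0$ and $\varphi''(t) = (2+4t^2)e^{t^2} - e^t \ge 0$ for all $t$ (for $t \le \log 2$ this is clear from $(2+4t^2)e^{t^2} \ge 2 \ge e^t$, and for $t > \log 2$ one checks $(2+4t^2)e^{t^2}$ already exceeds $e^t$), so $\varphi$ has a global minimum at $0$. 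Applying this with $t = \lambda \bm{X}$, taking expectations, and using $\mathbb{E}[\bm{X}] = 0$ to annihilate the linear term gives
\begin{equation*}
    \mathbb{E}[e^{\lambda \bm{X}}] \le \mathbb{E}[e^{\lambda^2 \bm{X}^2}].
\end{equation*}

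First I would handle the regime $\sigma^2\lambda^2 \le 1$. Here I write $e^{\lambda^2 \bm{X}^2} = \bigl(e^{\bm{X}^2/\sigma^2}\bigr)^{\sigma^2\lambda^2}$ and invoke Jensen's inequality for the concave map $u \mapsto u^{\sigma^2\lambda^2}$ on $[0,\infty)$ (concave because $0 \le \sigma^2\lambda^2 \le 1$), obtaining
\begin{equation*}
    \mathbb{E}[e^{\lambda^2\bm{X}^2}] \le \bigl(\mathbb{E}[e^{\bm{X}^2/\sigma^2}]\bigr)^{\sigma^2\lambda^2} \le 2^{\sigma^2\lambda^2} = e^{(\log 2)\sigma^2\lambda^2} \le e^{\sigma^2\lambda^2},
\end{equation*}
since $\log 2 < 1$. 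This settles the claim in this regime with room to spare.

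For $\sigma^2\lambda^2 > 1$ the quantity $\mathbb{E}[e^{\lambda^2\bm{X}^2}]$ can be infinite, so instead of the reduction above I would apply Young's inequality directly to $\lambda\bm{X}$: with the choice of parameter $\sigma^2/2$,
\begin{equation*}
    \lambda \bm{X} \le \frac{\sigma^2\lambda^2}{4} + \frac{\bm{X}^2}{\sigma^2},
\end{equation*}
so that $\mathbb{E}[e^{\lambda\bm{X}}] \le e^{\sigma^2\lambda^2/4}\,\mathbb{E}[e^{\bm{X}^2/\sigma^2}] \le 2\,e^{\sigma^2\lambda^2/4}$. Because $\sigma^2\lambda^2 > 1 > \tfrac{4}{3}\log 2$, one has $\log 2 + \tfrac14\sigma^2\lambda^2 \le \sigma^2\lambda^2$, hence $2\,e^{\sigma^2\lambda^2/4} \le e^{\sigma^2\lambda^2}$, which finishes the proof.

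I do not expect a serious obstacle. The only points needing care are verifying the elementary inequality $e^t \le t + e^{t^2}$ and checking that the two regimes cover all $\lambda$: the cutoff $\sigma^2\lambda^2 = 1$ works precisely because the Jensen bound is valid up to $\sigma^2\lambda^2 = 1$ while the Young bound already beats $e^{\sigma^2\lambda^2}$ once $\sigma^2\lambda^2 \ge \tfrac43\log 2 \approx 0.92$. It is worth noting that the mean-zero hypothesis is essential and is used exactly once — to discard the linear term $\lambda\mathbb{E}[\bm{X}]$ — since without it the statement fails for small $\lambda$.
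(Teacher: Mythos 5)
Your proof is correct, and it takes a genuinely different route from the paper's. The paper proves the lemma in a single chain with no case analysis: it expands $\mathbb{E}[e^{\lambda \bm{X}}]$ as a Taylor series, uses $\mathbb{E}[\bm{X}]=0$ to drop the $k=1$ term, bounds $\tfrac{1}{k!} \le \tfrac{1}{2(k-2)!}$ to arrive at $1 + \tfrac{\lambda^2}{2}\mathbb{E}[\bm{X}^2 e^{|\lambda \bm{X}|}]$, and then applies Young's inequality, the bound $t \le e^{t/2}$, and the hypothesis $\mathbb{E}[e^{\bm{X}^2/\sigma^2}]\le 2$, finishing with the elementary inequality $1 + xe^{x/2}\le e^x$. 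You instead use the pointwise inequality $e^t \le t + e^{t^2}$ to reduce to $\mathbb{E}[e^{\lambda^2\bm{X}^2}]$ and then split on $\sigma^2\lambda^2 \lessgtr 1$, handling the small regime with Jensen for the concave power $u\mapsto u^{\sigma^2\lambda^2}$ and the large regime by a direct Young's inequality; the split is necessary in your argument since $\mathbb{E}[e^{\lambda^2\bm{X}^2}]$ may be infinite for large $\lambda$, and your check that $1 > \tfrac{4}{3}\log 2$ correctly shows the two regimes overlap. The trade-off: the paper's argument is seamless but hinges on the slightly non-obvious final inequality $1+xe^{x/2}\le e^x$, while yours front-loads the work into the calculus verification of $e^t \le t + e^{t^2}$ and then uses only off-the-shelf tools in each regime (and even yields the marginally sharper constant $e^{(\log 2)\sigma^2\lambda^2}$ when $\sigma^2\lambda^2\le 1$). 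Both arguments use the mean-zero hypothesis in the same essential way, to annihilate the linear term.
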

\begin{proof}
    \begin{align*}
        \mathbb{E}[\exp(\lambda \bm{X})] &= \mathbb{E} \left [ \sum_{k=0}^\infty \frac{(\lambda \bm{X})^k}{k!} \right ] \\
        &= 1 + \mathbb{E} \left [ \sum_{k=2}^\infty \frac{(\lambda \bm{X})^k}{k!} \right ] & (\mathbb{E}\bm{X} = 0) \\
        &\leq 1 + \frac{\lambda^2}{2}\mathbb{E}\left [ \bm{X}^2 \sum_{k=0}^\infty \frac{|\lambda \bm{X}|^k}{k!}\right ] & (\frac{1}{k!} \leq \frac{1}{2}\frac{1}{(k-2)!} \text{ for } k \geq 2) \\
        &= 1 + \frac{\lambda^2}{2}\mathbb{E} \left [ \bm{X}^2\exp(|\lambda \bm{X}|)\right ] \\
        &\leq 1 + \frac{\lambda^2}{2} \mathbb{E}\left [\bm{X}^2\exp \left ( \frac{\bm{X}^2}{2\sigma^2} + \frac{\sigma^2\lambda^2}{2} \right ) \right ] & (\text{Young's Inequality}) \\
        &= 1 + \frac{\sigma^2\lambda^2}{2}e^{\sigma^2\lambda^2/2}\mathbb{E}\left [ \frac{\bm{X}^2}{\sigma^2}\exp\left( \frac{\bm{X}^2}{2\sigma^2} \right ) \right ] \\
        &\leq 1 + \frac{\sigma^2\lambda^2}{2}e^{\sigma^2\lambda^2/2}\mathbb{E}\left [ \exp\left( \frac{\bm{X}^2}{\sigma^2} \right ) \right ] & (t \leq e^{t/2}) \\
        &\leq 1 + \sigma^2\lambda^2e^{\sigma^2\lambda^2/2} & (\text{By Assumption}) \\
        &\leq e^{\lambda^2\sigma^2} & (1+xe^{x/2} \leq e^{x} \ \forall \ x \geq 0)
    \end{align*} 
\end{proof}

\begin{lemma} \label{lem:f_sq_bound}
    Let $\norm{\bm{X}}_{\psi_2}^2 < \infty$. For every mean-zero 1-Lipschitz function $f$ it holds
    \begin{equation*}
        \mathbb{E}[e^{f(\bm{X})^2/8d\norm{\bm{X}}_{\psi_2}^2}] \leq 2.
    \end{equation*}
\end{lemma}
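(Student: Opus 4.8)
The plan is to reduce the claim to the defining sub-Gaussian bound on $\norm{\bm{X}}$ through a pointwise comparison of $f(\bm{X})^2$ with $\norm{\bm{X}}^2$. Write $\sigma = \norm{\bm{X}}_{\psi_2}$, so that $\mathbb{E}[\exp(\norm{\bm{X}}^2/2d\sigma^2)] \le 2$ (the infimum in the definition is attained: let the parameter decrease to $\sigma$ and apply monotone convergence, exactly as is implicitly used in Lemma~\ref{lem:easy_term}). Since $f$ is $1$-Lipschitz we have $|f(\bm{X}) - f(\bm{0})| \le \norm{\bm{X}}$; and since $\mu(f) = 0$, the constant $f(\bm{0})$ satisfies $|f(\bm{0})| = |f(\bm{0}) - \mathbb{E}[f(\bm{X})]| \le \mathbb{E}|f(\bm{0}) - f(\bm{X})| \le \mathbb{E}\norm{\bm{X}}$ by Jensen and the Lipschitz property. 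Hence $|f(\bm{X})| \le \norm{\bm{X}} + \mathbb{E}\norm{\bm{X}}$ almost surely.

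Next I would control the deterministic term $\mathbb{E}\norm{\bm{X}}$ by the same Jensen-type computation as in the proof of Lemma~\ref{lem:easy_term}: $(\mathbb{E}\norm{\bm{X}})^2 \le \mathbb{E}\norm{\bm{X}}^2 = 2d\sigma^2\,\mathbb{E}[\log\exp(\norm{\bm{X}}^2/2d\sigma^2)] \le 2d\sigma^2\log\mathbb{E}[\exp(\norm{\bm{X}}^2/2d\sigma^2)] \le 2d\sigma^2\log 2$. It is important to keep the factor $\log 2$ here rather than bound it by $1$, since the target constant is sharp.

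Combining the two previous steps, $f(\bm{X})^2 \le (\norm{\bm{X}} + \mathbb{E}\norm{\bm{X}})^2 \le 2\norm{\bm{X}}^2 + 2(\mathbb{E}\norm{\bm{X}})^2 \le 2\norm{\bm{X}}^2 + 4d\sigma^2\log 2$, so
\[
  \exp\!\Big(\tfrac{f(\bm{X})^2}{8d\sigma^2}\Big) \;\le\; \sqrt{2}\,\exp\!\Big(\tfrac{\norm{\bm{X}}^2}{4d\sigma^2}\Big) \;=\; \sqrt{2}\,\Big(\exp\!\big(\tfrac{\norm{\bm{X}}^2}{2d\sigma^2}\big)\Big)^{1/2}.
\]
Taking expectations and applying Jensen's inequality to the concave map $t \mapsto \sqrt{t}$ gives $\mathbb{E}[\exp(f(\bm{X})^2/8d\sigma^2)] \le \sqrt{2}\cdot\sqrt{\mathbb{E}[\exp(\norm{\bm{X}}^2/2d\sigma^2)]} \le \sqrt{2}\cdot\sqrt{2} = 2$, which is the claim.

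I do not expect a genuine obstacle here beyond constant bookkeeping: the one delicate point is that the crude split $(a+b)^2 \le 2a^2 + 2b^2$ together with the sharp moment bound $(\mathbb{E}\norm{\bm{X}})^2 \le 2d\sigma^2\log 2$ is precisely what lets the two $\sqrt{2}$ factors multiply back to $2$, so any slack in either step would spoil the stated constant $8d\norm{\bm{X}}_{\psi_2}^2$. (This lemma is the workhorse for Lemma~\ref{lem:subg_lap_bound}: feeding it into Lemma~\ref{lem:sq_mz_to_not_sq} with variance proxy $8d\norm{\bm{X}}_{\psi_2}^2$ then yields the Laplace-functional bound there.)
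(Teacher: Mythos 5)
Your proof is correct and reaches the stated constant by a genuinely different route from the paper's. The paper symmetrizes: it writes $f(\bm{X})^2 = (f(\bm{X}) - \mathbb{E}[f(\bm{X}')])^2$ for an i.i.d.\ copy $\bm{X}'$, applies Jensen to pull $\mathbb{E}_{\bm{X}'}$ out, uses the Lipschitz bound $|f(\bm{X})-f(\bm{X}')|\le\norm{\bm{X}-\bm{X}'}$, splits $\norm{\bm{X}-\bm{X}'}^2 \le 2\norm{\bm{X}}^2 + 2\norm{\bm{X}'}^2$, and then factorizes by independence to get the bound $\le 4$ with variance proxy $4d\norm{\bm{X}}_{\psi_2}^2$; the auxiliary Lemma~\ref{lem:change_constant} then trades the constant $4$ for a $2$ at the cost of doubling the variance proxy, landing at $8d\norm{\bm{X}}_{\psi_2}^2$. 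You instead anchor $f$ at the deterministic point $\bm{0}$, bound the offset $|f(\bm{0})|\le\mathbb{E}\norm{\bm{X}}$ using the mean-zero condition, control $(\mathbb{E}\norm{\bm{X}})^2 \le 2d\sigma^2\log 2$ sharply, and finish with a single Jensen step for the concave square root (equivalently Cauchy--Schwarz) rather than via independence and the change-of-constant lemma. What your version buys is elementariness: it needs neither the auxiliary Lemma~\ref{lem:change_constant} nor a fresh i.i.d.\ copy, and it makes the constant bookkeeping transparent (the $\sqrt{2}$ from $e^{(\log 2)/2}$ and the $\sqrt{2}$ from $\mathbb{E}[\sqrt{\cdot}]$ multiplying to exactly $2$). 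What the paper's symmetrization buys is that it never evaluates $f$ at a particular point and never needs a separate moment bound for $\mathbb{E}\norm{\bm{X}}$; it is the more standard concentration template. Both obtain the same sharp constant, and your remark that any slack in the split $(a+b)^2 \le 2a^2+2b^2$ or in the bound on $(\mathbb{E}\norm{\bm{X}})^2$ would break the constant is accurate.
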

\begin{proof}
    Let $\sigma_0$ be a constant to be determined later.  Let $\bm{X}'$ be an iid copy of $\bm{X}$. We compute:
    \begin{align*}
        \mathbb{E}\left [\exp \left( \frac{f(\bm{X})^2}{\sigma_0^2} \right ) \right ] &= \mathbb{E}\left [\exp \left( \frac{(f(\bm{X}) - \mathbb{E}[f(\bm{X}')])^2}{\sigma_0^2} \right ) \right ] & (\mathbb{E}[f(\bm{X}')] = 0) \\
        &\leq \mathbb{E}\left [\exp \left( \frac{(f(\bm{X}) - f(\bm{X}'))^2}{\sigma_0^2} \right ) \right ] & (\text{Jensen}) \\
        &\leq \mathbb{E}\left [\exp \left( \frac{\norm{\bm{X} - \bm{X}'}^2}{\sigma_0^2} \right ) \right ] & (f \text{ 1-Lip.}) \\
        &\leq \mathbb{E}\left [\exp \left( \frac{2\norm{\bm{X}}^2 + 2\norm{\bm{X}'}^2}{\sigma_0^2} \right ) \right ] & (\text{Jensen}) \\
        &= \mathbb{E}\left [\exp \left( \frac{2\norm{\bm{X}}^2}{\sigma_0^2} \right ) \right ]^2 & (\text{Independence})
    \end{align*}
    Note that if $\sigma_0^2 = 4d\norm{\bm{X}}_{\psi_2}^2$ then
    \begin{equation*}
        \mathbb{E}\left [\exp \left( \frac{2\norm{\bm{X}}^2}{\sigma_0^2} \right ) \right ]^2 = \mathbb{E}\left [\exp \left( \frac{\norm{\bm{X}}^2}{2d\norm{\bm{X}}_{\psi_2}^2} \right ) \right ]^2 \leq 2^2 = 4
    \end{equation*}
    where the bound follows from the definition of $\norm{\bm{X}}_{\psi_2}$. By applying Lemma \ref{lem:change_constant} with $K=4$ this shows that
    \begin{equation*}
        \mathbb{E}[e^{f(\bm{X})^2/8d\norm{\bm{X}}_{\psi_2}^2}] \leq 2.
    \end{equation*}    
\end{proof}

Combining the previous two results we can prove the required bound on the Laplace functional as follows.
\begin{proof}[Proof of Lemma \ref{lem:subg_lap_bound}]
    Let $f$ be any mean-zero 1-Lipschitz function. By Lemma \ref{lem:f_sq_bound} we have
    \begin{equation*}
        \mathbb{E}[e^{f(\bm{X})^2/8d\norm{\bm{X}}_{\psi_2}^2}] \leq 2
    \end{equation*}
    and therefore by Lemma \ref{lem:sq_mz_to_not_sq} applied to the random variable $f(\bm{X})$ with $\sigma^2 = 8d\norm{\bm{X}}_{\psi_2}^2$ we have
    \begin{equation*}
        \mathbb{E}[\exp(\lambda f(\bm{X}))] \leq \exp (8d\norm{\bm{X}}_{\psi_2}^2\lambda^2).
    \end{equation*}
    Since this bound holds for all mean zero, 1-Lipschitz functions and all $\lambda \in \mathbb{R}$ we have
    \begin{equation*}
        E_\mu(\lambda) = \sup_f \mathbb{E}[\exp(\lambda f(\bm{X}))] \leq \sup_f \exp(8d\norm{\bm{X}}_{\psi_2}^2\lambda^2) = \exp(8d\norm{\bm{X}}_{\psi_2}^2\lambda^2)
    \end{equation*}
    which proves the bound.
\end{proof}

\subsection{Proof of Lemma \ref{lem:expected_squared_concentration}} \label{sec:expected_squared_concentration}

\begin{proof}
    Note that by the law of total expectation we have
    \begin{equation*}
        2 \geq \mathbb{E} \left [ \exp \left ( \frac{\norm{\bm{Y}}^2}{2d\norm{\bm{Y}}_{\psi_2}^2} \right ) \right ] = \mathbb{E}_{\bm{X}} \left [ \mathbb{E}  \left [ \exp \left ( \frac{\norm{\bm{Y}^{\bm{X}}}^2}{2d\norm{\bm{Y}}_{\psi_2}^2} \right ) \right ] \right ].
    \end{equation*}
    Since the expectation is finite we have with probability 1 (with respect to $\bm{X}$) that
    \begin{equation*}
         \mathbb{E}  \left [ \exp \left ( \frac{\norm{\bm{Y}^{\bm{X}}}^2}{2d\norm{\bm{Y}}_{\psi_2}^2} \right ) \right ]  < \infty.
    \end{equation*}
    In particular we can use Lemma \ref{lem:change_constant} to show that for $\mu$ almost every $\bm{x}$
    \begin{equation*}
        \norm{\bm{Y}^{\bm{x}}}_{\psi_2} \leq \max \left ( \norm{\bm{Y}}_{\psi_2}, \norm{\bm{Y}}_{\psi_2} \sqrt{\frac{\log \mathbb{E}\left [ \exp \left ( (\norm{\bm{Y}^{\bm{x}}}^2 / 2d\norm{\bm{Y}}^2_{\psi_2} \right ) \right]}{\log(2)}} \right ).
    \end{equation*}
    For convenience define the set 
    \begin{equation*}
        A = \left \{ \bm{x} \in \mathbb{R}^d \ \bigg | \  \mathbb{E}\left [ \exp \left ( \norm{\bm{Y}^{\bm{x}}}^2 / 2d\norm{\bm{Y}}^2_{\psi_2} \right ) \right] > 2 \right \}.
    \end{equation*}
    Now squaring and taking expectations on both sides we have
    \begin{align*}
        \mathbb{E}_{\bm{X}}\left [\norm{\bm{Y}^{\bm{X}}}_{\psi_2}^2 \right ] &\leq \norm{\bm{Y}}_{\psi_2}^2 \mathbb{E}_{\bm{X}} \left [ \max\left ( 1, \sqrt{\frac{\log \mathbb{E}\left [ \exp \left ( \norm{\bm{Y}^{\bm{X}}}^2 / 2d\norm{\bm{Y}}^2_{\psi_2} \right ) \right]}{\log(2)}} \right )^2 \right ] \\
        &= \norm{\bm{Y}}_{\psi_2}^2 \mathbb{P}(\bm{X} \notin A) + \norm{\bm{Y}}_{\psi_2}^2\mathbb{E} \left [ \pmb{1}[\bm{X} \in A] \frac{\log \mathbb{E}\left [ \exp \left ( \norm{\bm{Y}^{\bm{X}}}^2 / 2d\norm{\bm{Y}}^2_{\psi_2} \right ) \right]}{\log(2)} \right ] 
    \end{align*}
    where we have used that when $\bm{X} \in A$ the maximum occurs at the second term and when $\bm{X} \not \in A$ the maximum is achieved by the first. 

    To control the latter term we have
    \begin{align*}
        \mathbb{E}_{\bm{X}} \left [ \pmb{1}[\bm{X} \in A] \frac{\log \mathbb{E}\left [ \exp \left ( \norm{\bm{Y}^{\bm{X}}}^2 / 2d\norm{\bm{Y}}^2_{\psi_2} \right ) \right]}{\log(2)} \right ] 
        &\leq \mathbb{E}_{\bm{X}} \left [ \frac{\log \mathbb{E}\left [ \exp \left ( \norm{\bm{Y}^{\bm{X}}}^2 / 2d\norm{\bm{Y}}^2_{\psi_2} \right ) \right]}{\log(2)} \right ] \\
        &\leq \frac{1}{\log 2} \log \mathbb{E}_{\bm{X}} \left [  \mathbb{E}\left [ \exp \left ( \norm{\bm{Y}^{\bm{X}}}^2 / 2d\norm{\bm{Y}}^2_{\psi_2} \right ) \right] \right ] \\
        &= \frac{1}{\log 2}\log \mathbb{E} \left [ \exp (\norm{\bm{Y}}^2/2d\norm{\bm{Y}}^2_{\psi_2}) \right ] \\
        &\leq\frac{\log 2}{\log 2} = 1
    \end{align*}

    Overall this gives the bound
    \begin{equation*}
        \mathbb{E}_{\bm{X}}\left [ \norm{\bm{Y}^{\bm{X}}}_{\psi_2}^2 \right ] \leq  (1 + \mathbb{P}(\bm{X} \notin A)) \norm{\bm{Y}}_{\psi_2}^2 \leq 2 \norm{\bm{Y}}_{\psi_2}^2
    \end{equation*}
\end{proof}

\section{Details of Section \ref{sec:numerics} and Further Experiments} \label{sec:numeric_details}

The simulations heavily utilize the Python Optimal Transport library \cite{flamary2021pot} to solve the optimization problems involved in the simulations.

\subsection{Gaussian Experiments}

To make Figure \ref{fig:exact_gaussian_eps1} we set $\varepsilon=1.0$. We chose the paramters
\begin{equation*}
    \bm{x}_0 = \begin{bmatrix}
        0 \\ 0 \\ 0 \\ 0 \\ 0
    \end{bmatrix} \hspace{0.25cm}
    \Sigma_0 = \begin{bmatrix}
        1 & 0 & 0 & 0 & 0 \\
        0 & 1 & 0 & 0 & 0 \\
        0 & 0 & 1 & 0 & 0 \\
        0 & 0 & 0 & 1 & 0 \\
        0 & 0 & 0 & 0 & 1
    \end{bmatrix} \hspace{0.25cm}
    \bm{x}_1 = \begin{bmatrix}
        2 \\ 1 \\ 0 \\ -1 \\ -2
    \end{bmatrix} \hspace{0.25cm}
    \Sigma_1 = \begin{bmatrix}
          2 & 0 & 0 & 0 & 0  \\  
          0 & 0.5 & 0 & 0 & 0  \\  
          0 & 0 & 1 & 0 & 0  \\  
          0 & 0 & 0 & 0.1 & 0  \\  
          0 & 0 & 0 & 0 & 5
    \end{bmatrix}.
\end{equation*}
Included as Figure \ref{fig:exact_gaussian_triple} is the same figure with $\varepsilon=2.0,5.0,10.0$.

\begin{figure}[htp]

\centering
\includegraphics[width=.3\textwidth]{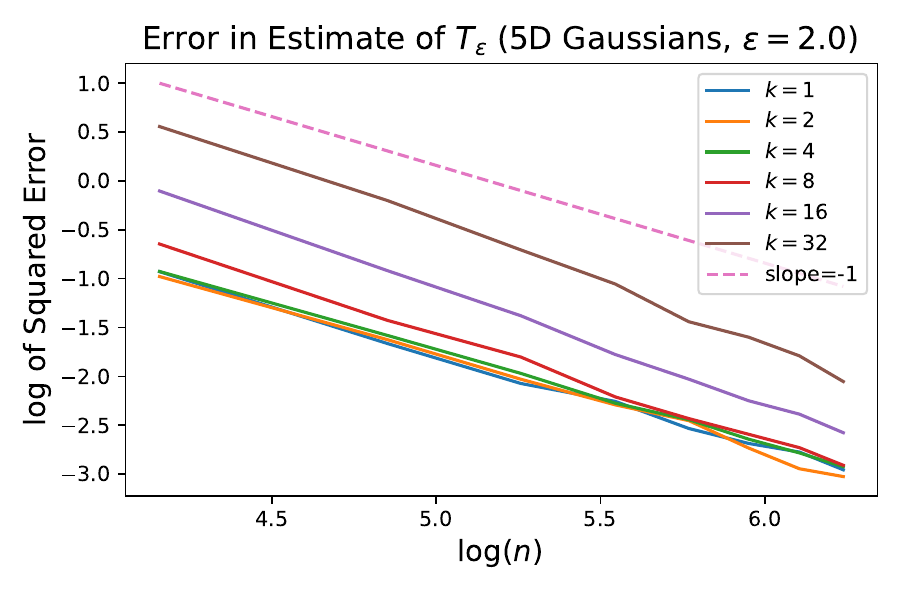}\hfill
\includegraphics[width=.3\textwidth]{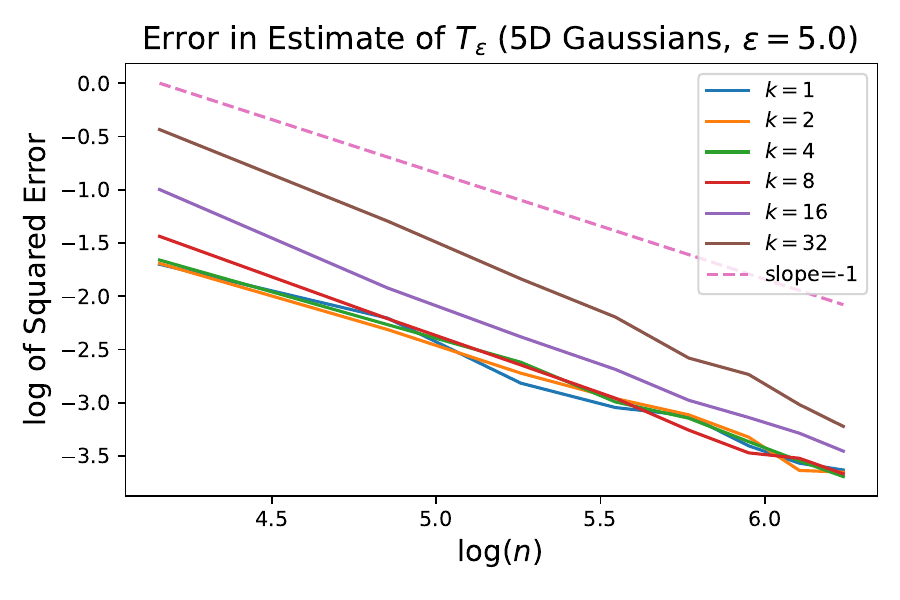}\hfill
\includegraphics[width=.3\textwidth]{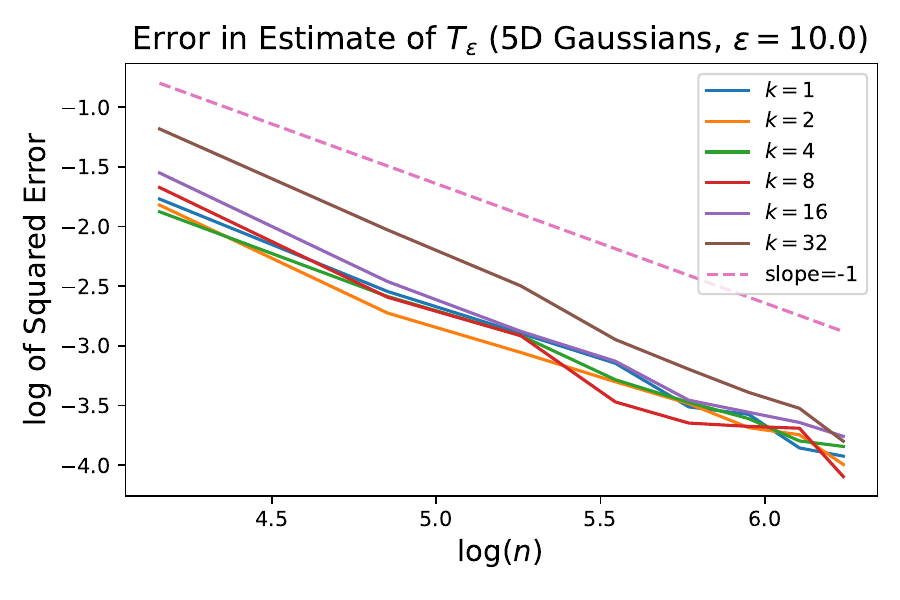}

\caption{Variations of Figure \ref{fig:exact_gaussian_eps1} with $\varepsilon=2,5$ and $10$ from left to right.}
\label{fig:exact_gaussian_triple}

\end{figure}

To highlight that this is not an artifact of the independence of the coordinates or the dimensionality we also include Figure \ref{fig:exact_change_d} which fixes $\varepsilon=2.0$ and uses Gaussians of varying dimension.
\begin{figure}
    \centering
    \includegraphics[width=0.5\textwidth]{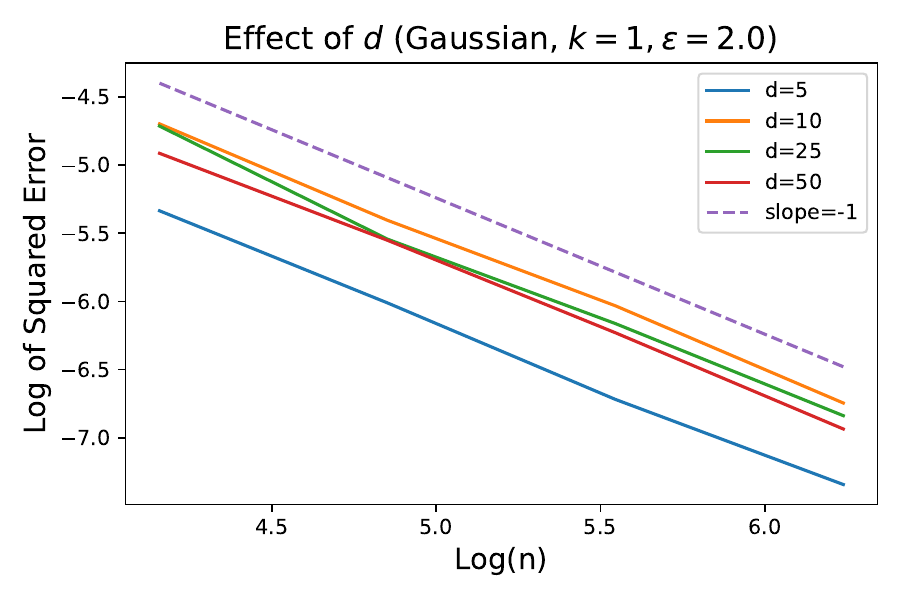}
    \caption{Effect of the dimension on the estimation of the exact map between two Gaussian measures.}
    \label{fig:exact_change_d}
\end{figure}
For this we create two random multivariate Gaussian measures $\mu = N(0,\Sigma), \nu = N(2e_1, \Sigma')$ in the following way. We generate $\Sigma$  by first selecting a random orthogonal matrix $U$, sampled uniformally from set of orthogonal matrices which serves as the eigenvectors of the covariance matrix. Them eigenvalues are independently sampled from $\text{Unif}([1/(10d),1/d])$ and embedded into a diagonal matrix $D$. From $U$ and $D$ we construct $\Sigma = UDU^T$. $\Sigma'$ is constructed in the same way with independently drawn $U',D'$.

The factor of $1/d$ is required to preserve the relative scale of the data as the dimension grows. A direct calculation shows that if $\bm{X},\bm{X}' \sim N(0,I)$ then
\begin{equation*}
    \mathbb{E}[\norm{\bm{X} - \bm{X}'}^2] = 2d
\end{equation*}
and if $\bm{Y},\bm{Y}' \sim N(0,I/d)$ then
\begin{equation*}
    \mathbb{E}[\norm{\bm{Y} - \bm{Y}'}^2] = 2.
\end{equation*}
This normalization allows the choice of $\varepsilon$ to be held fixed as the dimension varies and is a recurring theme in the experiments below which vary the dimension $d$.

This figure was made using 100 replications and 500 samples for the Monte Carlo integration.

\subsection{Strongly Log-Concave}

In order to create interesting strongly log-concave measures we consider densities of the form
$$\exp \left ( -c\frac{\norm{\bm{x}}^2}{2} + h(\bm{x})) \right )$$
where $c > 0$ and $h:\mathbb{R}^d \rightarrow \mathbb{R}$ is a convex function. These functions are guaranteed to be at least $c$-strongly log-concave. 

For the function $h$ we consider functions of the form
\begin{equation*}
    h(\bm{x}) = \max_{i=1,...,\ell} \bm{u}_i^T\bm{x} + b_i
\end{equation*}
where $\bm{u}_i \in \mathbb{R}^d$ and $b_i \in \mathbb{R}$. These are always convex functions regardless of the values of $\bm{u}_i$ and $b_i$. 

To generate Figure \ref{fig:variance_lcc} we use for the density of $\mu$ that $c = 1.0$ and generate $\bm{u}_i \sim N(0,I)$ and $b_i \sim N(0,1)$ for $i=1,...,20$. For $\nu$ we chose $c = 0.75$ and a further 20 terms $\bm{u}_i' \sim N(0,I)$ and $b_i' \sim N(0,1)$. For each line in the plot we used a different random draw of $\mu$ and $\nu$, and kept the parameters fixed along each line.

In order to generate samples from $\mu$ and $\nu$ in this case we used the Metropolis-adjusted Langevin Algorithm \cite{dwivedi2018log} and ran for 500 iterations for each sample with  a step size of 0.01.

We also explore the impact the dimension in this setting and recreate Figure \ref{fig:variance_lcc} with $\varepsilon=5$ fixed and $d=5,10,25$ in Figure \ref{fig:lcc_d}. We generate the slopes and intercepts in the same way as above except with $4d$ slopes and intercepts, and we scale the constant $c$ by a factor of $5/d$ to account for the scaling of the distance with the dimension. This figure was made using 100 replications and 500 samples for the Monte Carlo integration.
\begin{figure}
    \centering
    \includegraphics[width=0.5\textwidth]{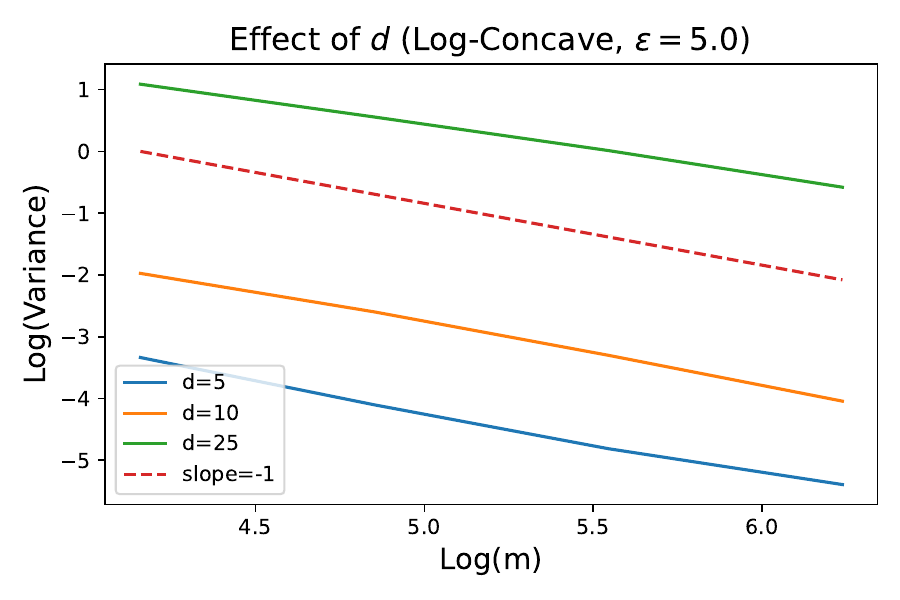}
    \caption{Impact of the dimension on the convergence of the variance term in the log-concave setting.}
    \label{fig:lcc_d}
\end{figure}

\subsection{Gaussian Mixture Models}

To create Figure \ref{fig:variance_gmm} we consider Gaussian mixture models of the form $\mu = \sum_{i=1}^\ell \alpha_i N(\bm{x}_i,\Sigma_i), \nu = \sum_{j=1}^{\ell'} \beta_j N(\bm{y}_j,\Sigma_j')$ where $\alpha_i, \beta_j > 0$ with $\sum_i \alpha_i = 1, \sum_{j} \beta_j = 1$, $\bm{x}_i,\bm{y}_j \in \mathbb{R}^d$, and $\Sigma_i,\Sigma_j' \in \mathbb{R}^{d \times d}$ are PSD matrices. The specific parameters used to create Figure \ref{fig:variance_gmm} are
\begin{gather*}
    \alpha = (0.25,0.25,0.25,0.25), \hspace{1cm} \beta = (0.1,0.2,0.3,0.4) \\
    \bm{x}_1 = \begin{bmatrix}
        0 \\ 0 \\ 0 \\ 0 \\ 0
    \end{bmatrix},
    \bm{x}_2 = \begin{bmatrix}
        1 \\ -1 \\ -1 \\ -1 \\ -1 
    \end{bmatrix},
    \bm{x}_3 = \begin{bmatrix}
        -1 \\ -1 \\ -1 \\ -1 \\ 1
    \end{bmatrix},
    \bm{x}_4 = \begin{bmatrix}
        0 \\ 0 \\ 3 \\ 0 \\ 0
    \end{bmatrix} \\
    \bm{y}_1 = \begin{bmatrix}
        1 \\ 0 \\ 0 \\ 0 \\ 0
    \end{bmatrix},
    \bm{y}_2 = \begin{bmatrix}
        0 \\ 1 \\ 0 \\ 0 \\ 0
    \end{bmatrix},
    \bm{y}_3 = \begin{bmatrix}
        0 \\ 0 \\ 1 \\ 0 \\ 0
    \end{bmatrix},
    \bm{y}_4 = \begin{bmatrix}
        0 \\ 0 \\ 0 \\ 1 \\ 0
    \end{bmatrix} \\
    \Sigma_1 = \frac{1}{2} I, \ \ \Sigma_2 = \frac{1}{5} I, \ \ \Sigma_3 = \begin{bmatrix}
        0.5 & 0 & 0 & 0 & 0 \\
        0 & 1.0 & 0 & 0 & 0 \\
        0 & 0 & 1.5 & 0 & 0 \\
        0 & 0 & 0 & 1.0 & 0 \\
        0 & 0 & 0 & 0 & 0.5 
    \end{bmatrix}, \ \ \Sigma_4 = \begin{bmatrix}
        0.1 & 0 & 0 & 0 & 0 \\
        0 & 0.1 & 0 & 0 & 0 \\
        0 & 0 & 3.0 & 0 & 0 \\
        0 & 0 & 0 & 0.1 & 0 \\
        0 & 0 & 0 & 0 & 0.1 
    \end{bmatrix} \\
    \Sigma_1' = I, \ \Sigma_2' = \frac{1}{2}I, \ \Sigma_3' = \frac{1}{4}I, \ \Sigma_4' = \frac{1}{8}I.
\end{gather*}

In a similar way to the log-concave setting we also explore the impact the dimension has in this setting. We generate a random source measure $\mu$ by using 10 points $\bm{x}_1,...,\bm{x}_{10} \sim N(0,(1/d)I)$ with $\alpha \sim \text{Dirichlet}(1,...,1)$. The covariance matrices are given by $\Sigma_i = (\bm{w}_i/d)I$ where $\bm{w}_i \sim \text{Unif}([0,1])$. For $\nu$ we use 15 points with the $\bm{y}_1,...,\bm{y}_{15} \sim N(2e_1,(1/d)I)$ where $e_1 = [1,0,...,0]$ and the weights and covariances generated in the same way as $\mu$. The results are displayed in Figure \ref{fig:gmm_change_d}. This figure was made using 100 replications and 500 samples for the Monte Carlo integration.

\begin{figure}
    \centering
    \includegraphics[width=0.5\textwidth]{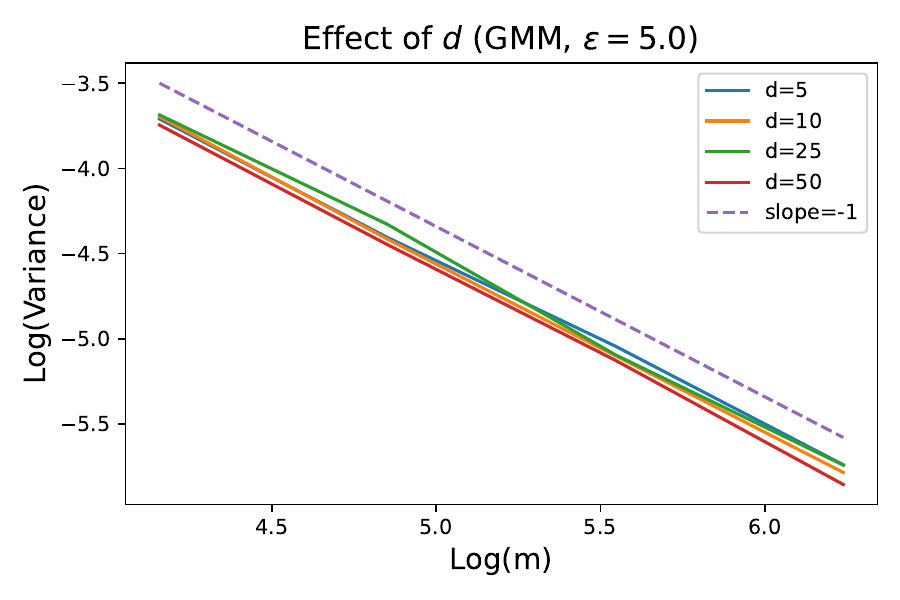}
    \caption{Impact of the dimension on the convergence of the variance term for Gaussian Mixture Models.}
    \label{fig:gmm_change_d}
\end{figure}

\end{document}